\documentclass[twoside,11pt]{article}

\usepackage{blindtext}

\usepackage{amsthm}
\usepackage[preprint]{jmlr2e}

\usepackage[utf8]{inputenc} 
\usepackage[T1]{fontenc}    
\usepackage{url}            
\usepackage{booktabs}       
\usepackage{amsfonts}       
\usepackage{nicefrac}       
\usepackage{microtype}      
\usepackage{xcolor}         
\usepackage{pifont}         
\newcommand{\cmark}{\ding{51}}%
\newcommand{\xmark}{\ding{55}}%

\usepackage{graphicx}
\usepackage{amsmath}
\usepackage{amssymb}
\usepackage{mathtools}

\usepackage{natbib}
\setcitestyle{numbers,square}
\usepackage[capitalize,noabbrev]{cleveref}

\newcommand{\ind}{\perp\!\!\!\!\perp} 
 
\newcommand{\simeqint}{\simeq_\mathcal{I}}
\newcommand{\notsimeqint}{\not\simeq_\mathcal{I}}
\newcommand{\simeqintstar}{\simeq_{\mathcal{I}^*}}
\newcommand{\notsimeqintstar}{\not\simeq_{\mathcal{I}^*}}

\newcommand{\Z}{\boldsymbol{Z}}
\newcommand{\X}{\boldsymbol{X}}
\newcommand{\U}{\mathbf{U}}
\newcommand{\V}{\mathbf{V}}
\newcommand{\W}{\mathbf{W}}
\newcommand{\Uj}{\mathbf{U_j}}
\newcommand{\Vi}{\mathbf{V_i}}

\newcommand{\bPhi}{\boldsymbol{\Phi}}
\newcommand{\bTheta}{\boldsymbol{\Theta}}
\newcommand{\bLambda}{\boldsymbol{\Lambda}}
\newcommand{\bB}{\boldsymbol{B}}

\newcommand{\bY}{\boldsymbol{Y}}

\newcommand{\calG}{\boldsymbol{\mathcal{G}}}

\newcommand{\calD}{\boldsymbol{\mathcal{D}}}
\newcommand{\calF}{\boldsymbol{\mathcal{F}}}
\newcommand{\calI}{\mathcal{I}}
\newcommand{\calL}{\boldsymbol{\mathcal{L}}}
\newcommand{\calM}{\boldsymbol{\mathcal{M}}}

\newcommand{\calIstar}{\mathcal{I}^*}
\newcommand{\truemarginal}{\calM_{\calIstar}(G)}
\newcommand{\truefdagmarginal}{\calM_{\calIstar}(D)}

\newcommand{\pk}{p^{(k)}}
\newcommand{\qG}{q(G;\bLambda)}
\newcommand{\qD}{q(D;\bLambda)}

\newcommand{\pG}{p(G)}
\newcommand{\qkG}{q^{(k)}(G;\bLambda)}

\newcommand{\pkXG}{p_\Phi^{(k)}(\X|G)}
\newcommand{\pkGX}{p_\Phi^{(k)}(G|\X)}
\newcommand{\fk}{f^{(k)}}

\newcommand{\Zpar}{\boldsymbol{Z_{\pi_i^D}}}
\newcommand{\Xpar}{\boldsymbol{X_{\pi_j^D}}}
\newcommand{\Gint}{G^\mathcal{I}}
\newcommand{\Dint}{D^\mathcal{I}}

\newcommand{\Doneintsq}{(D_1^\mathcal{I})^2[V]}
\newcommand{\Dtwointsq}{(D_2^\mathcal{I})^2[V]}

\newcommand{\scoreIstar}{S_\mathcal{I^*}(G)}
\newcommand{\scoreIGstar}{S_\mathcal{I^*}(G^*)}

\newcommand{\bayesianscorestar}{S_{\calIstar}(q)}

\usepackage{lastpage}

\ShortHeadings{Amortized Bayesian Causal Discovery of Extended Factor
Graphs}{Yichen Gu, Yuxuan Song, Weizhou Qian, Yixin Wang and Joshua Welch}
\firstpageno{1}

\begin{document}

\title{Amortized Bayesian Causal Discovery of Extended Factor Graphs}

\author{\name Yichen Gu \email        gyichen@umich.edu \\
       \addr Department of Electrical and Computer Engineering\\
       University of Michigan\\
       Ann Arbor, MI 48109, USA
       \AND
       \name Yuxuan Song \email yuxuans@umich.edu \\
       \addr Department of Computational Medicine and Bioinformatics\\
       University of Michigan, Ann Arbor\\
       Ann Arbor, MI 48109, USA
       \AND
       \name Weizhou Qian \email wzqian@umich.edu \\
       \addr Department of Computational Medicine and Bioinformatics\\
       University of Michigan, Ann Arbor\\
       Ann Arbor, MI 48109, USA
       \AND
       \name Yixin Wang \email yixinw@umich.edu \\
       \addr Department of Statistics\\
       University of Michigan, Ann Arbor\\
       Ann Arbor, MI 48109, USA
       \AND
       \name Joshua Welch \email welchjd@umich.edu \\
       \addr
       Department of Computer Science and Engineering\\
       Department of Computational Medicine and Bioinformatics\\
       University of Michigan, Ann Arbor\\
       Ann Arbor, MI 48109, USA}

\editor{My editor}

\maketitle

\begin{abstract}
Learning causal graphs from interventional data is a challenging problem with broad applications. In molecular biology, for example, a central goal is to uncover gene regulatory networks from large-scale perturbation data. An ideal algorithm for this task should scale to thousands of nodes, incorporate interventions even when their targets are unknown, quantify uncertainty, and provide identifiability guarantees.  However, existing approaches---e.g. approaches using score-based optimization or approximate Bayesian inference---often fail to meet all of these criteria. To address these limitations, we develop Amortized Bayesian Causal Discovery of Extended Factor Graphs (ABCDEFG). Our method guarantees exact acyclicity, scales to graphs with thousands of nodes, and naturally handles interventions even when their targets are unknown. Additionally, ABCDEFG estimates a posterior distribution whose maximum a posteriori estimate provably identifies the true causal graph up to an equivalence class. On simulated datasets, ABCDEFG achieves state-of-the-art accuracy, producing a well-calibrated posterior distribution while outperforming previous score-based and approximate Bayesian methods. Applied to large-scale single-cell perturbation data, ABCDEFG identifies both established and novel gene targets of growth factors~\cite{AMIN20241831}.
\end{abstract}

\begin{keywords}
  causal discovery, variational inference, graphical model
\end{keywords}

\section{Introduction}
Discovering causal relationships is a fundamental challenge across scientific domains. In many settings, both observational and interventional data are available to probe underlying causal mechanisms. Yet, inferring causal relationships remains difficult in large, complex systems.  For example, in computational biology, understanding how genes influence one another through gene regulatory networks is crucial for understanding cellular development and homeostasis.  Recent biotechnological advances now enable high-throughput perturbation experiments, providing measurements of gene expression across thousands to millions of cells under various interventions, providing exciting new data for inferring causal relationships in the cell. 

However, existing causal discovery methods fall short when applied to inferring a gene regulatory network from high-throughput perturbation data. Many approaches cannot scale to the large number of variables in the gene regulatory network (more than 20,000 genes) or the large number of samples ($10^4-10^6$ cells). Very noisy data, correlated causal edge probabilities, and interventions with unknown targets (such as drug treatments) pose additional challenges. While approximate Bayesian methods offer the advantage of uncertainty quantification (a crucial property for noisy biological data), they typically struggle to scale to problems of this size. Although prior work has addressed some of these issues in isolation, no existing method satisfies all the requirements simultaneously. There remains a need for new causal inference approaches that are scalable, uncertainty-aware, and capable of jointly learning causal gene relationships and intervention targets from large-scale single-cell drug or growth factor screens. 

To address these challenges, we develop \textbf{A}mortized \textbf{B}ayesian \textbf{C}ausal \textbf{D}iscovery of \textbf{E}xtended \textbf{F}actor \textbf{G}raphs (ABCDEFG). Our key idea is to represent causal structures using \emph{extended factor graphs}, where feature nodes and intervention nodes are connected through auxiliary factor nodes. This extended factor graph formulation enables accurate and scalable distributional estimation of causal DAGs, while incorporating interventions with unknown targets and guaranteeing acyclicity. Moreover, it supports joint modeling of edge probabilities as coupled random variables, capturing complex dependencies among edges. ABCDEFG also possesses strong theoretical guarantees: we prove that the argmax of the estimated posterior recovers the true causal graph up to an equivalence class. 

\textbf{Contributions.} Our core contributions include: (1) we introduce a new parametric model for sampling extended factor graphs that are acyclic by construction and have explicit intervention nodes; (2) we develop a variational Bayesian approach for discovering causal extended factor graphs from interventional data with known or unknown targets; (3) we integrate sum-product networks into the generative model to flexibly model complex joint distributions over causal edges; (4) we develop new theoretical results connecting our Bayesian framework to the identifiability guarantees of score-based methods; and (5) we demonstrate the effectiveness of ABCDEFG on a large-scale single-cell perturbation dataset, recovering both known and novel gene-to-gene and growth factor-to-gene interactions.

\begin{table}[t]
  \small
  \caption{Summary of the proposed and existing approaches. $n$ and $m$ denotes the number of vertices in a graph/factor graph and $m$ denotes the number of factors in a factor graph. Max nodes and samples indicate the size of the largest dataset evaluated in the original publication.}
  \label{tab:methods}
  \vskip 0.15in
  \centering
  \begin{tabular}{lccccccc}
    \toprule
    Method & DAG & Graph & Guaranteed & Intvn & Unknown & Max & Max\\
    & Uncertainty & Model Size & Acyclic & Data & Target & Nodes & Samples \\
    \midrule

    NO-TEARS& \xmark & $O(n^2)$ & \xmark & \xmark & \xmark & $100$ & $7,466$\\
    DCDI & \xmark & $O(n^2)$ & \xmark & \cmark & \cmark & $100$ & $10^{6}$\\
    DAGMA & \xmark & $O(n^2)$ & \xmark & \xmark & \xmark & $2,000$ & $1,000$ \\
    DCD-FG & \xmark & $O(mn)$ & \xmark & \cmark & \xmark & $1,000$ & $87,590$ \\
    ENCO & \xmark & $O(n^2)$ & \xmark & \cmark & \xmark & $1,000$ & $110,000$\\
    SDCD & \xmark & $O(n^2)$ & \xmark & \cmark & \xmark &$4,000$ &$10,500$\\
    DeepITE & \xmark & $O(n^2)$ & \xmark & \cmark & \cmark &$500$ &$10,000$\\
    LIT & \xmark & $O(n^2)$ & \xmark & \cmark & \cmark &$16$ &$32$\\
    iSCAN & \xmark & $O(n^2)$ & \xmark & \cmark & \cmark &$50$ & $1,000$\\
    BaCaDI & \cmark & $O(n^2)$ & \xmark & \cmark & \cmark & $20$ & $300$\\
    ProDAG & \cmark & $O(n^2)$ & \cmark & \xmark & \xmark & $100$ & $7,466$\\
    DECI & \cmark & $O(n^2)$ & \xmark & \xmark & \xmark & $64$ & $5,000$\\
    DP-DAG & \cmark & $O(n^2)$ & \cmark & \xmark & \xmark & $100$ & $1,000$\\
    VDESP & \cmark & $O(n^2)$ & \cmark & \xmark & \xmark & $20$ & $4,200$ \\
    ABCDEFG (ours)  & \cmark & $O(mn)$ & \cmark & \cmark & \cmark & $1,000$ & $31,425$\\
    \bottomrule
  \end{tabular}
\end{table}

\textbf{Related Work.}\label{related}
Classical causal discovery methods are typically divided into constraint-based and score-based methods. Constraint-based methods date back to the 90s when \citet{spirtes91pc} proposed the PC algorithm. In contrast, score-based differentiable causal discovery methods have gained popularity in recent years due to their better performance and computational efficiency. \citet{NEURIPS2018_e347c514} pioneered the formulation of causal DAG discovery as a continuous optimization problem under a linear causal model, using an augmented Lagrangian approach with a matrix exponential constraint to enforce acyclicity. \citet{lee2019scaling} built on this by designing a polynomial regression loss tailored to gene expression data and reducing computational cost. Subsequent works improved performance and expanded the modeling framework. \citet{NEURIPS2022_36e2967f} proposed an alternative log-det function for the acyclicity constraint, resulting in better performance, better-behaved gradient and faster convergence. \citet{lippe2022efficient} designed an optimization strategy alternating between distribution and graph fitting and proved convergence to the true graph under specific conditions.

A parallel line of work developed Bayesian methods for causal discovery. \citet{cundy2021bcd} applied variational inference~(VI) to linear Gaussian SEMs. \citet{annadani2023bayesdag} adopted the NoCurl DAG model \cite{pmlr-v139-yu21a} and derived a VI method for the parameters. \citet{charpentier2022differentiable} proposed a fully probabilistic and differentiable DAG model and performs VI by maximizing the ELBO. \citet{geffner2024deep} developed a Bayesian method based on a previous probabilistic DAG model~\cite{lippe2022efficient} and applied a flow-based generative model for distributional fitting. \citet{thompson2024prodag} proposed a Bayesian method for DAGs by first pruning a weighted matrix to be acyclic and projecting it onto an L1 ball. \citet{bonilla2024variational} designed a differentiable DAG distribution using a continuous relaxation of permutation~\cite{pmlr-v119-prillo20a}. These Bayesian methods tend to be significantly less scalable than the score-based methods, as reflected in the relatively small datasets used for evaluation.

The methods discussed above focus exclusively on observational data and are not designed to incorporate interventional data, which is critical for accurate causal discovery in applications such as computational biology. To address this, a separate line of work has explored causal discovery with interventions. \citet{brouillard2020differentiable} proposed a differentiable method that incorporates observational and interventional data; guarantees identifiability with known or unknown intervention targets; and models nonlinear effects using deep neural networks. \citet{Lopez2022largescale} used factor graphs to learn a low-rank approximation of DAGs, a key foundation for our approach. \citet{pmlr-v235-nazaret24a} proposed a robust acyclicity penalty loss. \citet{hagele2023bacadi} set up a Bayesian framework for causal discovery with interventional data. Our work is also distinct from intervention target estimation methods, which can infer the nodes targeted by interventions but cannot simultaneously estimate the causal graph (e.g., iSCAN \cite{chen2023iscan}, LIT \cite{pmlr-v238-yang24d}, and DeepITE \cite{tao2024deepite}). We summarize these and related methods, along with our own, in \Cref{tab:methods}.

\section{Methods}\label{methods}
\subsection{Definitions}
Our definitions and notation closely parallel previous differentiable causal discovery methods \cite{brouillard2020differentiable}, but we summarize the key points here to make the presentation of our approach more self-contained. Let $X=\{X_1,\ldots,X_n\}$ be a set of random variables. A causal graphical model (CGM) for these variables consists of a joint distribution and a graph $\{G=(V,E),p(X)\}$. $G\in\mathcal{G}$ (where $\mathcal{G}$ is the set of DAGs) and $G$ and $p$ are related as follows:
    $$ p(X) = \prod_{i\in V}p(X_i|X_{\pi_i})$$
    Here, $\pi_i$ is the set of parents of vertex $i$ in $G$.
Intuitively, an intervention on a variable modifies its conditional dependence on its parent. Interventions can be performed on multiple variables simultaneously; the \emph{interventional target} for each intervention is thus a set of vertices $I\subset V$.

Given a CGM with $\{G,p(X)\}$, intervening on targets $I$ modifies $p$ into $p^I$:
    $$ p^I(X) = \prod_{i\in I}p^I(X_i|X_{\pi_i})\prod_{i\not\in I}p(X_i|X_{\pi_i})$$
Note that the causal sufficiency assumption is implicit in this definition of intervention. The $I$-faithfulness assumption ensures that $p^I(X_i|X_{\pi_i})\neq p(X_i|X_{\pi_i})$. A \emph{hard intervention} removes all dependence on parents, so $p^{I_k}(X_i|X_{\pi_i})=p^{I_k}(X_i)$. 

To accommodate multiple interventions, we define an \emph{intervention set} as $\mathcal{I} := (I_1,\ldots, I_{n^\calI})$, where $n^\calI$ is the number of interventions. Note that the intervention set may include multiple interventions with the same targets, $I_j=I_k$. For convenience, we include the observational distribution in the intervention set and define it as $I_1:=\emptyset$. We also abbreviate $p^{I_k}(X)$ as $p^{(k)}(X)$. The set of joint distributions induced by a causal graph and intervention set is $\mathcal{M}_{\mathcal{I}^*}(G)$, which we can factorize according to the Markov property: $ \mathcal{M}_{\mathcal{I}^*}(G):=\{p^{I_k}(X)=\prod_{i=1}^{n}p^{I_k}(X_i|X_{\boldsymbol{\pi_i}})\}$

Our goal is to estimate $q(G;\Lambda)$, a probability mass function (PMF) over $\mathcal{G}$ parameterized by a set of real numbers $\Lambda$. In estimating $q(G;\Lambda)$, we will make use of $f(X;\Phi)$ and $f^I(X;\Phi)$, density models of $p(X)$ and $p^{(k)}(X)$, respectively, parameterized by a set of real numbers $\Phi$. 

\subsection{Factor Directed Acyclic Graphs (f-DAGs)} \label{subsec:fg}
Our goal is to build a generative model for DAGs and ultimately a Bayesian framework for inferring causal DAGs. To do this, we start with a type of graph called a factor DAG (f-DAG), following \citet{Lopez2022largescale}.
An f-DAG is formally defined as follows:
\begin{definition}[\citet{Lopez2022largescale}]
    Given a set of nodes, $V$, and factors, $F$, a factor directed acyclic graph (f-DAG), denoted as $(V,F,E)$, is a directed acyclic graph $(V\cup F,E)$ where edges $E\subset\{(i,j):i\in V, j\in F \mbox{ or } i\in F, j\in V\}$.
\end{definition}
Given an f-DAG, we can preserve the connection between any two nodes~(factors) by removing all intermediate factors~(nodes) along paths. This results in a node-only~(factor-only) graph:
\begin{definition}[\citet{Lopez2022largescale}]
    Given an f-DAG, $D=(V,F,E)$, its half-square node graph is defined as $D^2[V]=(V,\{(i,j):\exists f\in F, (i,f),(f,j)\in E\})$, and half-square factor graph is defined as $D^2[F]=(F,\{(f,g):\exists i\in V, (f,i), (i,g) \in D \})$.
\end{definition}

Let $\mathbf{A}$ be the adjacency matrix of a causal DAG. An f-DAG can be viewed as a Boolean factorization of $\mathbf{A}$, $\mathbf{A}=\mathbf{U}\mathbf{V}$. Here $\mathbf{U}\in\{0,1\}^{n\times m}$ and $\mathbf{V}\in\{0,1\}^{m\times n}$ are binary node-to-factor and factor-to-node connection matrices. Intuitively, if $m < n$, the node-only half-square graph of an f-DAG can be interpreted as a low-rank approximation of the full-rank DAG, and the factors represent groups of related nodes (modules, topics, etc.). \citet{Lopez2022largescale} proved that, with probability exponentially approaching one, adding incorrect edges to a random graph increases its Boolean rank. Viewing an f-DAG as a Boolean matrix factorization of the binary adjacency matrix (Fig. \ref{fig:spn_fg}), this result implies that the low-rank property of the f-DAG acts as a regularization for graph structure and increases robustness to noisy edges. This low-rank assumption is common in computational biology~\cite{ye2013low,8002659}. 

We further extend the f-DAG framework for identifying unknown intervention targets. We model the effect of each intervention on target nodes via factors. This is a natural abstraction for interventions whose exact targets are unknown, such as drugs that affect a biological pathway. Suppose $\calI=\{I_1, \ldots, I_{n^{\calI}}\}$ is a set of unknown intervention targets, and $\W$ is a $n^{\calI}$-by-$m$ binary matrix, where $W_{kj}$ represents whether the $k$-th intervention targets the $j$-th factor. We next define extended f-DAGs, a.k.a. extended factor graphs.
\begin{definition}[Extended f-DAG]
    Let $D=(V,F,E)$ be an f-DAG and $\mathcal{I}=\{I_1,\ldots,I_{n^{\calI}}\}$ be a set of interventions. Let $\Xi=\{\xi_k, k\in[n^{\calI}]\}$ be $n^{\calI}$ nodes corresponding to the $n^{\calI}$ interventions. An extended f-DAG is defined as an f-DAG $\Dint=(V\cup \Xi, F, E\cup E^{\mathcal{I}})$ where $E^{\mathcal{I}}\subseteq\{(\xi_k, l): l\in F\}$, i.e. set of edges from intervention nodes to factors.
\end{definition}

\begin{definition}[Extended Half-Square Graph]\label{def:ext_half_sq}
    Let $\Dint$ be an extended f-DAG obtained from an f-DAG $D=(V,F,E)$ and a set of interventions $\mathcal{I}=\{I_1,\ldots,I_{n^{\calI}}\}$ and intervention nodes $\Xi=\{\xi_k, k\in[n^{\calI}]\}$. An extended half-square node graph is defined as $(\Dint)^2[V]=(V\cup \Xi, \{(i,j): \exists f\in F, (i,f), (f,j)\in E\cup E^{\calI}\})$.
\end{definition}
We could also define an extended half-square factor graph, but that would be the same as a regular half-square factor graph because there is no edge from any factor to any intervention. Thus, we omit it in Def. \ref{def:ext_half_sq}.
\subsection{Probabilistic Modeling of f-DAGs}\label{subsec:spn-fg}
\textbf{Generative Model for f-DAGs.}
A key innovation of our approach is a generative process for efficiently sampling large-scale f-DAGs that guarantees acyclicity by construction. This eliminates the need for computationally expensive acyclicity penalties used in differentiable causal discovery methods, ensures that all sampled graphs are acyclic, and forms the foundation for probabilistic causal f-DAG inference. 

Given a set of $n$ nodes, $\{v_i:i\in[n]\}$, and $m$ factors, $\{f_j:j\in[m]\}$, we construct an f-DAG by forming a partial order of nodes and factors together and determining the node-to-factor or factor-to-node edge connection~(Fig. \ref{fig:spn_fg}). Since node-to-node edges are disallowed in f-DAGs (nodes are only connected via factors), we do not need to explicitly model the relative order between nodes. Instead, we form a total order of factors, $\tau: [m] \rightarrow [m]$, such that $f_{\tau(1)}<\ldots<f_{\tau(m)}$. They partition all nodes into $m+1$ subsets and each node $v_i$ is randomly inserted into one partition, i.e. $\exists k\in[m], f_{\tau(k-1)} < v_i <f_{\tau(k)}$ or $v_i<f_{\tau(1)}$ or $v_i>f_{\tau(m)}$. We model this assignment using $n$ categorical distributions with $m+1$ categories, denoted as $\bY=\{Y_i:i\in[n]\}$. The second step determines edge \emph{existence}, regardless of direction. These edge connection probabilities are related to a joint distribution of all edge connections. We use a binary matrix $\boldsymbol{B}\in \{0,1\}^{n\times m}$ to represent edge connections. Thus, $\bY$ contains all the direction information and $\boldsymbol{B}$ contains all the connection information. Hence, $\bY$ and $\bB$ uniquely determine an f-DAG, and we can generate an f-DAG by sampling $\bY$ and $\bB$~(Fig. \ref{fig:spn_fg}).

\begin{figure*}[t!]
    \centering
    \includegraphics[width=1\linewidth]{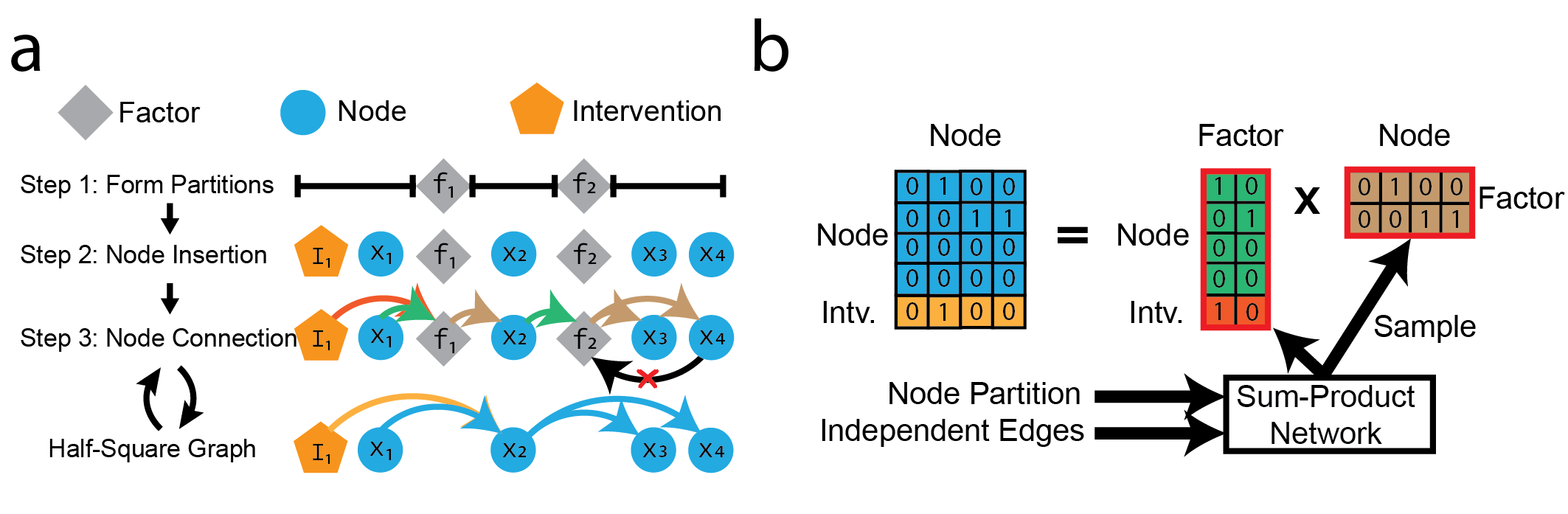}
    \caption{\textbf{Causal inference using extended factor graphs.} \textbf{(a)} Generative process for sampling extended factor graphs that are guaranteed to be acyclic. Factors are ordered to form partitions, then nodes and interventions are inserted into partitions. Finally, edges are added from earlier nodes, factors or interventions to later. Removing factors gives a ``half-square'' graph with direct node-to-node and intervention-to-node connnections. \textbf{(b)} An extended factor graph  factorizes a node/intervention-to-node adjacency matrix as a Boolean product of a node/intervention-to-factor and factor-to-node matrix. ABCDEFG samples edges in these matrices using either independent Bernoulli random variables or a joint PMF parametrized by a sum-product network.}
    \label{fig:spn_fg}
    \vskip -0.1in
\end{figure*}

\textbf{Sampling Independently or Jointly Distributed Causal Edges.}
Using the above generative process, we can infer a causal DAG by optimizing a score function with respect to $\bY$ and $\bB$. But what is the best way to sample $\bY$ and $\bB$? One possibility is to model the edges as independent Bernoulli random variables sampled using the Gumbel softmax trick~\cite{jang2017categorical}. However, such a naive approach neglects possible correlation between edges. A more general approach is to model the joint distribution of edges using a sum-product network (SPN) \cite{poon2011sum,shih2020probabilistic}. SPNs combine sum and product operations over latent variables, enabling flexible sampling from a categorical joint distribution (see Appendix~\ref{supp:complexity} for further details). We implemented and evaluated both strategies on real and simulated data.

\subsection{{Bayesian Causal Discovery of DAGs}}\label{subsec:bayes}

\textbf{A Differentiable Bayesian Framework for Causal Discovery.}
Let $\calG$ be the set of all DAGs. Consider a generative process where a DAG is first sampled from a prior, $p(G)$ with support on $\calG$, and a generative model $p(\X|G,I)$ under the intervention $I$. Given empirical observations, we can obtain a MAP estimate of the causal graph as $G^* = \arg\max_{G\in\calG}p(G|\X,I)$.

Because $|\calG|$ is super-exponential in $n$~\cite{10.1007/BFb0069178}, searching through the discrete space is computationally inefficient for large $n$. Instead, we resort to continuous optimization. As the true posterior is often intractable, we apply variational Bayes using a variational distribution $\qG$. In this way, we are able to find $G^*$ by optimizing a KL divergence: $G^* = \arg\min_{G\in\calG}KL(\qG||p(G|\X,I))$. In real experimental scenarios, the random intervention is replaced with Monte Carlo sampling, $I_1,\ldots,I_{n^\calI}$. From our derivation (Appendix \ref{appx_sub:id_bayesian}), minimizing the KL divergence is equivalent to maximizing the evidence lower bound (ELBO):
\begin{align}
    q^*(G) = & \arg\max_{\qG}\sum_{k=1}^{n^{\calI}}\mathbb{E}_{\pk(\X|G^*)}\left[\mathbb{E}_{\qG}\left[\log\pkXG \right]\right] 
    - KL\left(\qG || \pG\right). \label{eq:max_elbo}
\end{align}
This ELBO objective is directly connected to autoencoding variational Bayes~\cite{KingmaW13}. A slight difference compared to the traditional autoencoding variational Bayes setting is that we treat the causal graph as a constant during the likelihood calculation, so the expectation is over $\pk(\X|G^*)$ instead of $\pk(\X)$. (We provide a detailed derivation of the ELBO in the Appendix.) The posterior can be estimated by optimizing the ELBO to yield $q^*(G)=\pkGX$, assuming enough capacity of the variational family. 

As mentioned in \cref{subsec:fg}, we can narrow down the search space by considering extended f-DAGs as a reasonable low-rank approximation of the true causal DAG. In this work, we use either independent Bernoullis or SPNs as a parametric model for f-DAGs, but the Bayesian framework is general to parametric DAG models.

\subsection{Amortized Bayesian Causal Discovery of Extended Factor Graphs}
\label{abcdefg}
With the problem setup in \cref{subsec:bayes}, we now formally introduce our method, Amortized Bayesian Causal Discovery of Extended Factor Graphs~(ABCDEFG). (Note that ``amortized'' here refers to using a common inference function in contrast to traditional mean-field variational inference. Variational autoencoders (VAEs) are a type of amortized variational inference \cite{Agrawal:2020}.) Given a set of random variables $\X=\{X_i:i\in[n]\}$ generated via a causal graph $G^*$, we apply a Bayesian method by estimating $p(G|\X,I^*)$ via optimization as described in section \ref{subsec:bayes}:
\begin{align*}
    q^*(G) = & \arg\max_{\qG}\sum_{k=1}^{n^{\calI}}\mathbb{E}_{\pk(\X|G^*)}\left[\mathbb{E}_{\qG}\left[\log\pkXG \right]\right] 
    - KL\left(\qG || \pG\right).
\end{align*}
The key to convert discrete search into continuous optimization is thus to create a differentiable parametric model for DAGs and estimate the ELBO using Monte Carlo sampling. We assume the true causal graph is or can be approximated by an f-DAG. Thus, we use either independent Bernoullis sampled by Gumbel softmax or joint PMF sampled from an SPN to parameterize $\qG$. 

The model architecture~(bottom panel of Fig. \ref{fig:abcdefg}) consists of an f-DAG parametric model (Gumbel softmax or SPN) and a VAE for data distribution fitting. The output is a node-to-factor matrix $\U\in\mathbb{R}^{n\times m}$ and a factor-to-node matrix $\V\in\mathbb{R}^{m\times n}$. Next, we model the data distribution under the f-DAG as $p(\X)=\int_{\Z}\prod_{j=1}^{n}g(X_j|\Z_{\boldsymbol{\pi_j}})\prod_{i=1}^{m}f(Z_i|\X_{\boldsymbol{\pi_i}})d\Z$. Here, $\boldsymbol{\pi_i}$ and $\boldsymbol{\pi_j}$ are the parent nodes and factors in the f-DAG. Instead of using separate encoding and decoding functions to obtain the posterior of each $Z_j$ and conditional likelihood of each $X_i$, we follow \citet{Lopez2022largescale} and amortize all conditional distributions into a single encoding and decoding feed-forward neural network. Causal relations are injected into the VAE via masking operations $\Uj\odot \X$ and $\Vi\odot \Z$, where $\Uj$ is the $j$-th column of $\U$, $\Vi$ is the $i$-th column of $\V$ and $\odot$ denotes the Hadamard product.

When the intervention targets are unknown, the causal discovery problem can be treated as recovering an extended f-DAG with intervention nodes. Equivalently, our Gumbel softmax or SPN sampling procedure can be extended to generate an intervention-to-factor matrix $\W\in \{0,1\}^{k\times m}$. The causal mask operation becomes $ [\U_{\boldsymbol{j}}\odot \X; \W_{\boldsymbol{j}}\odot \boldsymbol{I}]$ where $\boldsymbol{I}$ is a one-hot encoding of the intervention. We can apply the same optimization approach to jointly infer the causal graph and intervention targets. Extended f-DAGs could also include intervention information such as the dosage of a chemical treatment, though we did not explore this in detail here. 

\subsection{Identifiability} 
We next provide identifiability guarantees for our approach. Our main theorem proves that the DAG with highest posterior probability (MAP estimate) belongs to the same equivalence class as the true causal DAG. We use the notion of $\mathcal{I}$-Markov equivalence from \cite{brouillard2020differentiable}: 
two DAGs $G_1$ and $G_2$ are $\mathcal{I}$-Markov equivalent if and only if $\mathcal{M}_{\mathcal{I}}(G_1)=\mathcal{M}_{\mathcal{I}}(G_2)$. Our theorem relies on the same four assumptions as previous identifiability results for differentiable causal inference methods \cite{brouillard2020differentiable}: sufficient model capacity, $\mathcal{I}$-faithfulness, positivity, and finite differential entropy. This result applies to any DAG, including half-square graphs obtained from f-DAGs.

\begin{theorem}[Identifiability via ELBO maximization]\label{thm:elbo_identify}
    Let $\X$ be a set of causally related random variables with a causal DAG $G^*$ and $\calIstar$ be a set of interventions with $I^*_1=\emptyset$. Let $\calG$ be a subset of all causal DAGs and $q^*(G)$ be an optimal graph distribution from the optimization problem:
    $$ \sup_{\qG:supp(q)\subseteq\calG}\calL(\qG), $$
    where
    \begin{align*}
        & \calL(\qG) = \mathbb{E}_{\qG}\left[\scoreIstar\right] - \beta KL(\qG||p(G)),\; \beta > 0,\\
        & \scoreIstar = \sup_{\bPhi}\sum_{k=1}^{n^{\calIstar}}\mathbb{E}_{\pk(\X)}\left[\log \fk(\X|G;\bPhi)\right]-\lambda |G|. 
    \end{align*}
    In addition, assume the following:
    \begin{enumerate}
        \item Sufficient capacity: The set of distributions from our parametric models contains the ground truth interventional distributions: $\{\pk(\X):k\in[n^{\calIstar}]\}\in\calF_{\calIstar}(G^*)$ where $\calF_{\calIstar}(G^*)=\{\{\fk(\X|G^*;\bPhi)\}: \bPhi\in\Omega(\bPhi)\}$. 
        \item $\mathcal{I}$-faithfulness as defined in \cite{brouillard2020differentiable} (See appendix \ref{appx:identifiability}, Thm. \ref{thm:supp:dcdi} for details). \\
        \item Positivity: $\forall G, I, \bPhi, \fk(\X|G,I;\bPhi)>0$. \\
        \item Finite differential entropy: $\forall k\in[n^{\calIstar}]$, $\left|\mathbb{E}_{\pk(\X)}\left[\log \pk(\X)\right]\right| < +\infty$.
    \end{enumerate}
    If $G^*\in\calG$, then, under the assumptions 1-4~\cite{brouillard2020differentiable} and with a proper $\beta > 0$, $\hat{G}=\arg\max_{G}q^*(G)$ is $\calIstar$-Markov equivalent to $G^*$.
\end{theorem}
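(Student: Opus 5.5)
The proof reduces the variational problem to a Gibbs-type closed form and then invokes the score-based identifiability result of \cite{brouillard2020differentiable} (Thm.~\ref{thm:supp:dcdi} in the appendix). That theorem states that, under assumptions 1--4 and for a suitably small $\lambda>0$, every maximizer $G$ of $\scoreIstar$ over DAGs is $\calIstar$-Markov equivalent to $G^*$. Since $G^*\in\calG$, the maximizers of $\scoreIstar$ over the restricted set $\calG$ are therefore exactly the elements of $\calG$ that are $\calIstar$-equivalent to $G^*$. It thus suffices to show that the mode of $q^*$ lies among the maximizers of the score, possibly after a mild condition on $\beta$ and the prior.

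\textbf{Step 1 (closed form of $q^*$).} Over all PMFs $q$ supported on $\calG$ (a finite set),
\[
\calL(q)=\sum_{G\in\calG} q(G)\,\scoreIstar-\beta\sum_{G\in\calG} q(G)\log\frac{q(G)}{p(G)} .
\]
This is strictly concave in $q$. By a Lagrange multiplier argument, or equivalently by the Donsker--Varadhan identity
\[
\calL(q)=\beta\log Z-\beta\, KL\!\left(q\,\|\,q^*\right),
\]
the unique maximizer is
\[
q^*(G)=\frac{p(G)\exp\!\left(\scoreIstar/\beta\right)}{Z},\qquad Z=\sum_{G'\in\calG}p(G')\exp\!\left(S_{\calIstar}(G')/\beta\right).
\]
Here I must check that each $\scoreIstar$ is finite, so that $q^*$ is well defined. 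It is bounded above by $-\sum_k H(\pk)$ via Gibbs' inequality and assumption 4, and it is finite for $G^*$ by assumption 1. If $\scoreIstar=-\infty$ for some $G$, that graph simply receives zero mass, which is harmless.

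\textbf{Step 2 (mode of $q^*$).} The mode is
\[
\hat G=\arg\max_{G\in\calG}\left[\scoreIstar+\beta\log p(G)\right].
\]
Let $\delta>0$ be the gap between the optimal score $\scoreIGstar$ and the best score attained by any graph in $\calG$ that is not $\calIstar$-equivalent to $G^*$. This gap is strictly positive because $\calG$ is finite and because Thm.~\ref{thm:supp:dcdi} gives strict optimality of the equivalence class. Let
\[
c=\max_{G,G'\in\calG\,:\,p(G),p(G')>0}\log\frac{p(G)}{p(G')}<\infty
\]
be the log-ratio range of the prior over $\calG$. Choosing $\beta<\delta/c$ (any $\beta>0$ works if the prior is uniform on $\calG$) makes the prior term unable to overturn the score gap. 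Every non-equivalent $G$ then has strictly smaller $\scoreIstar+\beta\log p(G)$ than $G^*$, so $\hat G$ is $\calIstar$-Markov equivalent to $G^*$. This is exactly where the phrase ``with a proper $\beta>0$'' enters.

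\textbf{Main obstacle.} The delicate point is establishing the strict gap $\delta>0$ from Thm.~\ref{thm:supp:dcdi}. That theorem is stated for the penalized score over all DAGs with $\lambda$ small enough, and I need its conclusion to be ``all non-equivalent graphs score strictly less''. I would re-trace its proof, which compares KL divergences under $\calIstar$-faithfulness: non-equivalent graphs either cannot represent some $\pk$, incurring a strictly positive KL loss, or need extra edges, incurring a $\lambda$ penalty. Finiteness of $\calG$ then turns strictness into a uniform gap. A secondary subtlety is that graphs with $p(G)=0$ get $q^*(G)=0$, so one must assume $p(G^*)>0$; I would state this explicitly as part of ``proper prior''.
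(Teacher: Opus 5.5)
Your argument is correct, and it takes a different route from the paper.

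\textbf{The paper's route.} The paper argues by contradiction with a finite perturbation. If the mode $\hat G$ of $q^*$ were not $\calIstar$-equivalent to $G^*$, it moves mass $\epsilon$ from $\hat G$ to $G^*$ to obtain $q'$. It then writes $\calL(q')-\calL(q^*)=\epsilon\,(S_{\calIstar}(G^*)-S_{\calIstar}(\hat G))+\beta\,[KL(q^*\|p)-KL(q'\|p)]$ and chooses $\beta<\epsilon\Delta/(KL(q'\|p)-KL(q^*\|p))$. This is short and never solves the variational problem. However, the admissible $\beta$ is expressed through $q^*$ and $q'$, which themselves depend on $\beta$, so as written the choice is circular.

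\textbf{What your route buys.} Your Gibbs/Donsker--Varadhan step removes this circularity. Identifying $q^*\propto p\,e^{S_{\calIstar}/\beta}$ turns the mode into $\arg\max_G[S_{\calIstar}(G)+\beta\log p(G)]$. It gives a threshold $\beta<\delta/c$ that depends only on the score gap and the prior, and any $\beta>0$ works for a prior uniform on $\calG$. It also makes explicit the requirement $p(G^*)>0$, which is only implicit in the paper's full-support prior. The paper's perturbation can be repaired to reach the same threshold: let $\epsilon\to 0$ and use $q^*(\hat G)\ge q^*(G^*)$ in the directional derivative, which gives a derivative of at least $\Delta-\beta c$. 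That first-order condition is exactly what your closed form solves.

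\textbf{Two small remarks.} First, the step you flag as the main obstacle is not one. Theorem~\ref{thm:supp:dcdi}, as the paper uses it, already gives $S_{\calIstar}(G^*)>S_{\calIstar}(G)$ for every $G\notsimeqintstar G^*$. Since there are finitely many DAGs on $n$ nodes, $\delta>0$ is immediate and nothing needs re-tracing. Second, your claim that the score maximizers in $\calG$ are \emph{exactly} the graphs equivalent to $G^*$ is stronger than the assumptions support, because sufficient capacity is assumed only for the model family of $G^*$. You never use this claim, though: only the strict inequality for non-equivalent graphs enters your Step 2.
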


The key idea of the proof is that any posterior distribution whose MAP is not $\calIstar$-Markov equivalent to the true causal DAG must have a lower ELBO. Here, we present a sketch proof. See Appendix \ref{appx_sub:id_bayesian} for details. 

\emph{Proof.}
    The proof is by contradiction. Suppose $\exists \hat{G}=\arg\max_{G}q^*(G)$ that is not $\calIstar$-Markov equivalent to $G^*$. We can create another distribution $q'$ such that $q^*(\hat{G}) - q'(\hat{G}) = q'(G^*) - q^*(G^*) = \epsilon > 0$ and for any other graph $G$, $q'(G)=q^*(G)$. From algebraic calculation, we have
    \begin{align*}
        & \calL(q')-\calL(q^*) = \epsilon \left(\scoreIGstar - S_{\calIstar}(\hat{G})\right) + \beta \Delta.
    \end{align*}
    Because $\scoreIGstar - S_{\mathcal{I}}(\hat{G}) > 0$, $\exists \beta > 0$ such that $\calL(q')-\calL(q^*) > 0$. Then, we have a contradiction about $q^*$ being an optimal solution to the optimization problem.
\hfill $\square$\\

Furthermore, our method can be extended to the unknown-target setting by replacing the causal DAG with an interventional DAG ($\calI$-DAG)\cite{pmlr-v80-yang18a}, following a derivation analogous to the known-target case above.

So far, we have shown that our Bayesian framework is able to recover the true causal graph up to an Interventional Markov equivalent class. This result is general to any causal DAG, including f-DAGs. We motivated using f-DAG not only because of its robustness to noise\cite{Lopez2022largescale}, but also because of the underlying physical meaning - the factors inform us about the organization of causal relations. Thus, we are motivated to answer the following question: does identifying the causal DAG guarantee identifying the underlying f-DAG? It turns out a subset of f-DAGs can be identified given a fixed number of factors. We call this subset "identifiable f-DAGs" and denote it as $\calD_m$ where $m$ is the number of factors. Below we give a formal definition of $\calD_m$ based on relevant concepts.

\begin{definition}
    Let $D=(V, F, E)$ be any f-DAG, $\forall f\in V\cup F$. Denote $par(\cdot;D)$ and $chd(\cdot;D)$ as the set of parents and children of a vertex in $D$. The set of unique parents and children of $f$ are defined as $P_{f}(G):=\{i: i\in par(f;D), chd(i;D)=\{f\}\}$  and $C_{f}(D):=\{j: j\in chd(f;D), par(j;D)=\{f\}\}$.
\end{definition}

\begin{definition}\label{def:dm}
    Let $\mathcal{I}$ be a set of interventions. $\calD_m \subseteq \{(V,F,E):|F|=m\}$ is defined as the set of f-DAGs with $m$ factors and the following properties:
    \begin{enumerate}
        \item $\forall f\in F$, $P_f(\Dint)\neq\emptyset$ and $C_f(\Dint)\neq\emptyset$.
        \item $\forall f_1, f_2 \in F$ such that $f_1\neq f_2, f_1\rightarrow f_2 $ in $(\Dint)^2[F]$, $|P_{f_1}(\Dint)|>1$ (inclusively) or $|P_{f_2}(\Dint)|>1$.
        \item $\forall f\in F$, $|par(f;\Dint)| > 1$, there is at most one factor $g\in par(f;(\Dint)^2[F])$ such that $|par(g;\Dint)| = 1$.
    \end{enumerate}
\end{definition}
Intuitively, the three additional conditions for f-DAGs mean
\begin{enumerate}
    \item Any factor should have a unique parent and unique child that distinguish it from other factors.
    \item There cannot be adjacent ``chain" or ``tree" structures in the f-DAG.
    \item There should be enough v-structures in the f-DAG.
\end{enumerate}
Based on the definition, we further define the resulting half-square graphs.
\begin{definition}\label{def:gm}
    Given a set of f-DAGs $\calD_m$ defined as in Def. \ref{def:dm}, $\calG_m$ is defined as the set of all DAGs having a rank-$m$ f-DAG factorization:
    $$ \calG_m:=\{G:\exists D \in\calD_m, G=D^2[V]\}.$$
\end{definition}

Def. \ref{def:dm} and \ref{def:gm} describe exactly the subset of f-DAGs that are identifiable given our proposed Bayesian framework. Because we already proved identifiability of any causal DAG in Thm. \cref{thm:elbo_identify}, the identifiability of f-DAG is straightforward given the following lemma:
\begin{lemma}\label{lemma:1}
    Let $D_1=(V,F,E_1)$ and $D_2=(V,F,E_2)$ be two f-DAGs on the same set of nodes, $V$, and factors, $F$, and $\mathcal{I}$ be a set of interventions. Denote $\simeq_\mathcal{I}$ as the $\mathcal{I}$-Markov equivalence relation. Let $\Xi=\{\xi_k:k\in[n^{\calI}]\}$ be intervention nodes. In addition, suppose $D_1, D_2\in \calD_m$ defined as in Def. \ref{def:dm}. Then, under a permutation of factors, we have $D_1\simeq_\mathcal{I}D_2\iff D_1^2[V]\simeq_\mathcal{I}D_2^2[V]$.
\end{lemma}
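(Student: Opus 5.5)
I will prove the two directions separately. I read ``$D_1\simeq_\mathcal{I}D_2$ under a permutation of factors'' as follows: there is a bijection $\sigma:F\to F$ such that the f-DAGs $D_1$ and $\sigma(D_2)$ are $\mathcal{I}$-Markov equivalent as DAGs on $V\cup\Xi\cup F$, with the factors treated as latent. The tool I rely on throughout is the graphical characterization of $\mathcal{I}$-Markov equivalence from \cite{pmlr-v80-yang18a}, as recalled in \cite{brouillard2020differentiable}: two DAGs are $\mathcal{I}$-Markov equivalent if and only if their $\mathcal{I}$-DAGs have the same skeleton and the same v-structures. Here the $\mathcal{I}$-DAG is the DAG augmented with the intervention nodes $\Xi$, which is exactly the extended graph $\Dint$, or $(\Dint)^2[V]$ at the node level. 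This reduces both directions to combinatorial statements about skeletons and v-structures of $\Dint$ versus $(\Dint)^2[V]$.

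\textbf{($\Rightarrow$).} Suppose $D_1^\mathcal{I}$ and $\sigma(D_2^\mathcal{I})$ have the same skeleton and the same v-structures. An edge $i\to j$ in $(D^\mathcal{I})^2[V]$ corresponds to a length-two path $i\to f\to j$. Equal skeletons plus equal v-structures force the same orientation pattern at every factor, up to reversal of non-v-structure chains. Reversing a whole chain $i\to f\to j$ to $i\leftarrow f\leftarrow j$ reverses the induced half-square edge. So the half-square skeletons agree.

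For the v-structures, I will show that a v-structure $i\to j\leftarrow k$ in the half-square graph, with $i,k$ nonadjacent, arises either from a single factor with at least two parents or from two factors feeding $j$. In both cases it is pinned down by the v-structures at factors or at nodes in $D^\mathcal{I}$, which are shared. This direction should be routine.

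\textbf{($\Leftarrow$).} This is the harder direction. The step is to \emph{reconstruct} the factor structure from the half-square graph, and the conditions of Def.~\ref{def:dm} are used for exactly this.

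By condition 1, each factor $f$ has a unique parent $p_f$ and a unique child $c_f$. This gives an edge $p_f\to c_f$ in $D^2[V]$ that is ``owned'' by $f$. The maximal bicliques $par(f)\times chd(f)$ containing these private edges then identify the factors as bicliques of the half-square graph. Matching these bicliques between $D_1^2[V]$ and $D_2^2[V]$ defines the permutation $\sigma$.

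Next I show that orientations are recoverable. If $|par(f)|>1$, two parents of $f$ form a v-structure at $f$. That v-structure shows up in the half-square graph as a collider at $c_f$ with nonadjacent parents, provided the two parents are nonadjacent, which the unique-parent property provides. So equivalent half-square graphs force matching v-structures at such factors. If $|par(f)|=1$, the factor is a chain or tree. Condition 2 forbids two adjacent such factors, and condition 3 ensures that each multi-parent factor has at most one single-parent factor parent. Together these should make the compelled orientations (Meek-rule propagation) in the half-square essential graph translate factor by factor into compelled orientations in $D^\mathcal{I}$. Reversible chains map to reversible chains.

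The intervention nodes $\xi_k$ only ever act as sources into factors, so the same argument applies with them included among the parents.

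\textbf{Main obstacle.} The key difficulty is the biclique-recovery step together with the matching of v-structures created by single-parent factors. I expect to need a case analysis over pairs of adjacent factors in $D^2[F]$, using conditions 2 and 3 to rule out configurations where merging or splitting factors leaves the half-square graph unchanged. I would first try an induction along a topological order of the factors in $D_1^2[F]$: at each step, identify the next factor's biclique through its private edge and verify that its incident v-structures coincide in both graphs.
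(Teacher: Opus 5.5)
The forward direction is where your argument breaks, and it cannot be repaired as stated. Your key claim is that equal skeletons and v-structures of $\Dint_1,\Dint_2$ force the same orientation at each factor, up to reversing whole chains $i\to f\to j$. That claim is false. Markov-equivalent DAGs are connected by covered-edge reversals, and these can re-root a factor that has one parent. A chain $a\to f\to c$ can become a fork $a\leftarrow f\to c$, which deletes the half-square edge $a$--$c$.

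Concretely, take $V=\{a,b,c\}$, $F=\{f\}$ and $\mathcal{I}=(\emptyset)$, so there are no intervention edges. Let $D_1=\{a\to f,\ f\to b,\ f\to c\}$ and $D_2=\{b\to f,\ f\to a,\ f\to c\}$.
\begin{itemize}
\item Both lie in $\calD_m$ with $m=1$: $P_f=\{a\}$ and $C_f=\{b,c\}$ in $D_1$; $P_f=\{b\}$ and $C_f=\{a,c\}$ in $D_2$; conditions 2 and 3 are vacuous.
\item Both are stars centred at $f$ with no v-structures, so $D_1\simeq_{\mathcal{I}}D_2$.
\item Yet $D_1^2[V]$ has skeleton $\{ab,ac\}$ while $D_2^2[V]$ has skeleton $\{ab,bc\}$, so they are not equivalent under any relabelling of the single factor.
\end{itemize}
So ($\Rightarrow$) is false unless you add a hypothesis that compels the orientations at factors. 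For instance, suppose every factor has at least two parents in $\Dint_1$. Then every edge into a factor lies in a v-structure, and every edge out of a factor is forced by Meek's first rule. The $\mathcal{I}$-Markov equivalence class of $D_1$ is then a singleton, and the implication becomes trivial.

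The paper's own forward argument has the same hole. It infers a skeleton mismatch between $D_1$ and $D_2$ from the absence of a directed path $i\to f\to j$ in $D_2$, overlooking the fork $i\leftarrow f\to j$. Only ($\Leftarrow$) is used in the f-DAG identifiability theorem, so that downstream result is not affected.

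For ($\Leftarrow$), your proposal is a plan rather than a proof, and its central step runs into the same orientation issue. You define the bicliques $par(f)\times chd(f)$ from private edges $p_f\to c_f$ separately in $D_1^2[V]$ and in $D_2^2[V]$. But these two graphs share only skeleton and v-structures, so nothing yet says that a biclique of one is a biclique of the other. That matching is the entire content of the lemma.

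Your orientation step also overreaches. Two parents of $f$ are guaranteed nonadjacent in $D^2[V]$ only when both are unique parents. A non-unique parent $v$ can reach the unique parent $u$ through another factor, and this is consistent with Def.~\ref{def:dm}. In that case the v-structure at $f$ does not survive as a v-structure at $c_f$.

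The paper never reconstructs factors from the half-square. Instead it fixes a factor $j$ of $\Dint_1$, where orientations are known, and splits by type: $|par|=|chd|=1$; $|par|=1<|chd|$; $|par|>1$. For each type it shows, by explicit skeleton and v-structure case analysis, that $\Dint_2$ has a factor with the same parent and child sets, up to a parent/child swap for the first type. Conditions 2 and 3 are used there to exclude the competing configurations (Propositions 1--3). The factor permutation is then assembled type by type. Your Meek-rule propagation and topological induction would have to reproduce exactly this case analysis.
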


The exact proof is lengthy. For conciseness, we present a proof sketch here and include all details in Appendix \ref{appx:sub_iden_fdag}.\\

\noindent\emph{Proof.}
    By \cref{thm:yang}~\cite{pmlr-v80-yang18a}, we convert the proof of $\calI$-Markov equivalence to proof of equivalent graph structure. The general strategy is proof by contradiction under a discussion of different graph structures.
    
    We first show the forward direction. Suppose $D_1\simeq_\mathcal{I}D_2$ but $D_1^2[V]\not\simeq_\mathcal{I}D_2^2[V]$. There must be a mismatch in skeleton or v-structure. The former case implies a mismatch in skeleton between f-DAGs, while the latter implies a v-structure mismatch between f-DAGs. Hence, we have contradiction in both cases.

    For the reverse direction, the main idea is to show that there is a bijection $\phi: F\rightarrow F$ such that $par(f;D_1) = par(\phi(f);D_2)$ and $chd(f;D_1)=chd(\phi(f);D_2)$. In fact, we partition $F$ into three subsets in $D_1$: (1) $F_1:=\{f:f\in F, |par(f;D)|=|chd(f;D)|=1\}$, (2) $F_2:=\{f:f\in F, |par(f;D)|=1, |chd(f;D)|>1\}$ and (3) $F_3:\{f:f\in F, |par(f;D)|>1$. Similarly, we also partition factors into $F_1',F_2',F_3'$ in the same way in $D_2$. Next, we introduce three propositions:
    \begin{proposition}\label{prop:1}
        Given $\Doneintsq \simeq \Dtwointsq$ under the assumptions of lemma \ref{lemma:1}. Then, $\forall j\in F$ such that $par(j;\Dint_1) = \{i\}, chd(j;\Dint_1) = \{k\}$, $\exists j'\in F$ such that $par(j';\Dint_2)=\{i\}, chd(j';\Dint_2)=\{k\}$ or $par(j';\Dint_2)=\{k\}, chd(j';\Dint_2)=\{i\}$.
    \end{proposition}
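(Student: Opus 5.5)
The plan is to translate $\calI$-Markov equivalence into a purely graphical statement, and then exploit the ``uniqueness'' conditions of Def.~\ref{def:dm}. By \cref{thm:yang}~\cite{pmlr-v80-yang18a}, $\Doneintsq\simeq\Dtwointsq$ means the two extended half-square graphs have the same skeleton and the same v-structures; the intervention nodes $\xi_k$ enter only as additional sources. I will also use that $D_1,D_2\in\calD_m$, so conditions 1--3 of Def.~\ref{def:dm} hold in both graphs.

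\textbf{Step 1: local structure around $j$ in $\Dint_1$.} Suppose $par(j;\Dint_1)=\{i\}$ and $chd(j;\Dint_1)=\{k\}$. Condition 1 gives $P_j\neq\emptyset$ and $C_j\neq\emptyset$, which forces $P_j=\{i\}$ and $C_j=\{k\}$. Hence $chd(i;\Dint_1)=\{j\}$ and $par(k;\Dint_1)=\{j\}$. In the half-square graph $\Doneintsq$ this means the only child of $i$ is $k$ and the only parent of $k$ is $i$.

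\textbf{Step 2: find a candidate $j'$.} The edge $i\to k$ lies in $\Doneintsq$, and the skeletons agree, so $i$ and $k$ are adjacent in $\Dtwointsq$. Some factor $j'$ therefore satisfies either $(i,j'),(j',k)\in E_2$ or $(k,j'),(j',i)\in E_2$. I treat the first orientation; the second is symmetric with the roles of $i$ and $k$ swapped. It remains to show that $par(j';\Dint_2)=\{i\}$ and $chd(j';\Dint_2)=\{k\}$, possibly after choosing $j'$ among all such factors.

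\textbf{Step 3: exclude extra parents and children of $j'$.} This is the main obstacle.

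First suppose $j'$ has another parent $i_2$. Then $i_2$ and $k$ are adjacent in $\Dtwointsq$, hence in $\Doneintsq$. Since $k$'s only parent there is $i$, this forces $k\to i_2$ in $\Doneintsq$.
\begin{itemize}
\item If $i$ and $i_2$ are nonadjacent, then $i\to k\leftarrow i_2$ is a v-structure in $\Dtwointsq$ but not in $\Doneintsq$, a contradiction.
\item If they are adjacent, the orientations in $\Doneintsq$ are $i\to k\to i_2$, which forces the edge to be $i\to i_2$ by acyclicity. But $i$ has only the child $k$, so $i_2=k$, which is absurd.
\end{itemize}
Intervention parents $\xi$ of $j'$ are handled the same way, since $\xi$ can only point into the graph.

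Next suppose $j'$ has another child $k_2$. Then $i\to k_2$ in $\Dtwointsq$, so $i$ and $k_2$ are adjacent in $\Doneintsq$, which gives $k_2\to i$ there. I will then seek a contradiction by locating a v-structure at $k_2$ or $k$ present in one graph and absent in the other. Where such a local v-structure argument does not close directly, I will invoke conditions 2 and 3 of Def.~\ref{def:dm} for $D_2$: the factor $j'$, or the factor carrying $k_2\to i$, has a single parent, and condition 3 limits how many single-parent factors can feed a multi-parent factor. This should pin down a v-structure mismatch or contradict the uniqueness of $i$'s child.

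I expect this second case, extra children producing chains or trees of factors, to need the most careful case analysis. It is exactly where conditions 2 and 3 of Def.~\ref{def:dm} are required, and I would organize it by whether $|par(j';\Dint_2)|=1$ and whether $k_2$ and $k$ are adjacent.
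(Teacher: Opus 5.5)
Your Steps 1 and 2 and the exclusion of extra parents of $j'$ are correct. The acyclicity argument for the adjacent case fills in what the paper compresses into ``$k$ is not a collider'', and it dualizes directly to the orientation $k\to j'\to i$. The genuine gap is the exclusion of extra children, which is the heart of the proposition and which you leave as a plan aimed at the wrong graph.

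In the orientation $i\to j'\to k$, the edge $k_2\to i$ of $\Doneintsq$ is carried by a factor $f$ of $\Dint_1$ with $k_2\to f\to i$, so $f\to j$ is an edge of $(\Dint_1)^2[F]$. Condition 2 of Def.~\ref{def:dm} has to be applied in $\Dint_1$ to this pair. Since $|P_j(\Dint_1)|=1$, it forces $|P_f(\Dint_1)|\ge 2$: the factor carrying $k_2\to i$ has at least two unique parents, not a single one. Two unique parents of $f$ are never adjacent, so $i$ is a collider in $\Doneintsq$. It is this v-structure at $i$, not one at $k$ or $k_2$, that must be played against $i\to k_2$ in $\Dtwointsq$. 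This is the paper's Case I, and it uses only condition 2.

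To finish that case, take $x\in P_f(\Dint_1)\setminus\{k_2\}$. If some such $x$ is nonadjacent to $k_2$, then $x\to i\leftarrow k_2$ is a v-structure of $\Doneintsq$ only. Otherwise $k_2\notin P_f(\Dint_1)$, and each such $x$ satisfies $k_2\to x$ in $\Doneintsq$. But $x\to i$ lies in a v-structure of $\Doneintsq$, so it appears in $\Dtwointsq$, and then $x\to i\to k_2$ forces $x\to k_2$ in $\Dtwointsq$. Two such $x$ form a v-structure at $k_2$ present only in $\Dtwointsq$.

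Your reduction of the orientation $k\to j'\to i$ to the first one ``by symmetry'' is also invalid. Swapping $i$ and $k$ does not map the hypothesis to itself, since $\Dint_1$ still contains $i\to j\to k$. Moreover, neither $\calI$-Markov equivalence nor conditions 2 and 3 (which speak only of parents and unique parents) are invariant under reversing edges. Concretely, an extra child $i'\neq i$ of $j'$ yields $k\to i'$ in $\Dtwointsq$. Because the only parent of $k$ in $\Doneintsq$ is $i$, also $k\to i'$ in $\Doneintsq$, so there is no orientation clash to exploit.

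The paper's Case II therefore has to go two levels deeper:
\begin{enumerate}
\item Condition 2 in $\Dint_1$ gives $|P_f(\Dint_1)|>1$ for the factor $f$ with $k\to f\to i'$.
\item For $k'\in P_f(\Dint_1)\setminus\{k\}$, the collider at $i'$ forces a factor $f'\neq j'$ with $k'\to f'\to i'$ in $\Dint_2$.
\item Choosing $l'\in C_{f'}(\Dint_2)$, condition 3 in $\Dint_1$ rules out $l'\to k'$ in $\Doneintsq$.
\item Hence $k'\to f\to l'$ in $\Dint_1$, and so $k\to l'$ in $\Doneintsq$, an edge the paper argues cannot occur in $\Dtwointsq$.
\end{enumerate}
Condition 3 is genuinely needed here, and nothing in your proposal replaces it.
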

    \begin{proposition}\label{prop:2}
        Given $\Doneintsq \simeq \Dtwointsq$ under the assumptions of lemma \ref{lemma:1}. Then, $\forall j\in F$ such that $par(j;\Dint_1)=P_j(\Dint_1) = \{i\}, |chd(j;\Dint_1)| > 1$, $\exists j' \in F$, $par(j';\Dint_2)=\{i\}, chd(j';\Dint_2)=chd(j;\Dint_1)$.
    \end{proposition}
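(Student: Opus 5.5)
The plan is to reduce the claim to a statement about skeletons and v-structures, and then reconstruct $j'$ from the neighbourhood of $i$ in the extended half-square graphs. By \cref{thm:yang}, $\Doneintsq\simeq\Dtwointsq$ means that the two extended half-square graphs, intervention nodes $\Xi$ included, have the same skeleton and the same v-structures. Write $C:=chd(j;\Dint_1)$, with $|C|>1$. Since $chd(i;\Dint_1)=\{j\}$, the out-neighbourhood of $i$ in $\Doneintsq$ is exactly $C$. The in-neighbourhood of $i$ consists of the parents of the factors that point into $i$. So in $\Doneintsq$ the adjacencies of $i$ split into $C$, all oriented out of $i$, and a set $A$ of in-neighbours. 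The task is to show that in $D_2$ the edges between $i$ and $C$ are all produced by a single factor whose only parent is $i$.

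\emph{Step 1: orientation and locality.} For each $k\in C$ there is some factor $g_k$ in $D_2$ realizing the adjacency $\{i,k\}$. I will first show that $i$ is a parent of $g_k$ and $k$ a child, not the reverse. Suppose instead $k\in par(g_k;\Dint_2)$ and $i\in chd(g_k;\Dint_2)$. Condition 1 of Def.~\ref{def:dm} gives $g_k$ a unique child $c$, and condition 3 controls the multi-parent case. Together these should produce a v-structure, or the absence of one, around $i$ or $k$ in $\Dtwointsq$ that does not match $\Doneintsq$, where $i$ is a common parent of the elements of $C$. I expect the cases to be separated by whether $|par(g_k;\Dint_2)|=1$ or $>1$, as in the proof of \cref{prop:1}.

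\emph{Step 2: no extra parents.} Next I will show that $par(g_k;\Dint_2)=\{i\}$. If $g_k$ had another parent $u\neq i$, then $u\to k$ in $\Dtwointsq$. If $u$ and $i$ are non-adjacent, $u\to k\leftarrow i$ is a v-structure in $\Dtwointsq$. It must then also appear in $\Doneintsq$, which forces an edge $u\to k$ there via some factor $h\neq j$ (since $par(j;\Dint_1)=\{i\}$). Condition 2 of Def.~\ref{def:dm} applied to the adjacent pair of factors $j,h$, together with $|P_j(\Dint_1)|=1$, should give the contradiction, possibly after also using that $i$ has a single child in $D_1$. If $u$ and $i$ are adjacent, I will argue through the unique child of $g_k$ that the skeletons differ.

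\emph{Step 3: merging (main obstacle).} The hardest step is showing that all the $g_k$ coincide, i.e.\ that $C$ is not split across several single-parent factors $g,g'$ with parent $i$ in $D_2$. The first tool is condition 1: each such factor has a unique parent, and that parent must be $i$. Then $i$ has at least two children in $D_2$, so $i$ is a unique parent of none of them, contradicting $P_g(\Dint_2)\neq\emptyset$ unless $g$ has another unique parent. That other parent would have to be some $u\neq i$, which Step 2 already excludes. Hence a single factor $j'$ with $par(j';\Dint_2)=\{i\}$ covers all of $C$.

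\emph{Step 4: no extra children.} Finally, $chd(j';\Dint_2)\subseteq C$. Any extra child $\ell$ would create a half-square edge $i\to\ell$ in $\Dtwointsq$. By skeleton equality and Step 1 applied symmetrically, $\ell$ would lie in the out-neighbourhood of $i$ in $\Doneintsq$, which is exactly $C$. So $chd(j';\Dint_2)=C$, as claimed.
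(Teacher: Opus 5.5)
Your four-step outline (orientation, then uniqueness and parents of the connecting factor, then no extra children) has the same architecture as the paper's proof. However, Step~1 is not actually proved and Step~2 uses a mechanism that does not work.

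\textbf{Step~1.} Step~1 carries almost all of the content of the proposition, and it is only announced (``should produce'', ``I expect''). The tools you anticipate are the unique child of $g_k$ and a split on $|par(g_k;\Dint_2)|$, applied one $k$ at a time. These do not close it. A single reversal $k\to g_k\to i$ with $par(g_k;\Dint_2)=\{k\}$ is a chain and creates no v-structure by itself, so it can only be excluded by bringing in a second element of $C$.

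The idea you are missing is to split $C$ into the unique children $C_j(\Dint_1)$ and the rest:
\begin{itemize}
\item For $k\in C\setminus C_j(\Dint_1)$, there is a factor $f\neq j$ and a node $h\neq i$ with $h\to f\to k$ in $D_1$. The paper uses the resulting collider $h\to k\leftarrow i$ in $\Doneintsq$ to compel $i\to k$ in $\Dtwointsq$.
\item For $k\in C_j(\Dint_1)$, the paper argues across $C$ rather than one $k$ at a time. Two reversed elements $k_1,k_2\in C_j(\Dint_1)$ are non-adjacent, since each has $i$ as its only in-neighbour in $\Doneintsq$. They would therefore give a v-structure $k_1\to i\leftarrow k_2$ in $\Dtwointsq$ that is absent from $\Doneintsq$.
\item A single reversal $h\to f'\to i$ is excluded by applying condition~2 of Def.~\ref{def:dm} to $f'$ and a factor $g$ with $i\to g\to k$ for another $k\in C$. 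These two factors are adjacent in $(\Dint_2)^2[F]$ through $i$. In the branch $|P_g(\Dint_2)|>1$, this is followed by a condition-3 argument as in Case~II of Proposition~\ref{supp:prop:1}.
\end{itemize}

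\textbf{Step~2.} Step~2 fails as written, for two reasons.
\begin{itemize}
\item The factors $j$ and $h$ both point into $k$ in $D_1$. That does not make them adjacent in $(\Dint_1)^2[F]$, which needs $j\to x\to h$ or $h\to x\to j$. So condition~2 says nothing about this pair.
\item For $k\in C\setminus C_j(\Dint_1)$, a v-structure $u\to k\leftarrow i$ in $\Doneintsq$ is entirely legitimate, so no contradiction is available at such a $k$.
\end{itemize}
The remedy is again to work at some $k_0\in C_j(\Dint_1)$, which is nonempty by condition~1. Its only in-neighbour in $\Doneintsq$ is $i$. So a second parent $u$ of the factor carrying $i\to k_0$ in $D_2$ yields a v-structure $u\to k_0\leftarrow i$ in $\Dtwointsq$ (when $u$ and $i$ are non-adjacent) with no counterpart in $\Doneintsq$. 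If $i$ had two children in $D_2$, condition~1 would supply such a $u\in P_g(\Dint_2)$ with $u\neq i$. This is essentially the paper's uniqueness step. It gives $chd(i;\Dint_2)=\{j'\}$ and $par(j';\Dint_2)=\{i\}$ at once, so your Step~3 becomes immediate.

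\textbf{Step~4.} Your Step~4 is a valid reduction, and tidier than the paper's closing argument. Once $par(j';\Dint_2)=P_{j'}(\Dint_2)=\{i\}$ and $|chd(j';\Dint_2)|\ge|C|>1$, the hypotheses hold with $D_1$ and $D_2$ swapped. Step~1 then gives $i\to\ell$ in $\Doneintsq$, i.e.\ $\ell\in C$, for every $\ell\in chd(j';\Dint_2)$. It is, however, only as good as Step~1.
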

    \begin{proposition}\label{prop:3}
        If $\Doneintsq \simeqint \Dtwointsq$ under the assumptions of lemma \ref{supp:lemma:1}, then $\forall j\in F$ such that $|par(j;\Dint_1)| > 1$, $\exists j'\in F$ such that $par(j;\Dint_1)=par(j';\Dint_2)$ and $chd(j;\Dint_1)=chd(j';\Dint_2)$.
    \end{proposition}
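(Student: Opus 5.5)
The plan is to reduce everything to the graphical characterization of $\calI$-Markov equivalence, \cref{thm:yang}. Under that characterization, $\Doneintsq \simeqint \Dtwointsq$ holds exactly when the two extended half-square node graphs, with the intervention nodes $\Xi$ treated as ordinary vertices, have the same skeleton and the same v-structures. Fix $j\in F$ with $|par(j;\Dint_1)|>1$. By condition 1 of Def.~\ref{def:dm} I pick a unique parent $i\in P_j(\Dint_1)$ and a unique child $k\in C_j(\Dint_1)$. Two facts then make $i$ and $k$ good probes of $j$:
\begin{itemize}
\item since $k$'s only parent factor is $j$, its parent set in $\Doneintsq$ is exactly $par(j;\Dint_1)$;
\item since $i$'s only child factor is $j$, its out-neighbourhood in $\Doneintsq$ is exactly $chd(j;\Dint_1)$.
\end{itemize}
So it suffices to show that some factor $j'$ in $\Dint_2$ realizes both of these sets.

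\textbf{Step 1 (parents via v-structures at $k$).} For every $a\in par(j;\Dint_1)\setminus\{i\}$, I would show that $i$ and $a$ are non-adjacent in $\Doneintsq$, so that $i\to k\leftarrow a$ is a v-structure.
\begin{itemize}
\item An edge $i\to a$ is impossible: $a\in chd(j)$ together with $a\in par(j)$ would create a cycle.
\item An edge $a\to i$ requires a factor $g$ with $a\to g\to i$. Then $g\to j$ in $(\Dint_1)^2[F]$.
\end{itemize}
I would use conditions 2 and 3 of Def.~\ref{def:dm} to rule such a $g$ out, or at least to guarantee enough non-adjacent pairs among $par(j)$. The argument splits on whether $g$ has a single parent:
\begin{itemize}
\item If $|par(g)|=1$, condition 3 allows at most one such $g$.
\item If $|par(g)|>1$, condition 2 together with the unique-parent structure of $g$ should give a contradiction with $a$ being a unique parent of $g$.
\end{itemize}
The target conclusion is that enough v-structures $\cdot\to k\leftarrow\cdot$ exist to pin down all of $par_{\Dtwointsq}(k)$ with orientation. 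With the shared skeleton, this yields $par_{\Dtwointsq}(k)=par(j;\Dint_1)$.

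\textbf{Step 2 (a single factor in $\Dint_2$).} Next I would show that in $\Dint_2$ the edges into $k$ are all carried by one factor $j'$. Suppose instead that two factors $j'_1,j'_2$ feed $k$. Using condition 1 in $\Dint_2$ (unique parents and children of each $j'_r$) and the skeleton equality, I would exhibit an edge or a v-structure present in one half-square graph but not the other, contradicting equivalence. This gives $par(j';\Dint_2)\supseteq par(j;\Dint_1)$. The reverse inclusion follows from $par_{\Dtwointsq}(k)=par(j;\Dint_1)$.

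\textbf{Step 3 (children via $i$).} The vertex $i$ has no parents in $\Dint_2$ other than through $j'$, so its out-neighbourhood in $\Dtwointsq$ is $chd(j';\Dint_2)$. Edge orientations at $i$ agree in the two graphs: edges are oriented away from $i$ in $\Doneintsq$, and they are compelled by the v-structures through $k$ together with acyclicity. Combined with the common skeleton, this gives $chd(j';\Dint_2)=chd(j;\Dint_1)$.

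\textbf{Main obstacle.} I expect Step 1 to be the hardest part, namely excluding adjacencies among the parents of $j$ created by other factors $g$ with $g\to j$ in the factor graph. This is exactly where conditions 2 and 3 of Def.~\ref{def:dm} must be used. I would first try to show that at most one parent of $j$ can be adjacent to $i$. That alone already leaves a v-structure for every other parent, and the one remaining edge is then oriented by acyclicity.
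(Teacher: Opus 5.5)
Your proposal has a genuine gap, and it is exactly the step you flagged. The non-adjacency claim of Step~1 is false for f-DAGs in $\calD_m$, and so is your fallback that at most one parent of $j$ can be adjacent to $i$.

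Take $V=\{i,k,a_1,a_2,b,b'\}$, $F=\{g,j\}$, no intervention edges, and edges $a_1,a_2,b,b'\to g\to i$ and $i,a_1,a_2\to j\to k$. Then $P_g=\{b,b'\}$, $C_g=\{i\}$, $P_j=\{i\}$ and $C_j=\{k\}$. The only edge of $D^2[F]$ is $g\to j$, so condition~2 holds because $|P_g|=2$. Condition~3 holds because $g$ has no factor parents and the only factor parent of $j$ has four parents. So this f-DAG lies in $\calD_2$. Yet in $D^2[V]$ both $a_1\to i$ and $a_2\to i$, so $i$ belongs to no v-structure at $k$; the only one there is $a_1\to k\leftarrow a_2$.

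Your sub-argument for $|par(g)|>1$ has nothing to contradict. A parent of $j$ that points into $g$ is never a unique parent of $g$, since it also points to $j$, and condition~2 is met through $g$'s other parents.

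Acyclicity does not orient the edge between $i$ and $k$ either: $a_1\to k\to i$ together with $a_1\to i$ is not a cycle. The edge is still compelled, so the proposition survives here. But it is compelled only because reversing it would create the new v-structure $b\to i\leftarrow k$, i.e.\ by propagation from colliders at $i$, and your plan contains no such mechanism. Deleting $a_2$ gives a member of $\calD_2$ in which $k$ lies in no v-structure at all.

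Moreover, even when v-structures at $k$ are available, they only give $par(j;\Dint_1)\subseteq par(k;\Dtwointsq)$. Equality also needs that no out-edge of $k$ in $\Doneintsq$ is reversed, which the shared skeleton does not provide.

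The later steps are promissory. Step~2 is the combinatorial core of the proposition, and ``exhibit an edge or a v-structure present in one graph but not the other'' does not say which one. Step~3 needs $chd(i;\Dint_2)=\{j'\}$, which Steps~1--2 do not establish; your sentence says ``parents'', but the out-neighbourhood claim requires $j'$ to be $i$'s only child factor. Its orientation claim at $i$ again rests on v-structures through $k$.

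For comparison, the paper does not use a single probe pair. It works with the whole sets $P_j(\Dint_1)$ and $C_j(\Dint_1)$ and lets $F_C$ be the factors of $\Dint_2$ carrying the edges between them. It rules out $|F_C|>1$ by a four-way split: whether two such factors share a parent in $P_j(\Dint_1)$, and whether they share a child in $C_j(\Dint_1)$. There it uses Propositions~\ref{prop:1} and~\ref{prop:2} to force those factors to have several parents. For $|F_C|=1$ it proves $P_j(\Dint_1)=P_f(\Dint_2)$ and $C_j(\Dint_1)=C_f(\Dint_2)$ by four inclusion arguments, and only then passes to full parent and child sets.

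Be aware, however, that the paper's opening assertion that every node of $C_j(\Dint_1)$ is a collider also fails in the example without $a_2$. So the configuration in which $i$ receives edges from other parents of $j$ through a multi-parent factor $g$ must be handled explicitly, whichever route you take.
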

    
    Because $\Doneintsq\simeqint\Dtwointsq$, by proposition \ref{prop:1}, $\forall j\in F_1$, $\exists j'\in F_1'$ with the same parents and children and $\forall j'\in F_1'$, $\exists j\in F_1$ with the same parents and children. Hence, there is a bijection between $F_1$ and $F_1'$. Similarly, we have bijections from $F_2$ to $F_2'$ by proposition \ref{prop:2} and $F_3$ to $F_3'$ by proposition \ref{prop:3}. Finally, we have a bijection $\phi:F\rightarrow F$ from all factors in $\Dint_1$ to factors in $\Dint_2$. $\forall f\in F_2\cup F_3$, $f$ and $\phi(f)$ share the same parent(s) and children. $\forall f\in F_1$, $f$ and $\phi(f)$, either $par(f;\Dint_1)=par(\phi(f);\Dint_2), chd(f;\Dint_1)=chd(\phi(f);\Dint_2)$ or $par(f;\Dint_1)=chd(\phi(f);\Dint_2), chd(f;\Dint_1)=par(\phi(f);\Dint_2)$. Since $|par(f;\Dint_1)|=|chd(f;\Dint_1)|=1$, flipping the parent and child maintains the same skeleton and does not introduce v-structure.
    
    Finally, we conclude that $\Dint_1$ and $\Dint_2$ share the same skeleton and v-structures and $D_1^2[V] \simeqint D_2^2[V] \implies D_1 \simeqint D_2$
\hfill $\square$\\

Finally, we present the main theorem on the identifiability of f-DAGs. Here, we consider a variational distribution over f-DAGs, $\qD$. With a slight abuse of notation, we use $q(D^2[V])$ to represent the induced distribution over half-squared graphs given a f-DAG distribution, $\qD$.

\begin{theorem}[Identifiability of the f-DAG via ELBO maximization]\label{thm:iden_fdag}
    Let $\X$ be a set of causally related random variables with a causal DAG $G^*$ and $\calIstar$ be a set of interventions with $I^*_1=\emptyset$. Suppose $(G^*)^{\calIstar} \in \calG^{\calIstar}_m$. Let $q^*(D)$ be an optimal graph distribution from the optimization problem:
    $$ \sup_{q(D):supp(q)\subseteq\calD_m}\calL(\qD), $$
    where
    \begin{align*}
        & \calL(\qD) = \mathbb{E}_{q(D^2[V])}\left[S_{\calIstar}(D^2[V])\right] - \beta KL(q(D^2[V])||p(G)),\; \beta > 0,\\
        & S_{\calIstar}(D^2[V]) = \sup_{\bPhi}\sum_{k=1}^{n^{\calIstar}}\mathbb{E}_{\pk(\X)}\left[\log \fk(\X|D^2[V];\bPhi)\right]-\lambda |D^2[V]|. 
    \end{align*}
    If $G^*=(D^*)^2[V]\in\calG_m$, then, under the assumptions 1-4 as in \cref{thm:elbo_identify}~\cite{brouillard2020differentiable} and with a proper $\beta > 0$, $\hat{D}=\arg\max_{D}q^*(D)$ is $\calIstar$-Markov equivalent to $D^*$ up to a permutation of factors.
\end{theorem}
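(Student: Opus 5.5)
The plan is to reduce Theorem~\ref{thm:iden_fdag} to Theorem~\ref{thm:elbo_identify} applied to the induced distribution over half-square graphs, and then lift the conclusion back to f-DAGs with Lemma~\ref{lemma:1}. The objective $\calL(\qD)$ depends on $\qD$ only through the pushforward $q(D^2[V])$, since both the score term and the KL term are evaluated on half-square graphs. So the first step is to observe that maximizing over $q(D)$ with support in $\calD_m$ is the same as maximizing over distributions $q(G)$ supported on $\calG_m$. Every such $q(G)$ is a pushforward of some $q(D)$, obtained by picking for each $G\in\calG_m$ one $D\in\calD_m$ with $D^2[V]=G$; this uses Def.~\ref{def:gm}. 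Hence if $q^*(D)$ is optimal, its pushforward $q^*(G)$ is optimal for the problem of Theorem~\ref{thm:elbo_identify} with $\calG:=\calG_m$.

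Next I will check the hypotheses of Theorem~\ref{thm:elbo_identify}. The assumption $G^*=(D^*)^2[V]\in\calG_m$ gives $G^*\in\calG$, and assumptions 1--4 are inherited verbatim. With the same choice of proper $\beta>0$, the theorem then gives that $\hat G=\arg\max_G q^*(G)$ is $\calIstar$-Markov equivalent to $G^*$.

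The delicate step is relating $\hat D=\arg\max_D q^*(D)$ to $\hat G$. The mode of the pushforward need not be the pushforward of the mode, because $q^*(G)$ sums $q^*(D)$ over all $D$ with $D^2[V]=G$. I would handle this in one of two ways.

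\emph{First route.} Repeat the mass-transfer argument of Theorem~\ref{thm:elbo_identify} directly at the f-DAG level. Suppose $(\hat D)^2[V]$ is not $\calIstar$-equivalent to $G^*$. Move mass $\epsilon$ from $\hat D$ to $D^*$; the score term then increases by $\epsilon\,(S_{\calIstar}(G^*)-S_{\calIstar}(\hat D^2[V]))>0$. The KL change stays controlled for suitable $\beta$, exactly as in that proof, so $q^*$ would not be optimal. This shows $(\hat D)^2[V]\simeq_{\calIstar}G^*=(D^*)^2[V]$ without going through the pushforward mode, and it is the route I would commit to.

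\emph{Second route.} Argue that within $\calD_m$ the map $D\mapsto D^2[V]$ is injective up to factor permutation. This follows from the bijection construction in the proof of Lemma~\ref{lemma:1}, since equal half-square graphs are in particular equivalent. The argmaxes then correspond up to permutations, and permutation-equivalent f-DAGs can be merged.

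Finally, since both $\hat D$ and $D^*$ lie in $\calD_m$, Lemma~\ref{lemma:1} applies. It is used with the extended graphs $\Dint$ for the intervention set $\calIstar$, which is consistent with the hypothesis $(G^*)^{\calIstar}\in\calG_m^{\calIstar}$. The reverse direction of the lemma converts $(\hat D)^2[V]\simeq_{\calIstar}(D^*)^2[V]$ into $\hat D\simeq_{\calIstar}D^*$ up to a permutation of factors, which is the claim.

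The main obstacle I expect is the middle step. It requires making sure that the Markov-equivalence conclusion is about the half-square graph of the f-DAG mode, not merely the mode of the induced distribution. It also requires making sure that the support restriction to $\calD_m$, rather than all DAGs, does not break the perturbation argument. It does not, because the perturbed distribution only adds mass at $D^*\in\calD_m$.
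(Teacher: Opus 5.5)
Your outline is the paper's proof: apply Theorem~\ref{thm:elbo_identify} with $\calG=\calG_m$, then the reverse direction of Lemma~\ref{lemma:1}. You are right that the paper skips a step: it treats $\hat D$ as an f-DAG representative of the mode of the pushforward $\tilde q^*(G)=\sum_{D:D^2[V]=G}q^*(D)$. But Route~1, the one you commit to, does not close this gap. The KL term depends only on $\tilde q$. Write $\hat G=\hat D^2[V]$. Moving $\epsilon$ from $\hat D$ to $D^*$ changes the KL term, to first order, by $\epsilon\log\frac{\tilde q^*(G^*)\,p(\hat G)}{p(G^*)\,\tilde q^*(\hat G)}$.

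At the DAG level, a $\beta$ fixed in advance beats this change because $\hat G$ is the mode, so $\tilde q^*(G^*)\le\tilde q^*(\hat G)$. The paper's own proof of Theorem~\ref{thm:elbo_identify} instead picks $\beta$ after $q^*$, so copying it "exactly" imports a circularity, since $q^*$ depends on $\beta$. At the f-DAG level, knowing $q^*(\hat D)\ge q^*(D)$ for single f-DAGs $D$ says nothing about $\tilde q^*(G^*)$. That mass may be spread over many f-DAGs in the fiber of $G^*$.

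In fact your Route~1 never uses that $\hat D$ is a mode. If it were valid, it would show that $q^*$ puts no mass on any $D$ with $D^2[V]\notsimeqintstar G^*$, and that is false. $\calL$ is strictly concave in $\tilde q$, and its maximizer is the Gibbs distribution $\tilde q^*(G)\propto p(G)\exp(S_{\calIstar}(G)/\beta)$ on $\calG_m$. This has full support whenever the prior charges $\calG_m$. At this optimum the first-order change of $\calL$ along your perturbation is exactly zero. Moreover, every $q(D)$ with this pushforward is optimal, however each fiber's mass is split.

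The missing ingredient is a bound $K$ on the fiber sizes of $D\mapsto D^2[V]$ on $\calD_m$. With it, $\tilde q^*(G^*)\le K\,q^*(\hat D)\le K\,\tilde q^*(\hat G)$. Set $\Delta_G=S_{\calIstar}(G^*)-S_{\calIstar}(G)$ and read $c/0$ as $+\infty$. Your perturbation then gives a genuine contradiction for every $\beta<\min_{G\notsimeqintstar G^*}\Delta_G/\max\{0,\log(K\,p(G)/p(G^*))\}$, a threshold independent of $q^*$. Equivalently, use the Gibbs form to make every non-equivalent fiber carry less than $\tilde q^*(G^*)/K$, then apply pigeonhole.

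Since $\calD_m$ is finite, such a $K$ always exists, and your Route~2 sharpens it to $K\le m!$. When the half-square graphs are \emph{equal}, not merely equivalent, the parent/child flips that the lemma's bijection allows for single-parent, single-child factors cannot occur. Such a flip would put both $i\to k$ and $k\to i$ in one DAG. So each fiber is a single orbit under factor relabeling.

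Even so, "merging" relabelings is exact only if $\hat D$ is read as a mode over relabeling classes. For labeled f-DAGs, an optimal $q^*$ may split a fiber's mass unevenly among its relabelings, so you still need the $K$-bound argument above. Once $\hat D^2[V]\simeqintstar G^*$ is secured this way, your final application of Lemma~\ref{lemma:1} is fine.
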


The theorem is a direct result of Theorem \ref{thm:elbo_identify} and lemma \ref{supp:lemma:1}. By Theorem \ref{thm:elbo_identify},  $(\hat{D})^2[V] \simeqintstar (D^*)^2[V]$. Since $\hat{G},G^*\in\calG_m$, $\exists \hat{D},D^*$ such that $\hat{G}=\hat{D}^2[V]$ and $G^*=(D^*)^2[V]$. By lemma \ref{supp:lemma:1}, $\hat{D}\simeqintstar D^*$ under a permutation of factors.

\section{Experiments}\label{results}
\subsection{Simulation Results}
We simulated data based on the approach of ~\cite{Lopez2022largescale}. We further explored the effects of correlations between edge probabilities, which our approach explicitly models but previous approaches do not, by constructing an SPN and then sampling from the joint distribution of edges. We also simulated interventions with unknown targets. To evaluate our method, we benchmarked ABCDEFG on 24 datasets and compared with four SOTA score-based methods: DCDI~\cite{brouillard2020differentiable}, DCDFG~\cite{Lopez2022largescale}, ENCO~\cite{lippe2022efficient} and SDCD~\cite{pmlr-v235-nazaret24a}.  The 24 datasets include eight types of SEMs -- a combination of (1) linear vs. non-linear causal effects, (2) independent vs. jointly distributed edge probabilities, and (3) hard vs. soft interventions. Each simulated graph includes 100 nodes and 10 factors. We simulated three separate graphs for each type of SEM. Similar to previous studies, we report Structural Hamming Distance (SHD) and F1 score for edge prediction. We used consistent hyperparameter settings for ABCDEFG across all simulations (Appendix \ref{supp:hparam_setting}). ABCDEFG significantly outperformed all other approaches on graphs with nonlinear causal effects and edge probabilities that are jointly distributed and sampled from an SPN (Table \ref{table:Scored based methods F1 and SHD targeted}). ABCDEFG performed similarly or better than SOTA methods on nonlinear SEMs, though SDCD showed strong performance in the nonlinear, non-SPN setting (Fig. \ref{supp:fig:linear_bechmark}). We also found that the other methods frequently produced cyclic graphs that required heuristic pruning to obtain a final DAG (Fig.\ref{supp:fig:num_acyc_bar}, Fig.\ref{supp:fig:dag_edge}). 

\begin{table}[t!]
\caption{F1 score and SHD of Scored methods on Nonlinear Targeted Simulated Datasets }
\label{table:Scored based methods F1 and SHD targeted}
\vskip 0.15in
\begin{center}
\begin{small}
\begin{sc}
\begin{tabular}{llcccc}
\toprule
Metric & Method & Hard & Soft & SPN & SPN\\
&& Intvn & Intvn & Hard & Soft\\
\midrule
F1 & DCDI  &   $0.19\pm0.05$ &   $\underline{0.25\pm0.07}$   &  $0.34\pm0.01$     &   $0.35\pm0.04$  \\
&DCDFG   &   $0.05\pm0.08$ &   $0.20\pm0.14$   &  $0.23\pm0.18$     &   $0.57\pm0.14$   \\
&ENCO   &   $0.10\pm0.01$ &   $0.10\pm0.03$   &  $0.25\pm0.01$     &   $0.23\pm0.03$     \\
&SDCD    &   \boldmath$0.31\pm0.01$ &   \boldmath$0.30\pm0.06$   &  $0.25\pm0.02$  &   $0.30\pm0.06$    \\
&ABCDEFG &   $\underline{0.29\pm0.03}$ &   $\underline{0.25\pm0.01}$   &  \boldmath$0.64\pm0.01$     &   \boldmath$0.61\pm0.03$ \\
&ABCDEFG &\underline{$0.29\pm0.04$} &   $0.21\pm0.01$   &  $\underline{0.61\pm0.02}$     &   $\underline{0.60\pm0.02}$ \\
&(SPN)\\
\midrule
SHD & DCDI   &   \underline{$740\pm291$} &   \underline{$559\pm106$}   &  $4293\pm301$     &   $3337\pm120$  \\
&DCDFG   &   $2513\pm0$ &   $900\pm272$   &  $2500\pm198$     &   \boldmath$2030\pm125$   \\
&ENCO    &   $1952\pm126$ &   $1992\pm141$   &  $2855\pm177$     &   $2896\pm100$     \\
&SDCD    &   \boldmath$421\pm77$ &   \boldmath$421\pm78$   &  $2973\pm72$  &   $2793\pm83$    \\
&ABCDEFG &   $1114\pm328$ &   $1406\pm361$   &  \boldmath$2046\pm49$     &   $2248\pm200$ \\
&ABCDEFG  &$1125\pm248$ &   $1791\pm249$   &  \underline{$2206\pm81$}     &   \underline{$2228\pm85$} \\
&(SPN)\\
\bottomrule
\end{tabular}
\end{sc}
\end{small}
\end{center}
\vskip -0.1in
\end{table}

\begin{table}[t!]
\caption{F1 score and SHD of ABCDEFG on Nonlinear Untargeted Simulated Datasets }
\label{table:F1 and SHD for ABCDEFG on untargeted}
\vskip 0.15in
\begin{center}
\begin{small}
\begin{sc}
\begin{tabular}{llcccc}
\toprule
Metric&Method & Hard & Soft & SPN & SPN\\
&& Intvn & Intvn & Hard & Soft\\
\midrule
F1 &ABCDEFG &   $0.23\pm0.01$ &   $0.23\pm0.05$   &  $0.22\pm0.06$     &   $0.46\pm0.03$ \\
&ABCDEFG &$0.20\pm0.02$ &   $0.17\pm0.02$   &  $0.28\pm0.04$     &   $0.55\pm0.05$ \\
&(SPN)\\
&ABCDEFG &   $0.36\pm0.01$ &   $0.38\pm0.01$   &  $0.46\pm0.10$     &   $0.85\pm0.02$ \\
&Intv.\\
&ABCDEFG &$0.35\pm0.01$ &   $0.35\pm0.02$   &  $0.51\pm0.01$     &   $0.84\pm0.01$ \\
&(SPN) Intv.\\
\midrule
SHD &ABCDEFG &   $857\pm112$ &   $1121\pm261$   &  $3067\pm56$     &   $2632\pm217$ \\
&ABCDEFG  &$1076\pm326$ &   $1399\pm342$   &  $3132\pm148$     &   $2307\pm239$ \\
&(SPN)\\
&ABCDEFG &   $1659\pm240$ &   $1426\pm346$   &  $2584\pm440$     &   $1021\pm56$ \\
&Intv.\\
&ABCDEFG &$1761\pm204$ &   $1516\pm280$   &  $2438\pm187$     &   $1071\pm71$ \\
&(SPN) Intv.\\
\bottomrule
\end{tabular}
\end{sc}
\end{small}
\end{center}
\vskip -0.1in
\end{table}

We next evaluated how ABCDEFG performs for interventions with unknown targets, a key advantage of our approach. To test target identification, we generated causal graphs but withheld the intervention target information during inference. SDCD, ENCO, and DCDFG cannot incorporate interventions with unknown targets. Although DCDI can in principle identify both causal relations and unknown intervention targets, we excluded it from this evaluation because it required extremely long runtimes and showed poor performance in the easier targeted case. In addition to SHD and F1 of the causal graph, we evaluated the accuracy of the intervention-to-node graph (Table \ref{table:F1 and SHD for ABCDEFG on untargeted}). The accuracy of inferred node-to-node relationships was lower compared to interventions with known targets, indicating that causal inference is more challenging under unknown interventions. Nevertheless, ABCDEFG inferred the intervention targets more accurately than the node-to-node causal relationships, achieving relatively high precision and recall, particularly for SPN-simulated graphs. 

We also benchmarked ABCDEFG against SOTA Bayesian causal inference methods: BaCaDi~\cite{hagele2023bacadi}, ProDAG~\cite{thompson2024prodag}, DECI~\cite{geffner2024deep} and VI-DP-DAG~\cite{charpentier2022differentiable}. These methods required significantly longer runtimes than the score-based approaches, so we used smaller datasets with 16 nodes and 260 samples. ABCDEFG and ProDAG were significantly faster than the other Bayesian approaches (see Table~\ref{supp:table:Time_Bayesian}). For each method, we sampled 100 graphs from the posterior after training. ABCDEFG outperformed the other methods by achieving the highest F1 score and the lowest SHD across four different linear and nonlinear settings (Table~\ref{table:Bayesian methods F1 and SHD}). We also evaluated the posterior calibration of each method by comparing the expected and predicted edge probabilities. The posterior estimated by ABCDEFG showed the best match between the predicted edge probability and empirical estimation~(Fig. \ref{fig:combined_results}a).

\begin{table}[t!]
\caption{F1 score and SHD of Bayesian methods on Simulated Datasets with 16 Nodes.}
\label{table:Bayesian methods F1 and SHD}
\vskip 0.15in
\begin{center}
\begin{small}
\begin{sc}
\begin{tabular}{llcccc}
\toprule
Metric&Method & LINEAR & LINEAR & NONLINEAR & NONLINEAR\\
&& & SPN &&  SPN\\
\midrule
F1 & BaCaDi   &   $0.18\pm0.02$ &   $0.22\pm0.03$   &  \underline{$0.16\pm0.03$}     &   $0.20\pm0.03$  \\
&DECI   &   $0.09\pm0.02$ &   $0.11\pm0.01$   &  $0.08\pm0.01$     &   $0.08\pm0.02$   \\
&VI-DP-DAG    &   $0.20\pm0.04$ &   $0.20\pm0.03$   &  $0.13\pm0.00$     &   $0.21\pm0.06$     \\
&ProDAG    &   $0.17\pm0.01$ &   $0.20\pm0.02$   &  \underline{$0.16\pm0.03$}  &   $0.23\pm0.05$    \\
&ABCDEFG &   \boldmath$0.74\pm0.13$ &   \boldmath$0.49\pm0.13$   &  \boldmath$0.23\pm0.31$     &   \boldmath$0.35\pm0.24$ \\
&ABCDEFG &\underline{$0.40\pm0.03$} &   \underline{$0.24\pm0.06$}   &  $0.13\pm0.13$     &   \underline{$0.30\pm0.24$} \\
&(SPN)\\
\midrule
SHD & BaCaDi   &   $108.28\pm0.95$ &   $106.50\pm1.49$   &  $109.34\pm0.82$     &   $107.78\pm0.85$  \\
&DECI   &   $37.27\pm0.76$ &   \underline{$41.89\pm5.36$}   &  $36.75\pm4.17$     &   $41.88\pm4.35$   \\
&VI-DP-DAG    &   $83.88\pm4.32$ &   $79.48\pm1.97$   &  $86.51\pm5.35$     &   $79.78\pm4.28$     \\
&ProDAG    &   $98.24\pm1.56$ &   $94.79\pm1.56$   &  $81.16\pm0.60$  &   $88.00\pm2.52$    \\
&ABCDEFG &   \boldmath$12.74\pm5.02$ &   \boldmath$29.25\pm3.57$   &  \boldmath$22.14\pm5.44$     &   \boldmath$27.68\pm0.88$ \\
&ABCDEFG  &\underline{$34.40\pm1.80$} &   $43.11\pm2.86$   &  \underline{$30.38\pm6.76$}     &   \underline{$34.31\pm3.87$} \\
&(SPN)\\
\bottomrule
\end{tabular}
\end{sc}
\end{small}
\end{center}
\vskip -0.1in
\end{table}



\begin{figure}[h!]
    \vskip -0.1in
    \centering
    \includegraphics[width=1\linewidth]{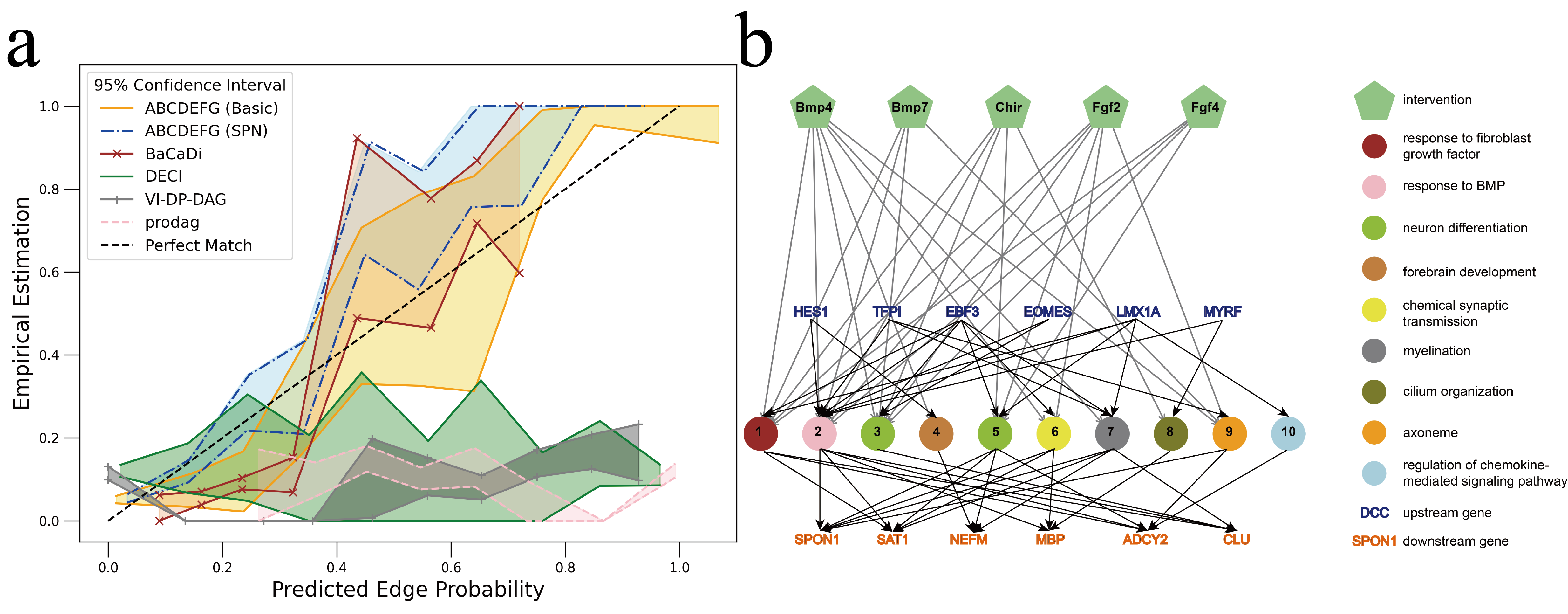}
    \caption{\textbf{Posterior calibration plot of Bayesian methods and extended factor graph inferred from growth factor screen.} \textbf{(a)} 95\% confidence intervals estimated empirically (colored regions) across the range of posterior edge probabilities for each method. The black dotted line indicates perfect calibration. \textbf{(b)} Inferred causal edges among interventions with unknown targets (growth factors; pentagons), factors (circles), and genes (text) are shown. Factor colors indicate gene ontology terms enriched in the upstream (blue) and downstream (orange) genes. Edges from interventions to factors are shown in gray arrows, and edges between genes and factors are shown in black arrows.}\label{fig:combined_results}
    \vskip -0.2in
\end{figure}
\subsection{Application to Real Cellular Perturbation Screen}
We applied our model to a large-scale single-cell perturbation screen in which cells were treated with 46 combinations of 14 growth factors \cite{AMIN20241831}. Growth factors are biomolecules that induce significant molecular changes through signaling pathways and are used to steer cells toward desired cell types in the dish. Though some downstream targets of growth factors are known, the targets are highly context-specific. The raw data contains gene expression counts for 34,469 genes in 31,475 cells. Following standard preprocessing steps for this type of data, we extracted the 1,000 most highly variable genes for causal graph inference. We used 10 factors in our model. To evaluate intervention target identification, we collected (growth factor,gene) pairs from the Gene Ontology and used these true positives to calculate recall. We cannot calculate precision because the full signaling network is unknown, so true negatives are not available. As a baseline model, we compared against random factor graphs with the same edge density as the graphs inferred by ABCDEFG. ABCDEFG achieved a recall of 0.325 (Basic) and 0.376 (SPN), significantly better than the baseline model (recall: 0.196). Second, we evaluated data reconstruction on held-out interventions. Both DCDI and ENCO failed to run on the real data. The remaining approaches DCDFG and SDCD cannot incorporate interventions with unknown targets, so we treated the data as observational when training them. We held out four intervention combinations during training, then calculated the MSE of reconstructed data on these held-out interventions. ABCDEFG achieved better MSE on the held-out samples (Basic: 0.917, SPN: 0.922) compared with DCDFG (0.957) and SDCD (1.029). Finally, we visualized the causal factor graph learned by ABCDEFG (Fig. \ref{fig:combined_results}b). 
\section{Conclusion}\label{conclusion}
ABCDEFG fills a key gap in the field by enabling scalable Bayesian causal discovery from interventional data with known or unknown intervention targets. However, we acknowledge several limitations. First, gene regulatory networks often contain cycles, violating the acyclicity assumption. Second, the f-DAG approach could poorly approximate a causal DAG when the true graph is high-rank (or when the number of factors in the f-DAG is too low). Also, our identifiability theorems do not describe the influence of sample size, though we think that our framework provides a promising foundation for future efforts to extend identifiability results into the limited data regime. ABCDEFG opens exciting new opportunities to infer gene regulatory networks and perturbation targets from large-scale cellular perturbation data.



\acks{This project was supported by NIH grant R01HG010883 to J.D.W. The authors declare that there are no competing interests.}


\newpage

\appendix


\section{Overview of Sum-Product Network} \label{supp:complexity}
Using the generative process we developed for constructing extended factor graphs, we can infer a causal DAG by optimizing a score function with respect to two binary matrices $\bY$ and $\bB$. But what is the best way to do this, given that $\bY$ and $\bB$ are discrete? One possibility is the Gumbel softmax trick~\cite{jang2017categorical}, often applied due to its simplicity. For $\bY$, we can parameterize each $Y_i$ with logits $\boldsymbol{\theta_i}$ and sample $Y_i$ using Gumbel softmax. Similarly we can treat each edge in $\bB$ as a Bernoulli random variable and sample from Gumbel softmax. However, such a naive approach treats all edges as independent and neglects possible correlation between edges. 

A more general approach is to model the joint distribution of edges in $\bB$ using a sum-product network \cite{poon2011sum}. Two naive ways to sample a binary vector $\mathbf{b}\in \{0,1\}^{d}$ are (1) sample from a single categorical distribution over all binary vectors or (2) sample each entry independently from a Bernoulli distribution. The former involves $2^d$ categories, which is impractical for large $d$, while the latter neglects dependency between any two entries and lacks expressiveness. In contrast, SPNs provide an appealing parametric model for $\bB$ due to their balance between model complexity and expressiveness. 

Let $\boldsymbol{B}=[B_1, \ldots, B_d]^T\in\{0,1\}^d$ be a random binary vector. We applied and extended the algorithm by \citet{shih2020probabilistic} to construct an SPN to model the joint distribution of $\boldsymbol{B}$. The construction of an SPN is analogous to building a neural network by sequentially adding layers. Each layer contains one type of computation nodes: (1) input node, (2) product node and (3) sum node and acts as a function of input as shown in Fig. \ref{fig:fg}a. The SPN starts with singletons $\{b_1\},\ldots,\{b_d\}$ as an initial partition. Each $b_i$ is passed to two input nodes outputting 0 and 1 respectively. Next, each product layer merges the partitions from the previous layer by creating all combinations of bit sequences for each merge. When the number of sequences from a merge exceeds a threshold, $w$, a sum layer is added to filter out sequences from the previous layer while keeping the same number of partitions. The merge filter process continues until a single partition remains. Thus, the SPN can also be interpreted as a deep mixture model whose trainable parameters are the mixture weights of all sum nodes.

The original algorithm by \citet{shih2020probabilistic} only works when $d$ is a power of two due to recursively halving the partitions, but we extended it to the general case. To do this, we divide $d$ into powers of two based on its binary representation: $d=\sum_{i=0}^{k}b_i\times 2^i$. Next, for each $b_i=1$, we build an SPN modeling joint PMF of $2^i$ bits. Finally, we apply a product and sum unit to merge the outputs from each SPN together. The number of parameters in an SPN with a maximum width of $w$ for an f-DAG with $m$ factors and $n$ nodes scales as $\bTheta\left(\frac{mnw^2}{\log w}\right) $, achieving a balance between model size and model expressiveness.

We further provide a theoretical bound on the space complexity of the SPN-FG model we used for ABCDEFG.

\textbf{Notation.}
As introduced in section \ref{subsec:spn-fg}, an SPN-FG model contains partition variables $\bY=\{Y_i:i\in[n]\}$ and connection matrix $\bB\in \{0,1\}^{n
\times m}$ parameterized by sum-product networks (SPN).  We use the following notation throughout the derivation.
\begin{enumerate}
    \item $n$: number of graph nodes.
    \item $m$: number of factors.
    \item $l$: SPN layer index
    \item $p_l$: number of partitions in the $l$-th layer of an SPN
    \item $u_l$: number of sum or product nodes in each partition in the $l$-th layer of an SPN.
    \item $w$: maximum number of bit sequences from a product node.
    \item $s$: Total number of trainable parameters of a single SPN.
    \item $S$: Total number of trainable parameters of an SPN-FG model.
\end{enumerate}
We define model complexity as the total number of trainable parameters of an SPN-FG. In our implementation, the joint PMF of either a row or a column of $\bB$ can be parameterized with a separate SPN. We consider the case of building an SPN for each row of $\bB$, i.e. each SPN models the joint distribution of connections between one node and all factors. This results in the following general formula for trainable parameters:
\begin{align}
    S = n(m+1) + ns
\end{align}
The first part $n(m+1)$ represents $n$ categorical distributions with $m+1$ categories for modeling $\bY$. The second part $ns$ represents $n$ SPNs, each having $s$ parameters and modeling a single row of $\bB$. Later, we will see that the space complexity stays the same when we choose to parameterize each column of $\bB$ with an SPN. Notice that $s$ is a function of $m$, $n$ and $w$. Next, we derive bounds of $s$.

\begin{figure}[t!]
    \centering
    \includegraphics[width=1\linewidth]{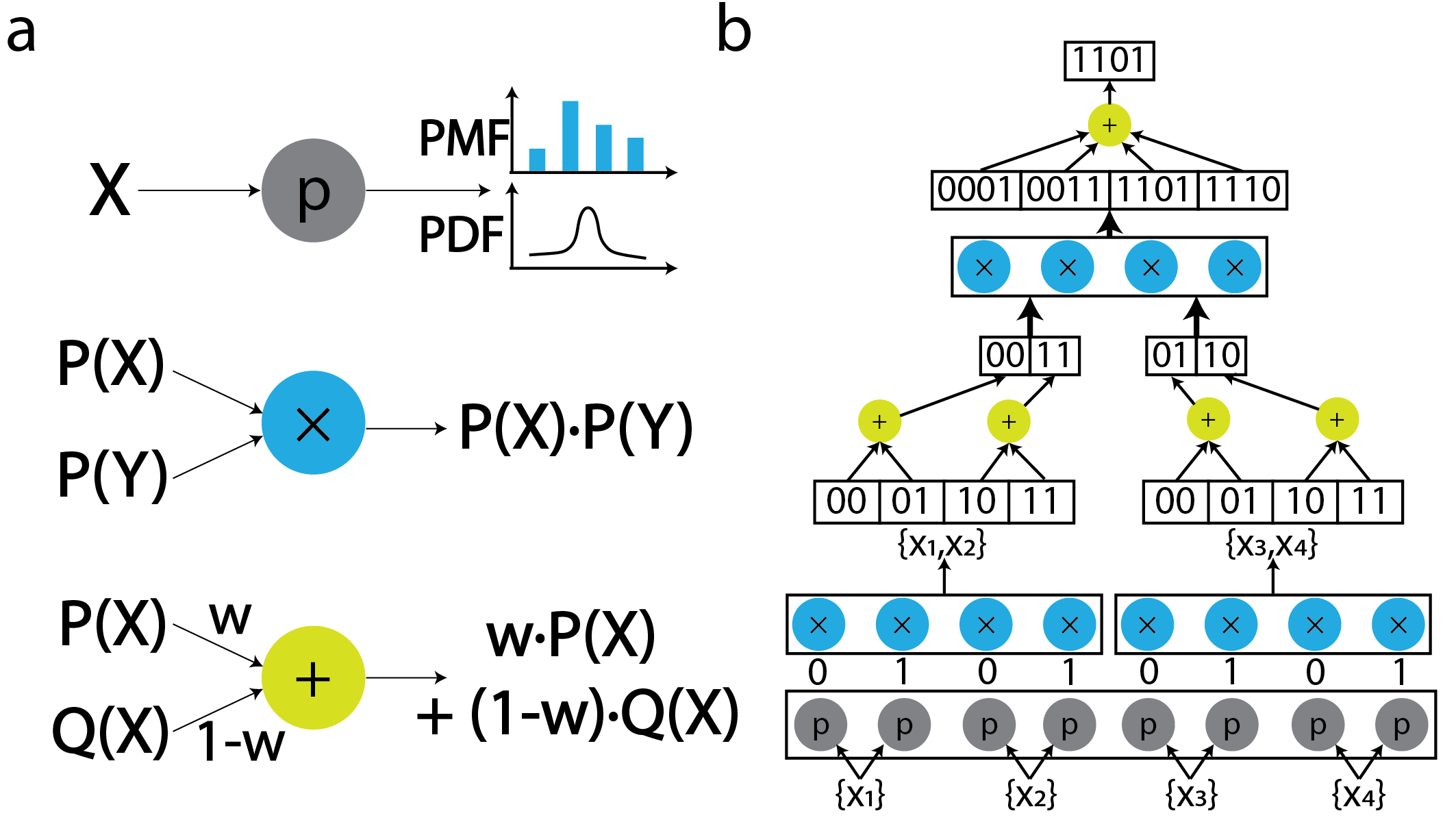}
    \caption{\textbf{Illustration of Sum-Product Network (SPN).} \textbf{(a)} Building blocks of an SPN. \textbf{Top}: An input node encodes a PMF or PDF given an input value $x$. \textbf{Middle}: A product node generates a product of input distributions as the output. \textbf{Bottom}: A sum node generates a mixture of input distributions as the output. \textbf{(b)} An Example of SPN Architecture. Assume inputs are random bits $x_1 \ldots x_4$. Input nodes generate both 0's and 1's for each bit. Next, a product layer merges $x_1, x_2$ and $x_3,x_4$ by generating all 2-bit sequences for $\{x_1, x_2\}$ and $\{x_3,x_4\}$ respectively. Then, a sum layer downsamples inputs. Finally, a product and a sum layer merge $x_1,\ldots,x_4$ together and output a 4-bit sequence.}
    \label{fig:fg}
\end{figure}

\textbf{Special Case.}
Here, we consider a special case of both $m$ and $w$ being a power of 2. Suppose $m=2^d$ and $w=2^k$. This is also the assumption in the original algorithm by \citet{shih2020probabilistic}.

We build an SPN by sequentially adding either a product or a sum layer to the network. The algorithm by Shih et al. keeps adding product layers until the Cartesian product of two partitions has a size exceeding the bound $w$. Here, we further assume $w < 2^m$ because if it's not the case, the width bound, $w$, has no effect and the SPN will be equivalent to a categorical distribution over all $2^m$ binary vectors. Once the width of an SPN exceeds $w$, we add sum and product layers alternatingly. Each sum node constraints the partition size $u_l$ to be $w$, while each product node always combines two sets of $w$ sequences into $w^2$ combinations. That is, we have
\begin{align}
    p_l &= \left\{
    \begin{array}{ll}
       \frac{1}{2}p_{l-1} &  l \leq L_0 \\
       p_{l-1} &  l > L_0, l-L_0 \; \mbox{odd (sum)}\\
       \frac{1}{2}p_{l-1} &  l > L_0, l-L_0 \; \mbox{even (product)}
    \end{array}
    \right. \label{eq:p_l_recursion}\\
    u_l &= \left\{
    \begin{array}{ll}
       u_{l-1}^2  &  l \leq L_0 \\
       w  &  l > L_0, l-L_0 \; \mbox{odd (sum)}\\
       w^2 &  l > L_0, l-L_0 \; \mbox{even (product)}
    \end{array}
    \right. \label{eq:u_l_recursion}
\end{align}
Here $L_0+1$ is the lowest index of the layer whose partition size exceeds the budget $w$, i.e. $L_0 := \max_{l}u_l \leq w $. Using Eq. \ref{eq:p_l_recursion}-\ref{eq:u_l_recursion}, we have $u_l = 2^{2^{l}}$ when $l < L_0$. This implies
\begin{align}
    L_0:= \max \{l: 2^{2^l} \leq 2^k \} \implies L_0 = \lfloor\log_2 k\rfloor.\label{eq:L0}
\end{align}

The trainable parameters of our SPN are the mixture weights of sum nodes and in each sum layer, the number of sum nodes equals the number of partitions times number of nodes for each partition. Therefore, the total number of trainable parameters of each SPN equals:
\begin{align}
    s &= p_{L_0+1}\cdot u_{L_0+1}+\sum_{l'=1}^{d-L_0-1}p_{L_0+2l'+1}\cdot u_{L_0+2l'+1}\\
      &= 2^{2^{L_0+1}}\frac{m}{2^{L_0+1}}+\sum_{l'=1}^{d-L_0-1}\frac{m}{2^{L_0+1+l'}}w^2\\
      &= 2^{2^{L_0+1}-L_0-1}m+mw^2\left(\frac{1}{2^{L_0+1}} - \frac{1}{2^{d}}\right) \label{eq:spn_size}
\end{align}
From Eq.~\ref{eq:L0}, we have
\begin{align}
    & \log_2 k - 1 < L_0 \leq \log_2 k \nonumber\\
    \iff & \frac{k}{2} < 2^{L_0} \leq k \nonumber\\
    \iff & \sqrt{w} = 2^{\frac{k}{2}} < 2^{2^{L_0}} \leq 2^k = w.
\end{align}
By plugging the upper and lower bound in the above inequality into Eq.~\ref{eq:spn_size}, we have
\begin{align}
    & \frac{mw}{2\log w} + mw^2\left(\frac{1}{2\log w} - \frac{1}{m}\right) < s < mw^2\left(\frac{2}{\log w} - \frac{1}{m}\right)\\ 
    \implies & s = \bTheta\left(\frac{mw^2}{\log w}\right)\\
    \implies & S = n(m+1)+ns = \bTheta\left(\frac{mnw^2}{\log w}\right).
\end{align}

When each SPN models a column of $\bB$ instead of a row, we have $ s=\bTheta(\frac{nw^2}{\log w}) $ and hence,
$$ S = n(m+1) + ms = \bTheta\left(\frac{mnw^2}{\log w}\right).$$

Finally, we conclude that 
$$ S = \bTheta\left(\frac{mnw^2}{\log w}\right). $$

Now we consider the alternative way of modeling each column of $\bB$ with an SPN. Then, the total number of parameters becomes
$$ S = n(m+1) + ms.$$
Following exactly the same derivation with $m$ replaced with $n$, we have each SPN $s=\bTheta(\frac{nw^2}{\log w})$ and the overall $m$ parallel SPNs have a space complexity of $S=\bTheta\left(\frac{mnw^2}{\log w}\right)$. Hence, we end up with the same space complexity.

\textbf{General Case.} The bound $s(m) = \bTheta\left(\frac{mw^2}{\log w}\right)$, and hence $S = \bTheta\left(\frac{mnw^2}{\log w}\right)$, continues to hold when $m$ (or $n$) is not a power of two, since the binary decomposition described above changes the complexity only by lower-order terms.

\section{Identifiability of Causal DAGs by ABCDEFG}\label{appx:identifiability}
In this section, we will introduce key concepts from existing literature\cite{pmlr-v80-yang18a,brouillard2020differentiable,Lopez2022largescale} and prove the identifiability of our method. Previously, Yang et al. introduced the concept of $\mathcal{I}$-Markov equivalence as an extension of Markov equivalence. Brouillard et al. proved the identifiability of $\mathcal{I}$ -Markov equivalent graphs under score maximization. Later, Lopez et al. provided a sufficient condition for a causal DAG to be unique given its corresponding f-DAG. Here, we extend the theory of causal discovery of DAGs and f-DAGs showing (1) a derivation of variational Bayes approach to causal discovery, (2) identifiability of $\mathcal{I}$-Markov equivalent causal graphs under ELBO maximization and (3) a sufficient and necessary condition for equivalence between $\mathcal{I}$-Markov equivalence of f-DAGs and $\mathcal{I}$-Markov equivalence of their half-square graphs.

\subsection{Theoretical Foundation for Bayesian Causal Discovery of Factor DAGs}\label{appx_sub:id_foundation}
We first introduce concepts about causal discovery and factor DAG as from DCDI~\citet{brouillard2020differentiable} and DCD-FG~\cite{Lopez2022largescale}. 
\begin{definition}[\citet{Lopez2022largescale}]
    Given a set of nodes, $V$, and factors, $F$, a factor directed acyclic graph (f-DAG), denoted as $(V,F,E)$, is a directed acyclic graph $(V\cup F,E)$ where edges $E\subset\{(i,j):i\in V, j\in F \mbox{ or } i\in F, j\in V\}$.
\end{definition}
An f-DAG is a DAG with two different types of vertices, nodes and factors. All edges connect two vertices of different types. Alternatively, if we represent an f-DAG using an adjacency matrix $\mathbf{A}$, we can use $\U$ and $\V$ to represent node-to-factor and factor-to-node adjacency matricies. Then we have $\mathbf{A} = \U\circ\V$ where $\circ$ denotes the matrix Boolean product. Furthermore, we can condense an f-DAG to a node-only graph as defined below.
\begin{definition}[\citet{Lopez2022largescale}]
    Given an f-DAG, $D=(V,F,E)$, its half-square node graph is defined as $D^2[V]=(V,\{(i,j):\exists f\in F, (i,f),(f,j)\in E\})$, and half-square factor graph is defined as $D^2[F]=(F,\{(f,g):\exists i\in V, (f,i), (i,g) \in D \})$.
\end{definition}
A half-square graph essentially keeps all dependency relations between nodes in the original factor graph. The factors can be interpreted as intermediate nodes on the paths between causally-related observations. We also note that the mapping from the set of f-DAGs to half-square graphs is a surjection.

Denote $par(\cdot;D)$ and $chd(\cdot;D)$ as the set of parent and child nodes in any graph $D$. 
\begin{definition}
    Let $G=(V, E)$ be any graph, $\forall f\in V$, the set of unique parents and children of $f$ are defined as $P_{f}(G):=\{i: i\in par(f;G), chd(i;G)=\{f\}\}$  and $C_{f}(G):=\{j: j\in chd(f;G), par(j;G)=\{f\}\}$.
\end{definition}
With the above definition, we define a subset of f-DAGs:

Given a set of causally related random variables $\X=\{X_1,\ldots,X_n\}$ with a causal graph $G$. A fundamental assumption of a causal DAG underlying $\X$ is the Markov property, which leads to a factorization of the joint distribution. Here, we denote $\boldsymbol{\pi_i}$ as the set of all parents of $i$ in $G$.
\begin{definition}[\citet{brouillard2020differentiable}]
    Let $G=(V,E)$ be a causal DAG with $n$ nodes and $\calI^*=\{I_k:k\in[l]\}$ be a set of interventions. We define $\mathcal{M}_{\calI^*}(G)$ as the set of joint distributions factorized according to the Markov property, i.e. $ \truemarginal:=\{\{\pk:k\in[n^{\calIstar}]\}: \pk(\X)=\prod_{i=1}^{n}\pk(X_i|\X_{\boldsymbol{\pi_i}})\}$.
\end{definition}
By convention, $I_1=\emptyset$ represents a pure observational setting.

Based on the definition above,  \citet{brouillard2020differentiable} defined a type of equivalence relation called $\calI$-Markov equivalence relation to describe DAG equivalence under interventions.
\begin{definition}[$\calI$-Markov Equivalence~\cite{brouillard2020differentiable}] \label{def:imarkov_eq}
    Two DAGs $G_1$ and $G_2$ are $\calI$-Markov equivalence if and only if $\calM_{\calI}(G_1)=\calM_{\calI}(G_2)$. We denote by $\calI$-MEC$(G)$ as the set of all DAGs which are $\calI$-Markov equivalent to $G$.
\end{definition}
In the rest of section B, we use the notation $\simeqint$ to denote $\calI$-Markov equivalence relation.

Since we consider the set of f-DAGs, the causal relations between $i$ and $j$ are passed through latent factors. Denote $\boldsymbol{\pi_i^D}$ as the set of parents of a vertex $i$(node or factor) in the graph $D$. Next, we use a continuous random variable $\Z=\{Z_1,\ldots,Z_m\}$ to represent the factors. Then, we have a class of joint distributions of $\X$ and $\Z$ produced by an f-DAG.
\begin{definition}[Family of Distributions associated with an f-DAG]
    Let $D=(V,F,E)$ be an f-DAG with $n$ nodes and $m$ factors. Then, $\calM_{\calIstar}(D)$ is defined as the set of probabilistic models with the following form: 
    \begin{align}
        \calM_{\calIstar}(D) = \left\{\{\pk(\X,\Z):k\in[n^{\calIstar}]\}: \pk(\X,\Z)=\prod_{i=1}^{n}\pk(X_i|\Zpar)\prod_{j=1}^{m}\pk(Z_j|\Xpar)\right\}
    \end{align}
    where $\pk(X_i|\boldsymbol{X_{\pi_i^D}}) \neq p^{(1)}(X_i|\boldsymbol{X_{\pi_i^D}})$ if and only if $i\in I_k$ and $\pk(Z_j|\Xpar) \neq p^{(1)}(Z_j|\Xpar) $ if and only if $j\in I_k$.
\end{definition}
The above definition assumes knowledge of the intervention targets. When interventions are unknown, we are able to extend f-DAGs in a similar way to the $\mathcal{I}$-DAG introduced by Yang et al.\cite{pmlr-v80-yang18a}. We first mention the concept of $\mathcal{I}$-DAG and then extend it to f-DAGs.
\begin{definition}[\citet{pmlr-v80-yang18a}]
    Let $G=(V,E)$ be a DAG and $\mathcal{I}=\{I_1,\ldots,I_{n^{\calI}}\}$ be a set of interventions with $I_k\subseteq V,\forall k$. An interventional DAG ($\mathcal{I}$-DAG) is defined as an augmented graph
    $$ \Gint=(V\cup \Xi, E\cup E^{\calI}), $$
    where $\Xi := \{\xi_k:k\in[n^{\calI}]\}$ is a set of intervention nodes representing $I_1,\ldots,I_k$ and $E^{\calI}\subseteq\{(\xi_k, i): i\in I_k, k\in[n^{\calI}]\}$ is a set of edges from interventions to targets.
\end{definition}
\begin{definition}[Extended f-DAG]
    Let $D=(V, F,E)$ be an f-DAG and $\mathcal{I}=\{I_1,\ldots,I_{n^{\calI}}\}$ be a set of interventions. Let $\Xi=\{\xi_k, k\in[n^{\calI}]\}$ be $n^{\calI}$ nodes corresponding to the $l$ interventions. An extended f-DAG is defined as an f-DAG $\Dint=(V\cup \Xi, F, E\cup E^{I})$ where $E^{\mathcal{I}}\subseteq\{(\xi_k, f): f\in F\}$, i.e. set of edges from intervention nodes to factors.
\end{definition}
An extended f-DAG is obtained by adding intervention nodes to an f-DAG. Here, we also have low-rank assumption that interventions causally affects downstream nodes via a small number of factors. Put in a matrix form, the adjacency matrix of an extended f-DAG has a low-rank Boolean matrix factorization as
$$ \mathbf{A}^{\boldsymbol{\calI}} = 
\begin{bmatrix}
    \mathbf{U} \\
    \mathbf{W}
\end{bmatrix} \circ
\begin{bmatrix}
    \mathbf{V} \; \mathbf{0}_{\boldsymbol{m} \times \boldsymbol{n^{\calI}}}    
\end{bmatrix},$$
where $\mathbf{W}\in \mathbb{R}^{n^{\calI}\times m}$ is an adjacency matrix representing edges from intervention nodes to factors.

Given the definition of $\truefdagmarginal$ and $\calI$-Markov equivalence, we can further define $\calI$-Markov equivalence relation between f-DAGs.
\begin{definition}[$\mathcal{I}$-Markov Equivalence Class of f-DAGs]\label{def:imarkov_eq_fdag}
    Given a set of interventions, $\calI$, two f-DAGs $D_1$ and $D_2$ are $\mathcal{I}$-Markov equivalent if $\calM_{\calI}(D_1)=\calM_{\calI}(D_2)$.
\end{definition}
The concept of $\calM_{\calI}(D)$ and $\calI$-Markov equivalence for f-DAGs are just the same as those for DAGs except for classifying vertices into nodes and factors.

The following theorem regarding the concept of $\mathcal{I}$-DAG connects statistical independence to graph structures.
\begin{theorem}[\citet{pmlr-v80-yang18a}]\label{thm:yang}
    Two DAGs $G_1$ and $G_2$ belong to the same $\mathcal{I}$-Markov Equivalence Class ($\mathcal{I}$-MEC) if and only if their $\mathcal{I}$-DAGs have the same skeleton and v-structures.
\end{theorem}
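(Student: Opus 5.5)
The plan is to reduce the statement to the classical Verma--Pearl characterization of Markov equivalence, applied to the augmented graphs $\Gint_1$ and $\Gint_2$. The intervention nodes $\xi_k$ are treated as exogenous ``regime'' indicators. The reduction needs an intermediate characterization of $\calM_{\calI}(G)$ purely in terms of d-separation in $\Gint$. Call this the \emph{$\calI$-Markov property}. A family $\{\pk\}$ has it if it satisfies two kinds of constraints:
\begin{enumerate}
\item $X_A \ind X_B \mid X_C$ under every $\pk$ whenever $A$ and $B$ are d-separated by $C$ in $G$;
\item $\pk(X_A\mid X_C)=p^{(1)}(X_A\mid X_C)$ whenever $C\cup\Xi\setminus\{\xi_k\}$ d-separates $A$ from $\xi_k$ in $\Gint$.
\end{enumerate}

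\textbf{Step 1.} First I would show that $\{\pk\}\in\calM_{\calI}(G)$ if and only if the family satisfies the $\calI$-Markov property with respect to $\Gint$.
\begin{itemize}
\item For the forward direction, build a joint distribution over $(X,\xi)$ by mixing the regimes with positive weights. This joint factorizes along $\Gint$, because only the conditionals of targets in $I_k$ depend on the regime. The global Markov property of DAGs then yields both kinds of constraints.
\item For the converse, use the local Markov property in $\Gint$. Each $X_i$ is d-separated from its non-descendants, including $\xi_k$ for $i\notin I_k$, given $X_{\pi_i}$ and the remaining $\xi$'s. This recovers the product form with unchanged conditionals off the targets.
\end{itemize}
After Step 1, $\calM_{\calI}(G_1)=\calM_{\calI}(G_2)$ is equivalent to $\Gint_1$ and $\Gint_2$ inducing the same admissible d-separation statements. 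Here the admissible statements are those in which every $\xi$ not being tested appears in the conditioning set. The equivalence uses a faithfulness-type existence result: for each $\Gint$ there is a family in $\calM_{\calI}(G)$ that violates every constraint not implied by d-separation. I would obtain this by a standard genericity argument, for instance linear Gaussian parameters with generic coefficients and regime-specific changes on the targets.

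\textbf{Step 2.} Show that same skeleton and same v-structures imply the same admissible d-separations. This follows directly from Verma--Pearl applied to $\Gint_1$ and $\Gint_2$ as ordinary DAGs: equal skeleton and v-structures give identical d-separation relations over all of $V\cup\Xi$, hence in particular identical admissible ones.

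\textbf{Step 3.} For the reverse direction, show that identical admissible d-separations force identical skeletons and v-structures. This is where I expect the main obstacle. Verma--Pearl uses arbitrary d-separations, whereas here only the restricted class is available. I would treat the three kinds of pairs separately.
\begin{itemize}
\item \emph{Node--node pairs.} Adjacency between $i,j\in V$ in $G$ is detected by the absence of any separating set within $V$, which is an unrestricted statement.
\item \emph{Pairs $(\xi_k,i)$.} Adjacency is detected by the absence of any $C\subseteq V\setminus\{i\}$ such that $C\cup\Xi\setminus\{\xi_k\}$ separates them. This works because $\xi$'s are sources with no mutual edges. Conditioning on the other $\xi$'s therefore never opens a path, since no $\xi$ is a collider. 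Consequently that conditioning cannot destroy the existence of a separating set.
\item \emph{Pairs $(\xi_k,\xi_l)$.} These are never adjacent, so they need no test.
\end{itemize}
For v-structures $a\to c\leftarrow b$, including those with $a=\xi_k$, I would reproduce the usual argument. Take a separating set $S$ for the nonadjacent pair $a,b$; the triple is a collider iff $c\notin S$, and such an $S$ can be chosen admissibly. The delicate case is $a=\xi_k$, $b=\xi_l$: such a pair is never separable by an admissible query. I would handle it by noting that any $c$ with two $\xi$-parents has both incoming edges already oriented (edges out of $\xi$'s are fixed). Hence this v-structure is determined by the skeleton alone, which completes the argument.
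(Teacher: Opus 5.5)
The paper does not prove this statement. It is quoted from Yang et al.\ (2018) and used as a black box in Corollary~\ref{cor:yang} and Lemma~\ref{supp:lemma:1}, so there is no in-paper argument to compare against. Your outline follows the standard route of that source: characterize $\calM_\calI(G)$ by an $\calI$-Markov property on $\Gint$, then reduce both directions to Verma--Pearl on $\Gint_1,\Gint_2$. Steps 2 and 3 are correct. Your admissible separating-set criterion for the v-structures $\xi_k\to c\leftarrow i$ is exactly what is needed there. Testing $(\xi_k,i)$ adjacencies is superfluous, because both $\calI$-DAGs carry the same edges $\xi_k\to i$, $i\in I_k$, by construction. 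Note also that the whole argument relies on $I_1=\emptyset$ (type-(2) constraints compare with $p^{(1)}$), which the paper builds into its convention.

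One sub-step of Step 1 is wrong as written. If you realize the regimes as a mixture, the indicators $\xi_k$ are one-hot and hence dependent. The joint then does not factorize along $\Gint$, where the $\xi_k$ are independent sources. Worse, with $\xi_1$ (for $I_1=\emptyset$) in $\Xi$, fixing $\Xi\setminus\{\xi_k\}$ determines $\xi_k$. The statement $X_A\ind\xi_k\mid X_C,\Xi\setminus\{\xi_k\}$ then says nothing about $p^{(k)}$ versus $p^{(1)}$.

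Replace the mixture by independent Bernoulli indicators. Given $X_{\pi_i}$ and its $\xi$-parents, let $X_i$ follow:
\begin{itemize}
\item $p^{(k)}(X_i\mid X_{\pi_i})$ when $\xi_k$ is the only active parent;
\item $p^{(1)}(X_i\mid X_{\pi_i})$ when none is active;
\item anything when several are active.
\end{itemize}
This joint does factorize along $\Gint$, and its conditionals at $\xi=e_k$ and $\xi=0$ are exactly $p^{(k)}$ and $p^{(1)}$. The global Markov property then yields both kinds of constraints.

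The faithfulness-type existence claim should likewise be made concrete through a single joint model. Take a linear Gaussian SEM on $(X,\xi)$ over $\Gint$, with interventions acting as mean shifts; it is faithful to $\Gint$ for generic parameters. Since the regimes then differ only in their means, d-connection of $A$ and $\xi_k$ given $C\cup\Xi\setminus\{\xi_k\}$ gives a nonzero coefficient on $\xi_k$ in $\mathbb{E}[X_A\mid X_C,\xi]$. That is, $p^{(k)}(x_A\mid x_C)\neq p^{(1)}(x_A\mid x_C)$, which is what your Step 1 equivalence needs.
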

Since f-DAGs are one type of DAG, we easily obtain the following corollary.
\begin{corollary}\label{cor:yang}
    Two f-DAGs $D_1$ and $D_2$ belong to the same $\mathcal{I}$-MEC if and only if their extended f-DAGs have the same skeleton and v-structures.
\end{corollary}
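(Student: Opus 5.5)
The plan is to derive the corollary directly from \cref{thm:yang}. The argument checks that an f-DAG together with its interventions is an admissible instance of that theorem, and that the augmented graph the theorem refers to is the extended f-DAG.

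First, I will regard an f-DAG $D=(V,F,E)$ as an ordinary DAG on the vertex set $V\cup F$, forgetting the distinction between nodes and factors. By Definition~\ref{def:imarkov_eq_fdag}, $\calI$-Markov equivalence of f-DAGs is defined through $\calM_{\calI}(D)$. This family is the Markov factorization of the joint law of $(\X,\Z)$ over the DAG $(V\cup F,E)$, with interventional mechanisms changed exactly on the targeted vertices. So $D_1\simeqint D_2$ as f-DAGs if and only if $(V\cup F,E_1)$ and $(V\cup F,E_2)$ are $\calI$-Markov equivalent as DAGs in the sense of Definition~\ref{def:imarkov_eq}. I will state this identification explicitly, noting that only the vertex labelling differs.

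Second, I will apply \cref{thm:yang} to these two DAGs. This gives equivalence if and only if their $\calI$-DAGs have the same skeleton and v-structures. Here the interventions target factors: $E^{\calI}\subseteq\{(\xi_k,f)\}$. Under that convention, the $\calI$-DAG $(V\cup F\cup\Xi, E\cup E^{\calI})$ is literally the extended f-DAG $\Dint=(V\cup\Xi,F,E\cup E^{\calI})$, since both have the same vertex set and edge set. Substituting yields the corollary.

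The only real obstacle is this matching of intervention targets. In $\calM_{\calI}(D)$ a target may a priori be a node or a factor, whereas the extended f-DAG only allows edges $\xi_k\to f$. I would handle this by adopting the modelling convention of the extended f-DAG definition, namely that interventions act on factors. The two graphs then coincide, and the corollary is immediate. If node targets were also allowed, the same argument would go through with the $\calI$-DAG edges $\xi_k\to i$ included; the identification with the Yang et al.\ augmented graph would still hold verbatim.
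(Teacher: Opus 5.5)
Your proposal is correct and is essentially the paper's proof: you forget the node/factor labels to view each f-DAG as an ordinary DAG on $V\cup F$, note that $\calM_{\calI}$ is unchanged by this relabelling, apply \cref{thm:yang}, and identify the resulting $\calI$-DAG with the extended f-DAG. Your explicit convention that the targets are factors, so that $E^{\calI}=\{(\xi_k,f): f\in I_k\}$, spells out what the paper compresses into ``$D^{\calI}$ and $G^{\calI}$ have exactly the same structure''; only your closing aside overreaches, because with node targets an edge $\xi_k\to i$ with $i\in V$ is not permitted in an extended f-DAG as defined, so the identification would no longer hold verbatim.
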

\begin{proof}
    Suppose $D_1$ and $D_2$ have $n$ nodes and $m$ factors. Let $G_1$ and $G_2$ be two DAGs obtained by removing the labeling of node or factor in $D_1$ and $D_2$, i.e. we treat all nodes and factors as simply nodes in $G_1$ and $G_2$. We still keep the bijection between vertices and random variables $\X=\{X_i:i\in[n]\}$ and $\Z=\{Z_j:j\in[m]\}$. We have
    \begin{align*}
        & D_1\in \mathcal{I}\mbox{-MEC}(D_2) \\
        \iff & \mathcal{M}_\mathcal{I}(D_1) = \mathcal{M}_\mathcal{I}(D_2) \\
        \iff & \mathcal{M}_\mathcal{I}(G_1) = \mathcal{M}_\mathcal{I}(G_2)  \\
        \iff & G_1\in \mathcal{I}\mbox{-MEC}(G_2) \\ \iff & G_1^\calI \text{ and } G_2^\calI \text{ have the same skeleton and v-structures} \\
        \iff & D_1^\calI \text{ and } D_2^\calI \text{ have the same skeleton and v-structures}
    \end{align*}
    The second line is by definition \ref{def:imarkov_eq_fdag}. The third line implication is by the fact $D^\calI$ and $G^\calI$ have exactly the same structure. The fourth line is by definition \ref{def:imarkov_eq}. The fifth line is by Theorem \ref{thm:yang}. The last line is again by the identical structure between $D^\calI$ and $G^\calI$.
\end{proof}

In reality, we can use a single encoder function to get $Z_j \sim p(f_{enc}(\Uj\odot\X;\bTheta))$ and $X_i\sim p(f_{dec}(\Vi\odot\X;\bPhi))$ to represent the conditional distribution $\pk(X_i|\Zpar)$ and $\pk(Z_j|\Xpar)$. Thus, we define a second set of joint distributions representing our model capacity.
\begin{definition}[Family of Parametric Distributions associated with an f-DAG]
    Let $D=(V,F,E)$ be an f-DAG with $n$ nodes and $m$ factors. Consider two parametric functions $f_{enc}:\mathbb{R}^n\rightarrow\mathbb{R}^m$, parameterized by $\bTheta\in\Omega(\bTheta)$ and $f_{dec}:\mathbb{R}^m\rightarrow\mathbb{R}^n$, parameterized by $\bPhi\in\Omega(\bPhi)$. In addition, let $\U$ and $\V$ be node-to-factor and factor-to-node matrices of an f-DAG $D$. Then, $\calF_{\calIstar}(D)$ is defined as the set of probabilistic models with the following form: 
    \begin{align}
        \calF_{\calIstar}(D) = \left\{\{\fk(\X,\Z):k\in[n^{\calIstar}]\}: \fk(\X,\Z)=\prod_{i=1}^{n}\fk(X_i|\Zpar)\prod_{j=1}^{m}\fk(Z_j|\Xpar)\right\},
    \end{align}
    where $\fk(Z_j|\Xpar)=p(f_{enc}(\Uj\odot\X))$, $\fk(X_i|\Zpar)=p(f_{enc}(\Vi\odot\Z))$, $\fk(X_i|\Zpar) \neq f^{(1)}(X_i|\Zpar)$ if and only if $i\in I_k$ and $\fk(Z_j|\Xpar) \neq f^{(1)}(Z_j|\Xpar) $ if and only if $j\in I_k$.

\end{definition}

\subsection{Derivation of Bayesian Framework for Differentiable Causal Discovery}\label{appx_sub:id_bayesian}
\label{supp:bayes}
We present a Bayesian framework for differentiable causal discovery and show that it reduces to score maximization under a uniform prior over the space of DAGs.

Consider a set of causally related random variables $\X=\{X_i:i\in[n]\}$ and a random intervention set $I^*\subseteq [n]$. First, we assume the observations are generated from a single causal graph $G^*$ via a generative model $p(\X|G^*,I^*)$. We assume each intervention either removes edges towards targets (hard) or keeps the same graph structure (soft). Thus, the generative model becomes $p(\X|G^*,I^*)$ under different interventions. When $I^*$ is known, we can obtain a MAP estimate of $G$:
\begin{align}
    \hat{G} = \arg\max_{G\in\calG}p(G|\X,I^*).
\end{align}
In order to convert this optimization problem to a differentiable one, we consider a variational distribution $q(G)$ and optimize a KL divergence instead:
\begin{align}
    \hat{G} = \arg\max_{G}q^*(G) \mbox{ where } q^*(G) = \arg\min_{q(G)}KL(q(G)||p(G|\X,I^*)).
\end{align}
Because we have control over $q(G)$, finding its argmax will be easy. Directly optimizing $KL(q(G)||p(G|\X))$ suffers from the intractability problem since $p(G|\X,I^*)=\frac{p(\X|G,I^*)p(G|I^*)}{\sum_{G'}p(\X|G',I^*)p(G'|I^*)}$ and the space of DAGs is super-exponential in the number of nodes. Thus, we can derive an alternative objective in the following form:
\begin{align}
    &KL(q(G)||p(G|\X,I^*))\nonumber\\
    &= \mathbb{E}_{p(\X,I^*)}\left[\mathbb{E}_{q(G)}\left[\log\frac{q(G)}{p(G|\X,I^*)}\right]\right]\nonumber\\
    &= \mathbb{E}_{p(\X,I^*)}\left[\mathbb{E}_{q(G)}\left[\log\frac{q(G)p(\X|I^*)}{p(\X|G,I^*)p(G|I^*)}\right]\right]\nonumber\\
    &= \mathbb{E}_{p(\X,I^*)}\left[\log p(\X|I^*) - \mathbb{E}_{q(G)}\left[\log p(\X|G,I^*)\right] + KL(q(G)||p(G|I^*)) \right]\nonumber\\
    \implies &\min_{q(G)}KL(q(G)\;||\;p(G|\X,I^*)) \nonumber\\
    & = \max_{q(G)}\mathbb{E}_{p(\X,I^*)}\left[\mathbb{E}_{q(G)}\left[\log p(\X|G,I^*)\right] - KL(q(G)||p(G|I^*))\right]\nonumber\\
    &= \max_{q(G)}ELBO(G)
\end{align}
In reality, $p(\X,I^*)$ is replaced with an empirical distribution from any dataset. For $I^*$, we can conduct additional experiments by perturbing some nodes $I_k$ in the $k$-th experiment. For the empirical data distribution, we assume the data samples are generated from $p(\X|G^*)$ instead of $p(\X)$. The data samples are not drawn from the marginal over $\X$ because we assume a single causal graph $G^*$ underlying the data generative process. We use parametric $\fk(X_i|\boldsymbol{X_{\pi_i^G}};\bPhi)$ for distributional fitting and $q(G;\bLambda)$ for graph fitting. Here, $\pi_i$ is the set of all parents nodes of node i in a graph $G$. In addition, we need to add an L1 regularization on $G$ to account for the sparsity constraint. Now the optimization problem becomes:
\begin{align}
    \sup_{\bLambda, \bPhi} \sum_{k=1}^{n^{\calI}}\mathbb{E}_{\pk(\X|G^*)}\left[\mathbb{E}_{\qkG}\left[\log \fk(\X|G;\bPhi)\right] - KL(\qkG||p^{(k)}(G))\right] - \lambda \mathbb{E}_{\qkG}[|G|]
\end{align}
The objective function is similar to the one proposed in the VAE paper~\cite{KingmaW13} except that we have a latent space of DAGs instead of a low-dimensional latent embedding. In addition, we assume interventions change neither the prior graph distribution nor our variational posterior. The objective can be extended to that of a $\beta$-VAE:
\begin{align}
    &\sup_{\bPhi, \bLambda} \sum_{k=1}^{n^{\calI}}\mathbb{E}_{\pk(\X|G^*)}\left[\mathbb{E}_{\qG}\left[\log \fk(\X|G;\bPhi)\right] \right] - \beta KL(\qG||p(G))- \lambda \mathbb{E}_{\qG}[|G|]\nonumber\\
    =&\sup_{\bPhi, \bLambda} \mathbb{E}_{\qG}\left[\sum_{k=1}^{n^{\calI}}\mathbb{E}_{\pk(\X|G^*)}\left[\log \fk(\X|G;\bPhi)\right] - \lambda |G| \right] - \beta KL(\qG||p(G))
\end{align}
Notice that the score function is under the expectation of $\qG$. If we set $\beta=0$ and $\qG=\delta(G)$, the Dirac delta function, the optimization problem becomes exactly the same as a score maximization problem as presented in previous score-based methods. The constraint on $\qG$ ensures that $\qG$ does not deviate from the prior arbitrarily. Next, we will prove the identifiability of this Bayesian framework.

\begin{theorem}[\citet{brouillard2020differentiable}]\label{thm:supp:dcdi}
    Let $\X=\{X_1,\ldots,X_n\}$ be a set of causally related random variables with a causal DAG $G^*=(V,E)$ and $\calIstar=\{I_k:k\in[n^{\calIstar}]\}$ be a set of interventions with $I_1=\emptyset$. Assume the following:
    \begin{enumerate}
        \item The set of distributions from our parametric models contains the ground truth interventional distributions: $\{\pk(\X):k\in[n^{\calIstar}]\}\in\calF_{\calIstar}(G^*)$ where $\calF_{\calIstar}(G^*)=\{\{\fk(\X|G^*;\bPhi)\}: \bPhi\in\Omega(\bPhi)\}$.
        \item Denote $\ind_{G^*}$ as the d-separation relation in $G^*$. $\calI$-faithfulness contains the following two conditions. 
        \begin{enumerate}
            \item For any disjoint set $A,B,C\subset V$, $\X_{\boldsymbol{A}} \ind \X_{\boldsymbol{B}} | \X_{\boldsymbol{C}} \implies A \ind_{G^*} B | C$
            \item For any disjoint sets $A,C\subset V$ and $k\in[n^{\calIstar}]$, $\pk(\X_{\boldsymbol{A}}|\X_{\boldsymbol{C}})=p^{(1)}(\X_{\boldsymbol{A}}|\X_{\boldsymbol{C}}) \implies A \ind_{{G^*}^{\calI^*}} \xi_k | C$
        \end{enumerate}
        \item $\forall G, I, \bPhi, \fk(\X|G,I;\bPhi)>0$.
        \item $\forall k\in[n^{\calIstar}]$, $\left|\mathbb{E}_{\pk(\X)}\left[\log \pk(\X)\right]\right| < +\infty$.
    \end{enumerate}
    Define the score function as 
    $$\scoreIstar = \sup_{\bPhi}\sum_{k=1}^{n^{\calIstar}}\mathbb{E}_{\pk(\X)}\left[\log \fk(\X|G;\bPhi)\right]-\lambda |G|$$
    Then, with a small enough $\lambda>0$, we have $\scoreIGstar > \scoreIstar$.
\end{theorem}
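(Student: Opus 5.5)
Theorem~\ref{thm:supp:dcdi} is the DCDI identifiability result of \citet{brouillard2020differentiable}, restated here in our notation. I would reproduce its argument rather than invoke it, since it is the engine behind Theorem~\ref{thm:elbo_identify}. The claim to establish is that $S_{\calIstar}(G^*) > S_{\calIstar}(G)$ for every $G$ not $\calIstar$-Markov equivalent to $G^*$, for small $\lambda>0$. (For equivalent $G$ with the same number of edges the scores tie; this is all the ELBO argument needs.)

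\textbf{Step 1: rewrite the likelihood part of the score as KL divergences.} Let $\bPhi$ range over $\Omega(\bPhi)$. Then
\[
\sum_k \mathbb{E}_{\pk}[\log \fk(\X|G;\bPhi)] = \sum_k \mathbb{E}_{\pk}[\log \pk(\X)] - \sum_k KL\bigl(\pk\,\|\,\fk(\cdot|G;\bPhi)\bigr).
\]
Assumption 4 makes the first sum a finite constant that does not depend on $G$. Assumption 3 makes each KL term well defined. So the score equals a constant, minus $\inf_{\bPhi}\sum_k KL$, minus $\lambda|G|$.

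\textbf{Step 2: the true graph attains zero divergence.} By sufficient capacity (Assumption 1), some $\bPhi$ makes every KL term zero for $G^*$. Hence $S_{\calIstar}(G^*) = C - \lambda|G^*|$.

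\textbf{Step 3: a wrong graph has a strictly positive divergence gap.} Suppose $G \notsimeqintstar G^*$. By Theorem~\ref{thm:yang}, the $\calIstar$-DAGs of $G$ and $G^*$ differ in skeleton or in v-structures. Then there is a d-separation statement that holds in $G^{\calIstar}$ but fails in $(G^*)^{\calIstar}$. It is either a conditional independence among the $\X$, or an invariance of the form $\pk(\X_A|\X_C) = p^{(1)}(\X_A|\X_C)$. Every family $\{\fk(\cdot|G;\bPhi)\}$ satisfies all constraints encoded by $G^{\calIstar}$. By $\calI$-faithfulness (Assumption 2), the true family $\{\pk\}$ violates at least one of them.

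The key claim is $\inf_{\bPhi}\sum_k KL(\pk\|\fk(\cdot|G;\bPhi)) =: \eta(G) > 0$. The natural tool is the I-projection: the minimizing KL over the closed set of distributions Markov to $G^{\calIstar}$ equals the KL to the projected factorization, namely the product of the true conditionals $\pk(X_i|\X_{\pi_i^G})$, sharing across regimes the conditionals not targeted by $\xi_k$. That projection differs from $\pk$ because the independence or invariance fails, so the minimum is positive. This is the step I expect to be the main obstacle. An infimum over a possibly non-compact parameter set could a priori reach $0$. I would handle this as DCDI does: lower-bound the infimum by the infimum over \emph{all} distributions in $\calM_{\calIstar}(G)$, which is attained by the projection and is strictly positive.

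\textbf{Step 4: choose $\lambda$.} There are finitely many DAGs, and the edge counts satisfy $|G|,|G^*|\le n^2$. Then
\[
S_{\calIstar}(G^*) - S_{\calIstar}(G) \ge \eta(G) - \lambda(|G^*|-|G|) \ge \eta(G) - \lambda n^2.
\]
Choosing $0<\lambda<\min_{G\notsimeqintstar G^*}\eta(G)/n^2$ makes this positive for every $G$ not $\calIstar$-Markov equivalent to $G^*$. The minimum is over a finite set of positive numbers, so it is positive, which completes the argument.
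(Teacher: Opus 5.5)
\textbf{There is a genuine gap in Step 3, and Step 4 depends on it.} For this statement the paper gives no proof; it cites Brouillard et al. You claim that if $G\notsimeqintstar G^*$, then some d-separation holds in $G^{\calIstar}$ but fails in $(G^*)^{\calIstar}$. Non-equivalence does not give you that, because the mismatch can run only in the other direction. The simplest case is a strict supergraph $G\supsetneq G^*$, obtained by adding an edge consistent with a topological order of $G^*$. Every d-connecting path of $(G^*)^{\calIstar}$ stays d-connecting in $G^{\calIstar}$, so every d-separation of $G^{\calIstar}$ already holds in $(G^*)^{\calIstar}$. Hence $\{\pk\}\in\calM_{\calIstar}(G)$, and $\eta(G)$ can be exactly $0$, for instance whenever the masked conditional models can ignore an extra parent. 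The minimum in Step 4 can then be $0$, which leaves no admissible $\lambda$. A quick diagnostic: your argument never uses $\lambda>0$, so it would equally prove the statement at $\lambda=0$, where such supergraphs tie with $G^*$.

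\textbf{The missing idea} is that for these graphs the sparsity penalty, not the likelihood, separates them from $G^*$. That requires a combinatorial lemma. Split the non-equivalent $G$ into two cases.

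\emph{Case (a).} Some constraint encoded by $G^{\calIstar}$ (a conditional independence or an invariance) fails in $(G^*)^{\calIstar}$. Then your faithfulness-plus-projection argument gives $\eta(G)>0$.

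\emph{Case (b).} Every such constraint also holds in $(G^*)^{\calIstar}$, so $G^{\calIstar}$ is an I-map of $(G^*)^{\calIstar}$. Vertices adjacent in $(G^*)^{\calIstar}$ are never d-separated there, so they must be adjacent in $G^{\calIstar}$. Hence the skeleton of $G^{\calIstar}$ contains that of $(G^*)^{\calIstar}$. Suppose the skeletons were equal and there were a v-structure mismatch $a\to c\leftarrow b$. You would then get a set separating $a,b$ in $G^{\calIstar}$ but not in $(G^*)^{\calIstar}$: any separating set of $G^{\calIstar}$ (which must contain $c$) if the collider is in $(G^*)^{\calIstar}$, and the parents of the later of $a,b$ (which exclude $c$) if it is in $G^{\calIstar}$. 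That contradicts the I-map property, so equal skeletons would force $\calIstar$-Markov equivalence by Theorem~\ref{thm:yang}. With known targets both I-DAGs carry the same intervention edges, so the strict containment is among edges on $V$ and $|G|\ge|G^*|+1$. Therefore $S_{\calIstar}(G^*)-S_{\calIstar}(G)=\eta(G)+\lambda(|G|-|G^*|)\ge\lambda>0$ for \emph{every} $\lambda>0$.

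Only after this do you choose $\lambda<\min\eta(G)/n^2$, taken over the finitely many case-(a) graphs, each of which has $\eta(G)>0$. This is why the theorem needs $\lambda$ both positive and small.

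A smaller imprecision: in your projection, a node that $G^{\calIstar}$ treats as unaffected across several regimes gets a shared conditional. That conditional is the marginal-weighted mixture of those regimes' conditionals, not any single true conditional. The resulting divergence still vanishes only if $\{\pk\}\in\calM_{\calIstar}(G)$, which is all you need.
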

The previous theorem claims optimality of the score function when the causal DAG is treated as a deterministic object. Next, we give a probabilistic view of this optimality. First, we define the Bayesian score function as follows.
\begin{definition}[Bayesian Score Function]
     Let $\X=\{X_1,\ldots,X_n\}$ be a set of causally related random variables with a causal DAG $G^*$ and $\calIstar=\{I_k:k\in[n^{\calI}]\}$ be a set of interventions with $I_1=\emptyset$. Let $p(G)$ be a prior over DAGs and $\qG$ be a variational distribution. The Bayesian score function, $\calL(\qG)$ is defined as
     $$ \calL(\qG)=\mathbb{E}_{\qG}\left[\scoreIstar\right] - \beta KL(\qG||p(G))$$
     where $\scoreIstar$ is the score function defined in Theorem \ref{thm:supp:dcdi}.
\end{definition}

\begin{theorem}[Identifiability via ELBO maximization]\label{thm:supp:elbo_identify}
    Let $\X=\{X_1,\ldots,X_n\}$ be a set of causally related random variables with a causal DAG $G^*$ and $\calIstar=\{I_k:k\in[n^{\calI}]\}$ be a set of interventions with $I_1=\emptyset$. Let $\calG$ be a subset of all causal DAGs and $q^*(G)$ be an optimal graph distribution from the optimization problem:
    $$ \sup_{\qG:supp(q)\subseteq\calG}\calL(\qG), $$
    where
    \begin{align*}
        & \calL(\qG) = \mathbb{E}_{\qG}\left[\scoreIstar\right] - \beta KL(\qG||p(G)),\\
        & \scoreIstar = \sup_{\bPhi}\sum_{k=1}^{n^{\calI}}\mathbb{E}_{\pk(\X)}\left[\log \fk(\X|G;\bPhi)\right]-\lambda |G|.
    \end{align*}
    If $G^*\in\calG$, then, under the same assumptions as those in Theorem \ref{thm:supp:dcdi}, for small enough $\beta > 0$ and small enough $\lambda > 0$, $\hat{G}=\arg\max_{G}q^*(G)$ is $\calIstar$-Markov equivalent to $G^*$.
\end{theorem}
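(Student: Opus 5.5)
We argue by contradiction, using \Cref{thm:supp:dcdi} for the deterministic score gap. Fix $\lambda>0$ small enough that \Cref{thm:supp:dcdi} applies. Since $\calG$ is finite, the gap
$$\delta:=\min_{G\in\calG,\;G\notsimeqintstar G^*}\bigl(\scoreIGstar-\scoreIstar\bigr)$$
is then strictly positive. (Strictly, \Cref{thm:supp:dcdi} gives the strict inequality for $G$ outside the $\calIstar$-MEC of $G^*$.) Let $q^*$ be an optimizer and suppose $\hat G=\arg\max_G q^*(G)$ is not $\calIstar$-Markov equivalent to $G^*$. Then $q^*(\hat G)\ge q^*(G^*)$, and since $\hat G$ is the argmax, $q^*(\hat G)\ge 1/|\calG|>0$.

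\textbf{Perturbation.} For $\epsilon\in(0,q^*(\hat G)]$, define $q_\epsilon$ by moving mass $\epsilon$ from $\hat G$ to $G^*$, leaving every other graph unchanged. Note $G^*\in\calG$, so $q_\epsilon$ is feasible. The expected-score term changes linearly:
$$\mathbb{E}_{q_\epsilon}[S]-\mathbb{E}_{q^*}[S]=\epsilon\bigl(\scoreIGstar-S_{\calIstar}(\hat G)\bigr)\ge\epsilon\delta.$$
For the KL term, write $h(x)=x\log(x/p)$. The change is
$$\Delta_\epsilon=\bigl[h(q^*(G^*)+\epsilon)-h(q^*(G^*))\bigr]+\bigl[h(q^*(\hat G)-\epsilon)-h(q^*(\hat G))\bigr].$$
Using positivity of the prior $p(G)>0$ on $\calG$ (assumed implicitly for the KL to be finite), $h$ is continuous on $[0,1]$. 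Hence $|\Delta_\epsilon|\le C$ uniformly, with $C$ depending only on $\min_G p(G)$ and $|\calG|$. For instance $C\le 2(\log|\calG|+\max_G|\log p(G)|+1/e)$ suffices.

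\textbf{Choosing $\beta$.} Take the full transfer $\epsilon=q^*(\hat G)\ge 1/|\calG|$, or any fixed fraction of it. Then
$$\calL(q_\epsilon)-\calL(q^*)\ge \frac{\delta}{|\calG|}-\beta C.$$
This is positive whenever $\beta<\delta/(|\calG|C)$, a threshold independent of $q^*$. The resulting strict improvement contradicts optimality of $q^*$, so $\hat G\simeqintstar G^*$.

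\textbf{Main obstacle.} The delicate step is making the bound on the KL change uniform in the unknown optimizer $q^*$, so that one $\beta$ works for every maximizer. The approach above handles this by transferring a non-vanishing amount of mass, rather than an infinitesimal $\epsilon$ where derivatives of $x\log x$ near $0$ could blow up. It needs only the lower bound $q^*(\hat G)\ge 1/|\calG|$ and finiteness of $\calG$ with a positive prior. A secondary point is existence of the supremum's maximizer. This follows from compactness of the simplex over finite $\calG$ and continuity of $\calL$.

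Ties in the argmax cause no trouble: the argument applies to any maximizer $\hat G$ outside the $\calIstar$-MEC of $G^*$.
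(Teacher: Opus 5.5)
Your proof is correct and follows the paper's route: shift mass $\epsilon$ from $\hat G$ to $G^*$, gain $\epsilon\bigl(S_{\calIstar}(G^*)-S_{\calIstar}(\hat G)\bigr)$ in expected score, and make the $\beta$-weighted KL change smaller by taking $\beta$ small. Your refinement is worth keeping: you take $\epsilon=q^*(\hat G)\ge 1/|\calG|$ and bound the KL change uniformly, giving a threshold $\delta/(|\calG|C)$ that does not depend on $q^*$. The paper instead picks $\beta<\epsilon\Delta/(KL(q'\|p)-KL(q^*\|p))$ only after fixing $q^*$, even though $q^*$ itself depends on $\beta$; your version removes that circularity.
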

\begin{proof}
    We prove this theorem by contradiction. Suppose $\exists \hat{G}=\arg\max_{G}q^*(G)$ such that $\hat{G} \not\simeqintstar G^*$. 
    
    Consider another PMF $q'(G)$ which has the same support and same mass as $q^*(G)$ except for $q'(G^*) - q^*(G^*) = \epsilon > 0$ and $q'(\hat{G}) - q^*(\hat{G}) = -\epsilon < 0$. Because $q^*(\hat{G}) > 0$, such $q'$ and $\epsilon$ exist. By the definition of $q^*$, $\calL(q^*) \geq \calL(q')$. Then, we have
    \begin{align}
        &\calL(q')-\calL(q^*) \nonumber\\
        &= \left[\mathbb{E}_{q'(G)}\left[\scoreIstar \right] - \beta KL(q'(G)||p(G))\right] - \left[\mathbb{E}_{q^*(G)}\left[\scoreIstar \right] - \beta KL(q^*(G)||p(G))\right]\nonumber\\
        &= \sum_{G\in\calG}(q'(G) - q^*(G))\scoreIstar + \beta \left[KL(q^*(G)||p(G)) - KL(q'(G)||p(G))\right] \nonumber\\
        &= \epsilon \left(\scoreIGstar - S_{\calIstar}(\hat{G})\right) + \beta \left[KL(q^*(G)||p(G)) - KL(q'(G)||p(G))\right].
    \end{align}
    By Theorem \ref{thm:supp:dcdi}, $\exists \lambda > 0$ such that $\scoreIGstar > \scoreIstar, \forall G\notsimeqintstar G^*$. Therefore, $\scoreIGstar - S_{\calIstar}(\hat{G}) = \Delta > 0$. If $\sum_{k=1}^{n^{\calIstar}}\left[KL(q^*(G)||p(G)) - KL(q'(G)||p(G))\right] \geq 0$, we already have $\calL(q') > \calL(q^*)$. Otherwise, we can pick
    $$ 0 < \beta < \frac{\epsilon\Delta}{KL(q'(G)||p(G) - KL(q^*(G)||p(G)))}$$
    and $\calL(q') > \calL(q^*)$. Both cases contradict the fact that $\calL(q^*) \geq \calL(q')$. Therefore, we conclude that $G^*$ must be a argmax of $q$.
\end{proof}
Notice that we add a constraint on the support of $\qG$ to account for cases when we have prior knowledge about the DAG and only need to search over a subset. As discussed below, this applies when the true causal DAG is a half-square graph of an f-DAG. If we set $\calG$ to the set of all DAGs, the constraint will be removed.

ABCDEFG aims at optimizing $KL(q(D^2[V])||p(G|\X))$ with respect to a distribution on f-DAGs instead of DAGs. As long as the adjacency matrix of the true causal DAG can be factorized as a Boolean product of a node-to-factor and factor-to-node matrices, optimization over f-DAGs guarantees identifiability of the true causal DAG, as a half-square graph of an optimal f-DAG.

\subsection{Identifiability of the True f-DAG}\label{appx:sub_iden_fdag}
So far, our theory has covered the major interest of causal discovery. However, if we also assume there is a ground truth for the f-DAG, it is not obvious that identifying the causal DAG is equivalent to identifying an f-DAG. Identifying causal connection between nodes and latent factors may have important implications in applications such as computational biology, where genes often function together through some pathways. Thus, we introduce additional theoretical results about f-DAG identifiability in this section.


The first question we would like to ask is: Is $\calI$-Markov equivalence between f-DAGs equivalent to $\calI$-Markov equivalence between their half-square graphs? To answer this question, we first need to introduce some notations and new concepts.
\begin{enumerate}
    \item We denote $par(\cdot;D)$ and $chd(\cdot;D)$ as the set of parents and children of a vertex in an f-DAG, $D$
    \item We denote $\simeq$ as the Markov equivalence and $\simeq_\mathcal{I}$ as the $\mathcal{I}$-Markov equivalence relation.
    \item For any factor $f$ in an f-DAG $G$, we define the set $P_f(G)=\{i: chd(i)=\{f\}\}$ as the set of parents with  $f$ as the unique child in $G$.
    \item For any factor $f$ in an f-DAG $G$, we define the set $C_f(G)=\{i: par(i)=\{f\}\}$ as the set of children with only $f$ as the unique parent in $G$.
\end{enumerate}

In fact, not every f-DAG can be identified up to an $\calI$-Markov equivalence class. We consider a subset of f-DAGs defined as follows.
\begin{definition}\label{appx:def_dm}
    Let $D=(V,F,E)$ be an f-DAG where $|F|=m$. Let $\calI$ be a set of interventions. $\calD_m$ is defined as the set of f-DAGs with $m$ factors and the following properties:
    \begin{enumerate}
        \item $\forall f\in F$, $P_f(D)\neq\emptyset$ and $C_f(D)\neq\emptyset$.
        \item $\forall f_1, f_2 \in F, f_1\neq f_2, f_1\rightarrow f_2 \in D^2[F]$, $|P_{f_1}(D)|>1$ (inclusively) or $|P_{f_2}(D)|>1$.
        \item $\forall f\in F$, if $|par(f;D)| > 1$, there is at most one factor $g\in par(f;D^2[F])$ such that $|par(g;D)| = 1$.
    \end{enumerate}
\end{definition}
Intuitively, the three additional conditions for f-DAGs mean
\begin{enumerate}
    \item Any factor should have a unique parent and unique child that distinguish it from other factors.
    \item There cannot be adjacent ``chain" or ``tree" structures in the f-DAG.
    \item There should be enough v-structures in the f-DAG.
\end{enumerate}
We give three counterexamples (Fig. \ref{appx:fig_violate_1}-\ref{appx:fig_violate_3}) when each of the three conditions is violated. In these cases, the f-DAGs are no longer Markov equivalent but their half-square graphs are Markov equivalent. Thus, we note the three conditions are all necessary for proving the identifiability of f-DAGs.
\begin{figure}[h!]
    \centering
    \includegraphics[width=\linewidth]{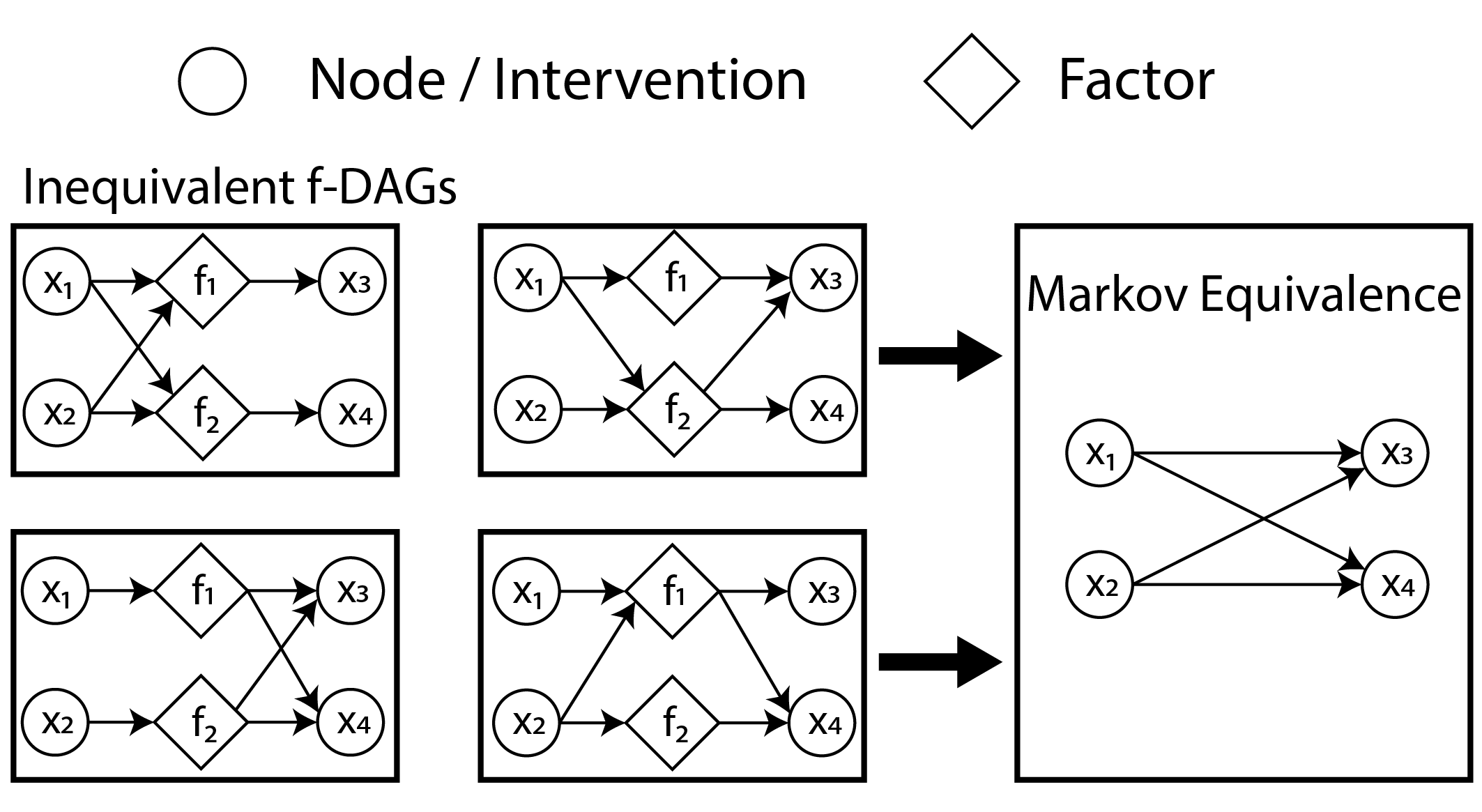}
    \caption{\textbf{Markov equivalence of DAGs does not imply Markov equivalence of f-DAGs when condition 1 in Def. \ref{appx:def_dm} is violated.}}
    \label{appx:fig_violate_1}
\end{figure}
\begin{figure}[h!]
    \centering
    \includegraphics[width=\linewidth]{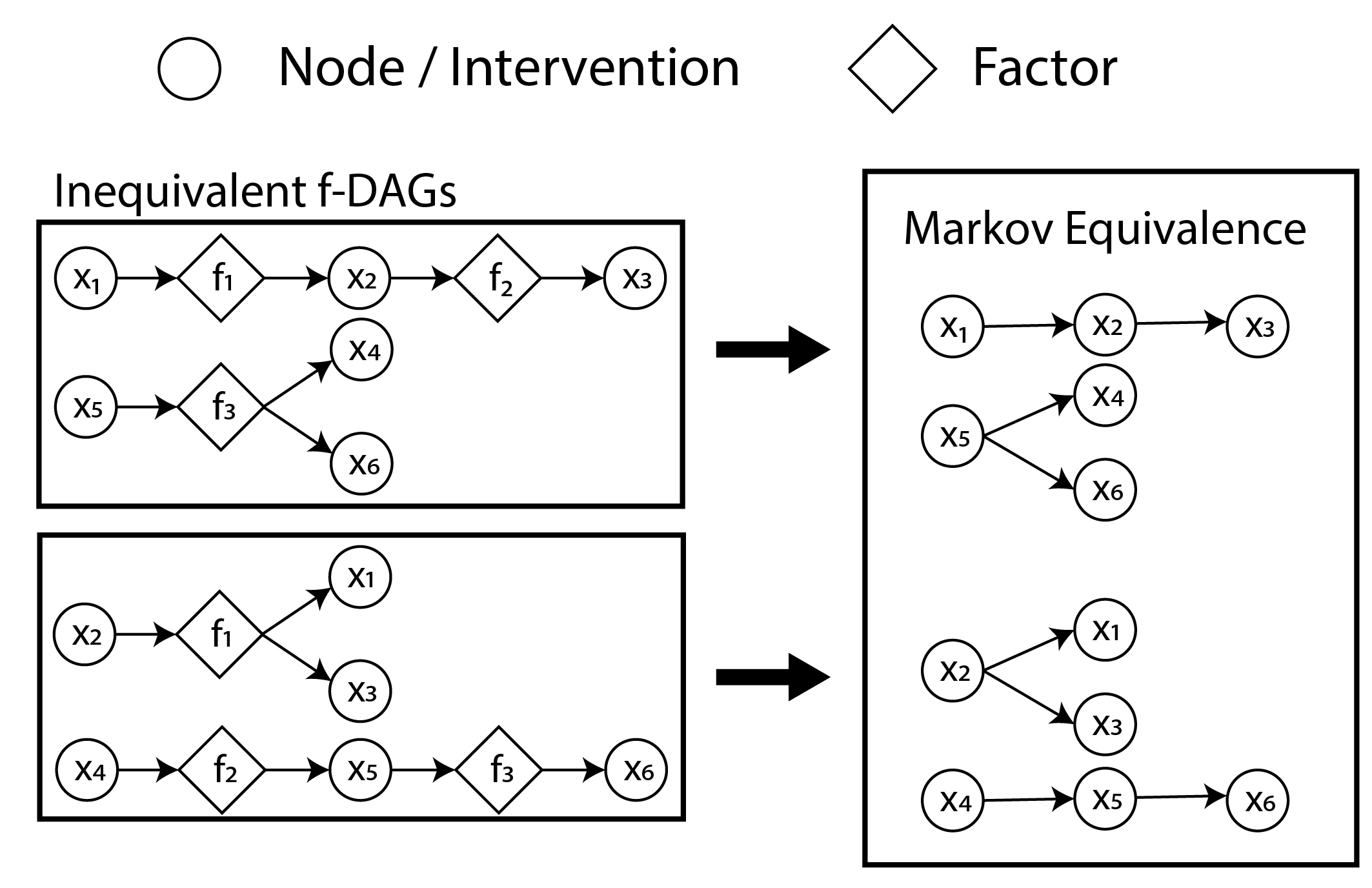}
    \caption{\textbf{Markov equivalence of DAGs does not imply Markov equivalence of f-DAGs when condition 2 in Def. \ref{appx:def_dm} is violated.}}
    \label{appx:fig_violate_2}
\end{figure}
\begin{figure}[h!]
    \centering
    \includegraphics[width=\linewidth]{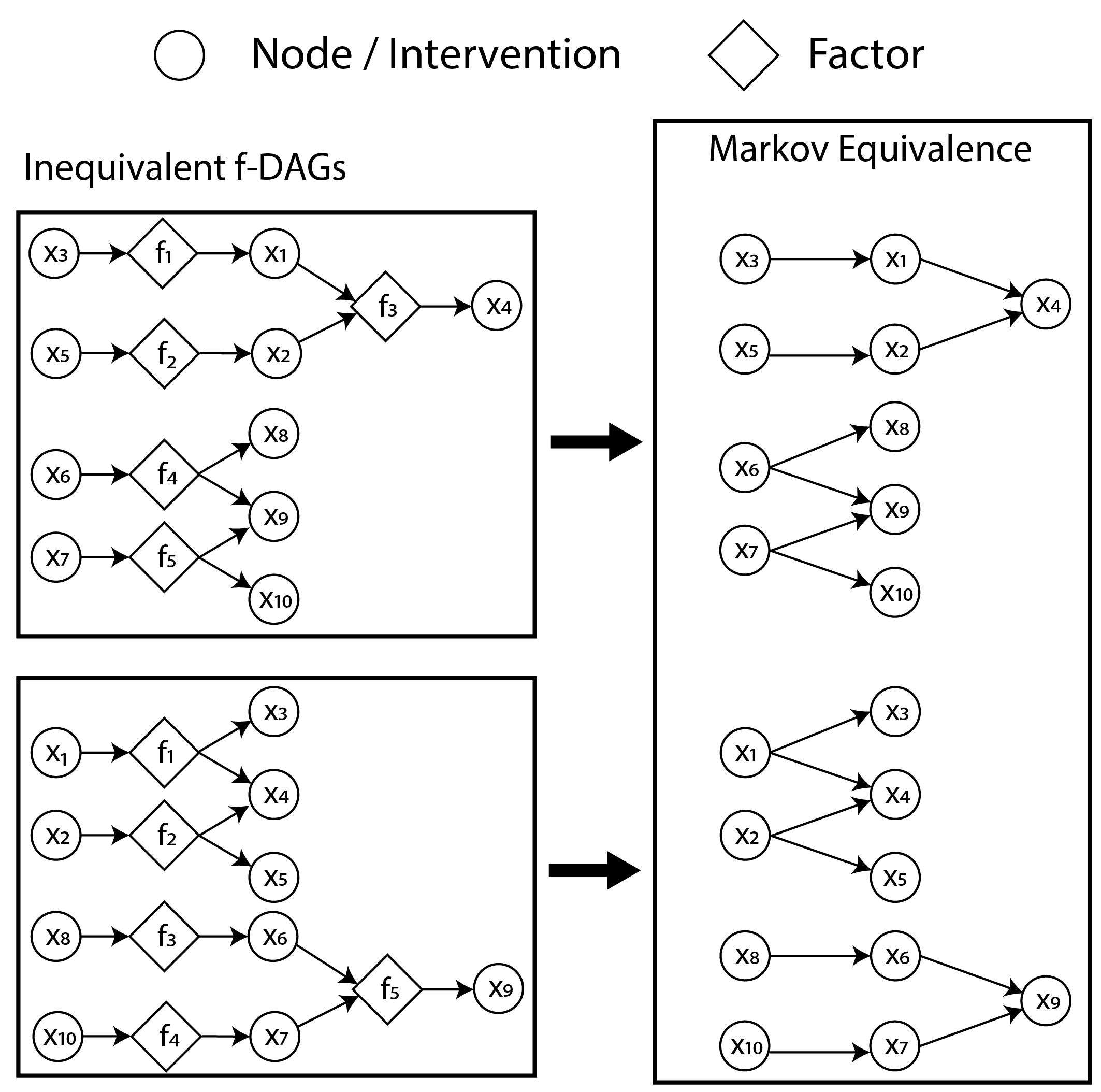}
    \caption{\textbf{Markov equivalence of DAGs does not imply Markov equivalence of f-DAGs when condition 3 in Def. \ref{appx:def_dm} is violated.}}
    \label{appx:fig_violate_3}
\end{figure}

\begin{definition}
    Let $G=(V,E)$ be a DAG which can be represented as a half-square graph of an f-DAG with $m$ factors. $\calG_m$ is defined as the set of all DAGs having an identifiable f-DAG representation:
    $$ \calG_m:=\{G:\exists D\in\calD_m, G=D^2[V]\}$$
\end{definition}

Now we present the following lemma.
\begin{lemma}\label{supp:lemma:1}
    Let $D_1=(V,F,E_1)$ and $D_2=(V,F,E_2)$ be two f-DAGs on the same set of nodes, $V$, and factors, $F$, and $\mathcal{I}$ be a set of interventions. Let $\Xi=\{\xi_k:k\in[n^{\calI}]\}$ be the intervention nodes. In addition, suppose $D_1, D_2\in \calD_m$ defined as in Def. \ref{appx:def_dm}. Then, under a permutation of factors, we have $D_1\simeq_\mathcal{I}D_2\iff D_1^2[V]\simeq_\mathcal{I}D_2^2[V]$.
\end{lemma}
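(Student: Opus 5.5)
The plan is to reduce both sides of the equivalence to purely graphical statements and then compare skeletons and v-structures. By Corollary~\ref{cor:yang}, $D_1\simeqint D_2$ holds iff the extended f-DAGs $\Dint_1,\Dint_2$ have the same skeleton and v-structures. By Theorem~\ref{thm:yang}, $D_1^2[V]\simeqint D_2^2[V]$ holds iff the extended half-square graphs $\Doneintsq,\Dtwointsq$ have the same skeleton and v-structures. So the whole lemma becomes a combinatorial statement about the bipartite DAGs $\Dint_i$ and their half-squares. Throughout, we work with the extended versions, so that intervention nodes $\xi_k$ play the role of additional source nodes attached to factors.

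\textbf{Forward direction.} Assume $\Dint_1,\Dint_2$ share skeleton and v-structures, and argue by contradiction.
\begin{itemize}
\item \emph{Skeleton.} An edge $i\to j$ in $\Doneintsq$ means there is a factor $f$ with $i-f-j$ in $\Dint_1$. That length-two path exists in the common skeleton. Since $(i,f,j)$ is not a v-structure of $\Dint_1$, it is not one in $\Dint_2$ either. Hence $f$ is a non-collider in $\Dint_2$, so $i$ and $j$ are adjacent in $\Dtwointsq$.
\item \emph{Orientation.} I expect that when the two graphs have the same orientation of the edges at $f$, the half-square edge has the same direction.
\item \emph{V-structures.} A v-structure $i\to k\leftarrow j$ in $\Doneintsq$ (with $i,j$ nonadjacent) comes from factors $f,g$ with $i\to f\to k$ and $j\to g\to k$. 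I will show that the common v-structures of $\Dint_1$ and $\Dint_2$ (colliders at factors, or at node $k$ when $f\neq g$) force the same pattern in $\Dint_2$.
\end{itemize}
The conditions of $\calD_m$ should be used here to rule out reorientations of chains $i-f-k$ that preserve v-structures.

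\textbf{Reverse direction.} Assume $\Doneintsq\simeqint\Dtwointsq$. The goal is to construct a bijection $\phi:F\to F$ matching parent and child sets, up to flipping factors that have exactly one parent and one child. Partition $F$ into three classes:
\begin{itemize}
\item $F_1$: factors with one parent and one child;
\item $F_2$: factors with one parent and several children;
\item $F_3$: factors with several parents.
\end{itemize}
Each class is then handled by its own proposition (\cref{prop:1}, \cref{prop:2}, \cref{prop:3}).

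\emph{Recovering $F_1$ and $F_2$.} Condition 1 of Def.~\ref{appx:def_dm} gives each factor $f$ a private parent $p\in P_f$ and a private child $c\in C_f$. The half-square edges out of $p$ are then exactly $\{p\}\times chd(f)$, and the edges into $c$ are exactly $par(f)\times\{c\}$. So $par(f)$ and $chd(f)$ can be read off the half-square graph from the neighbourhoods of $p$ and $c$.
\begin{itemize}
\item For $f\in F_1$, this identifies the single half-square edge $p-c$ up to orientation.
\item For $f\in F_2$, the children of $p$ form a "star". Condition 2 prevents it from being merged with an adjacent chain or tree factor, which would otherwise make the factorization ambiguous.
\end{itemize}

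\emph{Recovering $F_3$ (main obstacle).} For $f\in F_3$, the parents of $f$ are pairwise nonadjacent in the half-square graph, since the graph is bipartite through factors. Together with $c$ they therefore form v-structures, so the orientation is forced. The difficulty is showing that $D_2$ has a single factor with exactly this parent and child set, rather than covering the same biclique $par(f)\times chd(f)$ with several factors. This is where condition 3 enters: at most one parent factor is a chain or tree factor, so the v-structures pin down the biclique. I would try to prove this by taking a maximal biclique containing $(p,c)$ and using the rank budget of exactly $m$ factors, together with the injectivity from private parents and children, to force a one-to-one cover.

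Once the three bijections are combined, $\Dint_1$ and $\Dint_2$ agree on every factor up to flipping factors in $F_1$. Such a flip preserves the skeleton and creates no v-structure, so $D_1\simeqint D_2$ by Corollary~\ref{cor:yang}.
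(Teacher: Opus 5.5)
\paragraph{Gap in the forward direction.}
Your reduction to skeletons and v-structures of the extended graphs (Corollary~\ref{cor:yang}, Theorem~\ref{thm:yang}) matches the paper's. So does your reverse-direction architecture: the partition into $F_1,F_2,F_3$, one matching per class, and flipping the one-parent/one-child factors. The genuine gap is in the forward direction, at the step ``$f$ is a non-collider in $\Dint_2$, so $i$ and $j$ are adjacent in $\Dtwointsq$''. A non-collider can be a fork $i\leftarrow f\rightarrow j$, which yields no half-square edge. Nothing in Def.~\ref{appx:def_dm} forbids turning a chain into a fork.

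In fact the forward implication is false. Take $V=\{a,b,c\}$, $F=\{f\}$, $\calI=(I_1)$ with $I_1=\emptyset$ and $E^{\calI}=\emptyset$. Let $D_1$ have edges $a\to f$, $f\to b$, $f\to c$, and let $D_2$ have edges $c\to f$, $f\to a$, $f\to b$.
\begin{itemize}
\item Both lie in $\calD_1$: $P_f(D_1)=\{a\}$, $C_f(D_1)=\{b,c\}$, $P_f(D_2)=\{c\}$, $C_f(D_2)=\{a,b\}$, and conditions 2 and 3 are vacuous.
\item Both are the star centred at $f$ with no collider, so $D_1\simeqint D_2$.
\item Yet $D_1^2[V]$ has edges $a\to b$, $a\to c$, while $D_2^2[V]$ has edges $c\to a$, $c\to b$. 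The skeletons differ, and with a single factor no permutation helps. Disjoint copies give counterexamples for every $m$.
\end{itemize}
The paper's forward argument makes the same inference (``no such factor in $D_2$'' $\Rightarrow$ skeleton mismatch), so this half cannot be repaired. Only the reverse implication, which is all Theorem~\ref{thm:iden_fdag} uses, is a viable target.

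\paragraph{Missing arguments in the reverse direction.}
Your sketch names the decomposition but not the arguments that make it work.

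\emph{Classes $F_1$ and $F_2$.} Reading $par(f)$ and $chd(f)$ off the out-edges of a private parent $p$ uses orientations in $\Doneintsq$. But $\Dtwointsq$ is only known to share skeleton and v-structures, and the skeleton neighbourhood of $p$ also contains its half-square parents. The substance of Propositions~\ref{supp:prop:1} and~\ref{supp:prop:2} is exactly to exclude reorientations in $\Dint_2$:
\begin{itemize}
\item For $i\to j\to k$, they rule out a factor $k\to j'\to i$ carrying an extra child. This uses condition 2 on the factor of $\Dint_1$ supplying the conflicting edge, together with condition 3.
\item For $j\in F_2$, they show three things. Every edge between $i$ and $chd(j)$ points away from $i$. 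All these edges pass through a single factor $j'$; otherwise a private parent of a covering factor creates a spurious v-structure at a private child of $j$. And $j'$ has no extra children.
\end{itemize}

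\emph{Class $F_3$.} The paper uses no factor count here. It argues that the edges $P_j\times C_j$ are realised in $\Dint_2$ through a set $F_C$ of factors. It rules out $|F_C|>1$ by four subcases, according to whether two covering factors share a parent in $P_j$ and/or a child in $C_j$. With $|F_C|=1$ it proves $P_j=P_f$ and $C_j=C_f$, and then equality of the full parent and child sets. The global budget of $m$ factors gives no such local information, so your biclique idea would still need this case analysis.

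\emph{A false claim in your $F_3$ step.} You say the parents of an $F_3$ factor are pairwise nonadjacent. That holds only for private parents. A non-private parent $y$ can reach a private parent $p$ through another factor $y\to g\to p$, and this is compatible with Def.~\ref{appx:def_dm}. So the v-structure you invoke at a private child of $f$ exists only when some pair in $par(f)$ is nonadjacent (for example when $|P_f|\ge 2$); it is not automatic.
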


\begin{proof} (
    By Theorem \ref{thm:yang}, we convert the proof of $\calI$-Markov equivalence to proof of equal graph structure.
    
    Let $D_1^\mathcal{I}$ and $D_2^\mathcal{I}$ be the corresponding extended $f$-DAGs. 

    First, we prove the forward direction by contradiction. Suppose $D_1\simeq_\mathcal{I}D_2$ but $\Doneintsq\notsimeqint\Dtwointsq$. Then, either there is an edge $(i,j)$ in $\Doneintsq$, $(i,j)$ not in $\Dtwointsq$ or a v-structure $i\rightarrow k \leftarrow j$ mismatch between $\Doneintsq$ and $\Dtwointsq$. In the former case, $\exists f\in F$ such that $i\rightarrow f \rightarrow j$ in $D_1$, but no such factor in $D_2$. This leads to a mismatch in skeleton between $D_1$ and $D_2$ and results in contrdiction. 
    
    In the latter case, without loss of generality, we assume $i\rightarrow k \leftarrow j$ in $\Doneintsq$ but $i\rightarrow k \leftarrow j$ not in $\Dtwointsq$. In addition, we can also assume the skeletons of $\Doneintsq$ and $\Dtwointsq$ match. Otherwise, we will be in the former case again. Let $f_1$ and $f_2$ be the factors \ such that $i\rightarrow f_1 \rightarrow k$ and $j\rightarrow f_2 \rightarrow k$ in $\Dint_1$.
    
    If $f_1 = f_2$, we have a v-structure $i \rightarrow f_1 \leftarrow j$ in $\Dint_1$. Because $D_1\simeq_{\calI} D_2$ and $i\rightarrow k \leftarrow j$ not in $\Dtwointsq$, $i \rightarrow f_1 \leftarrow j$ and $k \rightarrow f_1$ must exist in $\Dint_2$. This implies two additional v-structures $k\rightarrow f_1 \leftarrow i$ and $k\rightarrow f_1 \leftarrow j$ in $\Dint_2$ but not in $\Dint_1$, contradicting the fact $D_1 \simeq_{\calI} D_2$.
    
    If $f_1 \neq f_2$, we have a v-structure $f_1 \rightarrow k \leftarrow f_2$ in $\Dint_1$ and $\Dint_2$. Because $\Doneintsq$ and $\Dtwointsq$ share the same skeleton but not the v-structure $i \rightarrow k \leftarrow j$, either $f_1 \rightarrow i$ or $f_2 \rightarrow j$ exists in $\Dint_2$. Without loss of generality, we assume $f_1 \rightarrow i$ in $\Dint_2$, now that we have $k \leftarrow f_1 \rightarrow i$ in $\Dint_2$, there must be another factor $f'$ such that $k \rightarrow f' \rightarrow i$ in $\Dint_2$. If $f' = f_2$, there will be a cycle between $k$ and $f_2$. Thus, $f'\neq f_2$. Now we have a v-structure $f_1 \rightarrow i \leftarrow f'$ in $\Dint_2$ but not in $\Dint_1$, leading to a contraction. 
    
    Therefore, we conclude that $D_1\simeq_\mathcal{I}D_2\implies D_1^2[V]\simeq_\mathcal{I}D_2^2[V]$.
    
    Next, we prove the reverse direction. We first claim and prove propositions about the three types of factors. Before presenting the propositions, we repeat important notations here:
    \begin{enumerate}
        \item $par(i;D)$: the set of all parents of node $i$ in an f-DAG $D$.
        \item $chd(i;D)$: the set of all children of node $i$ in an f-DAG $D$.
        \item $P_{i}(D):=\{h: chd(h;D)=\{i\}\}$: the set of parents \textbf{unique} to node $i$ in an f-DAG $D$.
        \item $C_{i}(D):=\{j: par(j;D)=\{i\}\}$: the set of children \textbf{unique} to node $i$ in an f-DAG $D$.
        \item $A\rightarrow f \rightarrow B$: all nodes in set $A$ are connected to all nodes in set $B$ via a factor $f$. We slightly abuse the notation here for conciseness.
    \end{enumerate}
    \begin{proposition}\label{supp:prop:1}
        Given $\Doneintsq \simeq \Dtwointsq$ under the assumptions of lemma \ref{supp:lemma:1}. Then, $\forall j\in F$ such that $par(j;\Dint_1) = \{i\}, chd(j;\Dint_1) = \{k\}$, $\exists j'\in F$ such that $par(j';\Dint_2)=\{i\}, chd(j';\Dint_2)=\{k\}$ or $par(j';\Dint_2)=\{k\}, chd(j';\Dint_2)=\{i\}$.
    \end{proposition}
    \begin{proof}
        Because $\Doneintsq$ and $\Dtwointsq$ share the same skeleton, $\exists j'\in F$ such that $i\rightarrow j' \rightarrow k $ in $ \Dtwointsq$ or $i \leftarrow j' \leftarrow k $ in $ \Dtwointsq$. By assumption $P_{j}(\Dint_1)=par(j;\Dint_1)=\{i\}$ and $C_{j}(\Dint_1)=chd(j;\Dint_1)=\{k\}$. Now let's consider two cases.

        \textbf{Case I: $i \rightarrow j' \rightarrow k $.} Because $k$ is not a collider in $\Dint_1$, $par(j';\Dint_2)=\{i\}$. Now we only need to prove $chd(j';\Dint_2)=\{k\}$. Suppose $\exists k', k'\neq k$ and $k'\in chd(j';\Dint_2)$. To match the skeleton in $\Doneintsq$, we must have $k' \rightarrow f \rightarrow i$ in $\Dint_1$ (Fig. \ref{fig:supp_b15}(a)). Now consider $f$ and $j$. By property 2 in Def. \ref{appx:def_dm}, $|P_f(\Dint_1)|>1$ or $|P_j(\Dint_1)|>1$. Since $P_j(\Dint_1)=\{i\}$, we must have $|P_f(\Dint_1)|>1$. This implies $i$ is a collider in $\Doneintsq$ and $k'\rightarrow i$ in $\Doneintsq$ and hence, in $\Dtwointsq$. This leads to a contradiction.

        \textbf{Case II: $k \rightarrow j' \rightarrow i$.} Because $i$ is not a collider in $\Dint_1$, $par(j';\Dint_2)=\{k\}$. We only need to prove $chd(j';\Dint_2)=\{i\}$. Suppose $\exists i', i'\neq i$ and $i'\in chd(j';\Dint_2)$. Since $C_j(\Dint_1)=\{k\}$ and $P_j(\Dint_1)=\{i\}$, we must have $k \rightarrow f \rightarrow i'$ in $\Dint_1$ (Fig. \ref{fig:supp_b15}(b)). Now consider $j$ and $f$. By property 2 in Def. \ref{appx:def_dm}, $|P_f(\Dint_1)|>1$ or $|P_j(\Dint_1)|>1$. Since $par(j;\Dint_1)=\{i\}$, we must have $|P_f(\Dint_1)|>1$. Let $k'\neq k$ such that $k'\in P_f(\Dint_1)$. Now $i'$ is a collider in $\Doneintsq$ and hence, $\Dtwointsq$. We cannot have $k' \rightarrow j' \rightarrow i'$ because this would cause the v-structure $k' \rightarrow i \leftarrow k$ in $\Dtwointsq$ and contradicts $i \rightarrow k$ in $\Doneintsq$. Thus, $\exists f'\neq j'$, $k'\rightarrow f' \rightarrow i'$ in $\Dint_2$. This implies $i'\notin C_{f'}(\Dint_2)$. We can pick $l'\in C_{f'}(\Dint_2)$. It is clear that $k$ is not connected to $l'$ in $\Dtwointsq$. Otherwise, we would have $k\rightarrow f' \implies k\notin P_{j'}(\Dint_2) \implies i$ is a collider in $\Dtwointsq \implies k \rightarrow i$ in $\Dint_1$, which is a contradiction. Now consider $l'$ in $\Dint_1$. We claim that $k'\rightarrow l'$ in $\Doneintsq$. Otherwise, $\exists g\in F$ such that $l'\rightarrow g \rightarrow k'$ in $\Dint_1$ and $par(g;\Dint_1)=\{l'\}$, which contradicts property 3 in Def. \ref{appx:def_dm}. Now we can conclude that $k'\rightarrow f \rightarrow l'$. Since $k\in P_f(\Dint_1)$, this implies $k\rightarrow l'$ in $\Doneintsq$ and leads to a contradiction.
        
        Thus, we conclude that $\exists j'$ such that $par(j';\Gint_2)=\{i\}, chd(j';\Gint_2)=\{k\}$ or $par(j';\Gint_2)=\{k\}, chd(j';
        .3\Gint_2)=\{i\}$.
    \end{proof}
    \begin{figure}[h!]
        \centering
        \includegraphics[width=\linewidth]{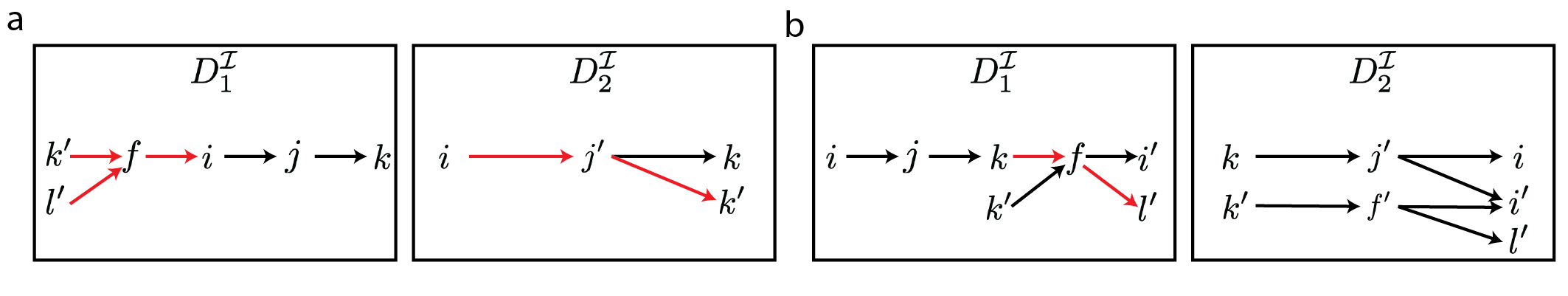}
        \caption{\textbf{Illustration of Proposition \ref{supp:prop:1}} \textbf{(a)} Case I:  $i \rightarrow j' \rightarrow k$ in $\Dint_2$. \textbf{(b)} Case II: $k \rightarrow j' \rightarrow i$ in $\Dint_2$. Contradictory edges are colored red.}
        \label{fig:supp_b15}
    \end{figure}

    \begin{proposition}\label{supp:prop:2}
        Given $\Doneintsq \simeq \Dtwointsq$ under the assumptions of lemma \ref{supp:lemma:1}. Then, $\forall j\in F$ such that $par(j;\Dint_1)=P_j(\Dint_1) = \{i\}, |chd(j;\Dint_1)| > 1$, $\exists j' \in F$, $par(j';\Dint_2)=\{i\}, chd(j';\Dint_2)=chd(j;\Dint_1)$.
    \end{proposition}
    \begin{proof}
        Because $\Doneintsq$ and $\Dtwointsq$ share the same skeleton and $|chd(j;\Dint_1)|>1$, $\forall k\in chd(j;\Dint_1)$, either $(i,k)$ or $(k,i)$ exists in $\Dtwointsq$. We first claim that the edges between $i$ and $chd(j;\Dint_1)$ in $\Dtwointsq$ should all be from $i$ to $chd(j;\Dint_1)$. To prove this, we consider any $k\in chd(j;\Dint_1)$. If $k\notin C_j(\Dint_1)$, $\exists f\in F, f\neq j, h\in V, h\neq i$ such that $h\rightarrow f\rightarrow k$ in $\Dint_1$. Hence, there is a v-structure $h \rightarrow k \leftarrow i$ in $\Doneintsq$ and it has to exist in $\Dtwointsq$. This implies $i \rightarrow k$ in $\Dtwointsq$ (Fig. \ref{fig:supp_b16}(a)). 
        
        We have shown $i\rightarrow chd(j;\Dint_1)\backslash C_j(\Dint_1)$ in $\Dtwointsq$. Now we prove $i\rightarrow C_j(\Dint_1)$ in $\Dtwointsq$ by contradiction. If this is not the case, there are two cases:
        
        \textbf{Case I.} $\exists k_1, k_2\in C_j(\Dint_1), k_1\neq k_2, f_1, f_2\in F$ such that $k_1\rightarrow f_1 \rightarrow i, k_2 \rightarrow f_2 \rightarrow i$ in $\Dint_2$. There will be a v-structure $k_1 \rightarrow i \leftarrow k_2, k_1, k_2\in A$, but such a v-structure does not exist in $\Doneintsq$ (Fig. \ref{fig:supp_b16}(b)). Therefore, we have a contradiction.
        
        \textbf{Case II.} There is only one $h\in C_j(\Dint_1), f'\in F$ such that $h\rightarrow f' \rightarrow i$ in $\Dint_2$. Since $|chd(j;\Dint_2)|>1$, $\exists k\neq h, k\in chd(j;\Dint_2)$ and $j'\in F$ such that $i\rightarrow j' \rightarrow k$ in $\Dint_2$. By property 2 of Def. \ref{appx:def_dm}, $|P_{f'}(\Dint_1)|>1$ or $|P_{j'}(\Dint_1)|>1$. If $|P_{f'}(\Dint_1)|>1$, $i$ will be a collider and $h \rightarrow i$ is in $\Dtwointsq$, but $i \rightarrow h$ in $\Doneintsq$ and we have a mismatch in the v structure. Therefore, the only possibility is $|P_{j'}(\Dint_1)|>1$ and let $i'\neq i$ be one of them (Fig. \ref{fig:supp_b16}(c)). Because $par(j;\Dint_1)=\{i\}$, $i'\not\rightarrow j$ in $\Dint_1$ and consequently, $\exists f\in F,f\neq j$ such that $i' \rightarrow f \rightarrow k$ in $\Dint_1$. Since $j \rightarrow k$ and $f \rightarrow k$ in $\Dint_1$, $k\notin C_f(\Dint_1)$. Thus, we can pick $l\in C_f(\Dint_1), l\neq k$. We claim that $i'\rightarrow l$ in $\Dtwointsq$. Otherwise, we will have $l\rightarrow g \rightarrow i'$ and $par(g;\Dint_2)=\{l\}$, just like case II in the proof of proposition \ref{supp:prop:1}.  Thus, we have $i \rightarrow l$ in $\Dtwointsq$, which contradicts $i\not\rightarrow l$ in $\Doneintsq$.

        So we conclude that $\forall k\in chd(j;\Dint_1),\exists j'\in F$ such that $i\rightarrow j' \rightarrow k$ in $\Dint_2$.
        
        Now that all edges between $i$ and $chd(j;\Dint_2)$ start from $i$, the remaining piece is to show such $j'$ is unique. If $\exists J \subseteq F, |J|>1$ such that $\forall k\in chd(j;\Dint_1)$, $\exists j'\in J$, $i \rightarrow j' \rightarrow k$ in $\Dint_2$, $i\notin P_{j'}(\Dint_2)\; \forall j'\in J$. Now consider any $k\in C_j(\Dint_1)$. Suppose $i \rightarrow j' \rightarrow k$ in $\Dint_2$. Because $i\notin P_{j'}(\Dint_2)$, $\exists h\in P_{j'}(\Dint_2)$ such that $h \rightarrow k$ in $\Dint_2$. Now, we have the v-structure $h \rightarrow k \leftarrow i$ in $\Dtwointsq$ and $\Doneintsq$. However, $h \rightarrow k$ cannot exist in $\Doneintsq$ because $h \notin par(j;\Dint_1)$ and $k \in P_j(\Dint_1)$~(Fig. \ref{fig:supp_b16}(d)). We have a contradiction.

        Thus, we know that $\exists$ unique $j'$ such that $i \rightarrow j' \rightarrow chd(j;\Dint_1)$ in $\Dint_2$ and consequently $chd(j;\Dint_1) \subseteq chd(j';\Dint_2)$. The last step is to prove $chd(j';\Dint_2)=chd(j;\Dint_1)$, which only requires proving $chd(j';\Dint_2) \subseteq chd(j;\Dint_1)$. Suppose $\exists k'\in chd(j';\Dint_2)$ such that $j'\notin chd(j;\Dint_1)$, $(i, k')$ is in $\Dtwointsq$ but not $\Doneintsq$. To match the skeleton, $(k', i)$ must exist in $\Doneintsq$. Then, $\exists f\in F$ such that $k' \rightarrow f \rightarrow i$ in $\Dint_1$ (Fig. \ref{fig:supp_b16}(e)).
        
        We claim that $k'\in C_{j'}(\Dint_2)$. Otherwise, $k'$ is a collider in $\Dtwointsq$ and we must have a v-structure with $i\rightarrow k'$ in $\Dtwointsq$ but not in $\Doneintsq$. Now we can assume $k'\in C_{j'}(\Dint_2)$. By property 2 in Def. \ref{appx:def_dm}, $|P_{j}(\Dint_1)| > 1$ or $|P_{f}(\Dint_1)| > 1$. Because $|par(j\Dint_1)|=1$ by assumption, $|P_{f}(\Dint_1)| > 1$ and this implies $i$ is a collider with $k'$ as its parent in $\Dint_1$ (Fig.\ref{fig:supp_b16}(e)). However, this cannot happen in $\Dint_2$. We have a contradiction.  

        Finally, we conclude that for any factor in $\Dint_1$ under the conditions in proposition \ref{supp:prop:2}, there exists a factor in $\Dint_2$ with the same parent and children.
    \end{proof}
    \begin{figure}[h!]
        \centering
        \includegraphics[width=\linewidth]{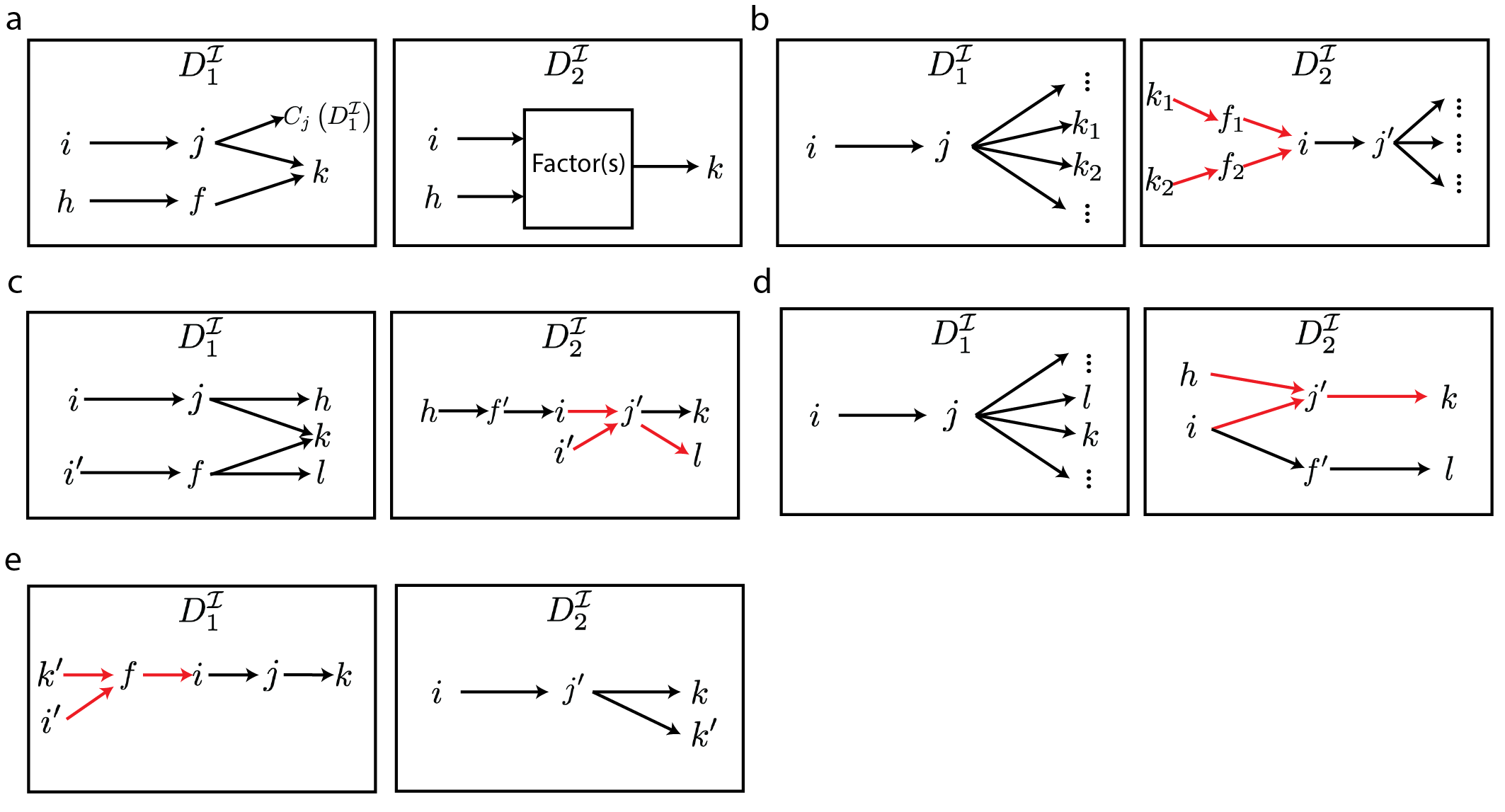}
        \caption{\textbf{Illustration of Cases in Proposition \ref{supp:prop:2}} \textbf{(a)} $k\in chd(j), k\notin C_j(\Dint_1)$. \textbf{(b)} Contradictory Case I: $\exists k_1, k_2\in C_j(\Dint_1)$ s.t. $k_1 \rightarrow f_1 \rightarrow i, k_2 \rightarrow f_2 \rightarrow i$ in $\Dint_2$ \textbf{(c)} Contradictory case II: $\exists$ unique $h$ s.t. $h \rightarrow f' \rightarrow i$ in $\Dint_2$. \textbf{(d)} Contradictory Case: there exists multiple factors connecting $i$ to $chd(j;\Dint_1)$. \textbf{(e)} Contradictory Case: $chd(j;\Dint_) \subset chd(j';\Dint_2)$.}
        \label{fig:supp_b16}
    \end{figure}

    \begin{proposition}\label{supp:prop:3}
        If $\Doneintsq \simeqint \Dtwointsq$ under the assumptions of lemma \ref{supp:lemma:1}, then $\forall j\in F$ such that $|par(j;\Dint_1)| > 1$, $\exists j'\in F$ such that $par(j;\Dint_1)=par(j';\Dint_2)$ and $chd(j;\Dint_1)=chd(j';\Dint_2)$.
    \end{proposition}
    \begin{proof}
        The proof is trivial when $|F|=1$. Now we assume $|F|>1$.  We first prove $\exists j', P_j(\Dint_1)=P_{j'
        }(\Dint_2)$ and $C_j(\Dint_1)=C_{j'}(\Dint_2)$.

        Since $|par(j;\Dint_1)| > 1$, any node in $C_j(\Dint_1)$ is a collider in $\Doneintsq$. Thus, $\forall i\in P_j(\Dint_1), k\in C_j(\Dint_1)$, $(i, k)\in \Dtwointsq$. Thus, $P_j(\Dint_1)$ is still connected to $C_j(\Dint_1)$ in $\Dtwointsq$ via a set of factors $F_C=\{f_1,\ldots,f_M\}$ in $\Dint_2$. Now we consider two cases. 
        
        \textbf{Case I.} First, we consider $M > 1$.  $\forall f_m\in F_C$, we define $P_m=par(f_m;\Dint_2)\cap P_j(\Dint_1)$ and $C_m=chd(f_m;\Dint_2)\cap C_j(\Dint_1)$. By our assumption $P_m\neq \emptyset, C_m\neq \emptyset, \forall m$.
        
        Pick any two factors $f_1$ and $f_2$. We consider the following two conditions
        \begin{enumerate}
            \item $\exists i\in P_j(\Dint_1)$ such that $i\rightarrow f_1$ and $i\rightarrow f_2$ in $\Dint_2$.
            \item $\exists k\in C_j(\Dint_1)$ such that $f_1 \rightarrow k$ and $f_2 \rightarrow k$ in $\Dint_2$.
        \end{enumerate}
        Each condition can be either true or false. The combination of these two conditions give us four different subcases.
        
        \textbf{Subcase I-1: Condition (1) and (2) are True.} 
        
        This means $i$ is a common parent of $f_1,f_2$ in $\Dint_2$. Now consider $P_{f_1}(\Dint_2)$ and $P_{f_2}(\Dint_2)$. $P_{f_1}(\Dint_2)$ and $\{i\}$ are disjoint and they form v-structures at any node in $C_{f_1}(\Dint_2)$ in $\Dint_2$. Similarly, $P_{f_2}(\Dint_2)$ and $\{i\}$ form v-structures at any node in $C_{f_2}(\Dint_2)$. We also have $P_{f_1}(\Dint_2)$ and $P_{f_2}(\Dint_2)$ form v-structures at $k$ in $\Dint_2$. By the definition of $k$, $P_{f_1}(\Dint_2)\cup P_{f_2}(\Dint_2)$ is connected to $j$ in $\Dint_1$. Because both $P_{f_1}(\Dint_2)$ and $P_{f_2}(\Dint_2)$ are connected to $j$ in $\Dint_1$, we have $ P_{f_1}(\Dint_2)$, $P_{f_2}(\Dint_2)$ form v-structures at $C_{f_1}(\Dint_2)$ in $\Doneintsq$, but such a v-structure cannot exist in $\Dtwointsq$ by the definition of $C_{f_1}(\Dint_2)$. We reach contradiction. Fig. \ref{fig:supp_prop_B17_case_1}(a) shows this scenario.

        \textbf{Subcase I-2: Condition (1) is False but (2) True.} 
        
        We claim that $|par(f_1;\Dint_2)| > 1$. Otherwise, by proposition \ref{prop:2}, $\exists j'$ in $\Dint_1$ such that $par(j')=par(f_1)=P_1, chd(j')=chd(f_1)$. Because $C_1 \subset C_j(\Dint_1)$, the only possibility is $j'=j$. But we know $|par(j';\Dint_1)| = |P_1| = 1$ and $|par(j;\Dint_1)| > 1$. This leads to a contradiction. Hence, $|par(f_1;\Dint_2)| > 1$.  Similarly, $|par(f_2;\Dint_2)| > 1$. Consequently, we must have v-structures at $C_{f_1}(\Dint_2)$ and $C_{f_2}(\Dint_2)$ in $\Dint_2$. This implies the edge connections between $P_1$ and $C_{f_1}(\Dint_2)$ are all from $P_1$ to $C_{f_1}(\Dint_2)$. Similarly, edge connections between $P_2$ and $C_{f_2}(\Dint_2)$ are all from $P_2$ to $C_{f_2}(\Dint_2)$. By the definition of $P_1$ and $P_2$, we have $j\rightarrow C_{f_1}(\Dint_2)$ and $j\rightarrow C_{f_2}(\Dint_2)$. This means $P_1$ and $P_2$ form v-structures at $C_{f_1}(\Dint_2)$ in $\Doneintsq$. But such v-structure(s) cannot exist in $\Dtwointsq$. Again we have a contradiction. Fig. \ref{fig:supp_prop_B17_case_1}(b) shows the scenario.

        \textbf{Subcase I-3: Condition (1) is True but (2) False.}

        This implies $C_1$ and $C_2$ are disjoint sets. Now consider $P_{f_1}(\Dint_2)$ and $P_{f_2}(\Dint_2)$. $P_{f_1}(\Dint_2)$ and $i$ form v-structures at $C_1$ in $\Dtwointsq$. Hence, such a v-structure must exist in $\Doneintsq$. By the definition of $C_1$, $P_{f_1}(\Dint_2) \rightarrow j$ in $\Dint_1$. Similarly, $P_{f_2}(\Dint_2)\rightarrow j$ in $\Dint_1$. Again, we have $P_{f_1}(\Dint_2)$ and $P_{f_2}(\Dint_2)$ for v-structures at $C_1$, which contradicts the fact that $C_1$ and $C_2$ are disjoint. Fig. \ref{fig:supp_prop_B17_case_1}(c) shows the scenario.

        \textbf{Subcase I-4: Condition (1) and (2) are false.}

        This implies $C_1$ and $C_2$ are disjoint and $P_1$ and $P_2$ are disjoint. By proposition \ref{supp:prop:1} and \ref{supp:prop:2}, $|par(f_1;\Dint_2)| > 1$ and $|par(f_2;\Dint_2)| > 1$ under the same argument as in subcase I-2. Hence, we have v-structures from $par(f_1;\Dint_2)$ to $C_1$ and $par(f_2;\Dint_2)$ to $C_2$. This means the edges must point from $P_1$ to $C_1$ and from $P_2$ to $C_2$ in $\Doneintsq$. By the definition of $P_1,P_2,C_1,C_2$, they are all connected via $j$ in $\Dint_1$. Hence, $P_1$ is also connected to $C_2$, but in $\Dtwointsq$ this is not true. We reach contradiction again. Fig. \ref{fig:supp_prop_B17_case_1}(d) shows the scenario.

        \begin{figure}[h!]
            \centering
            \includegraphics[width=1\linewidth]{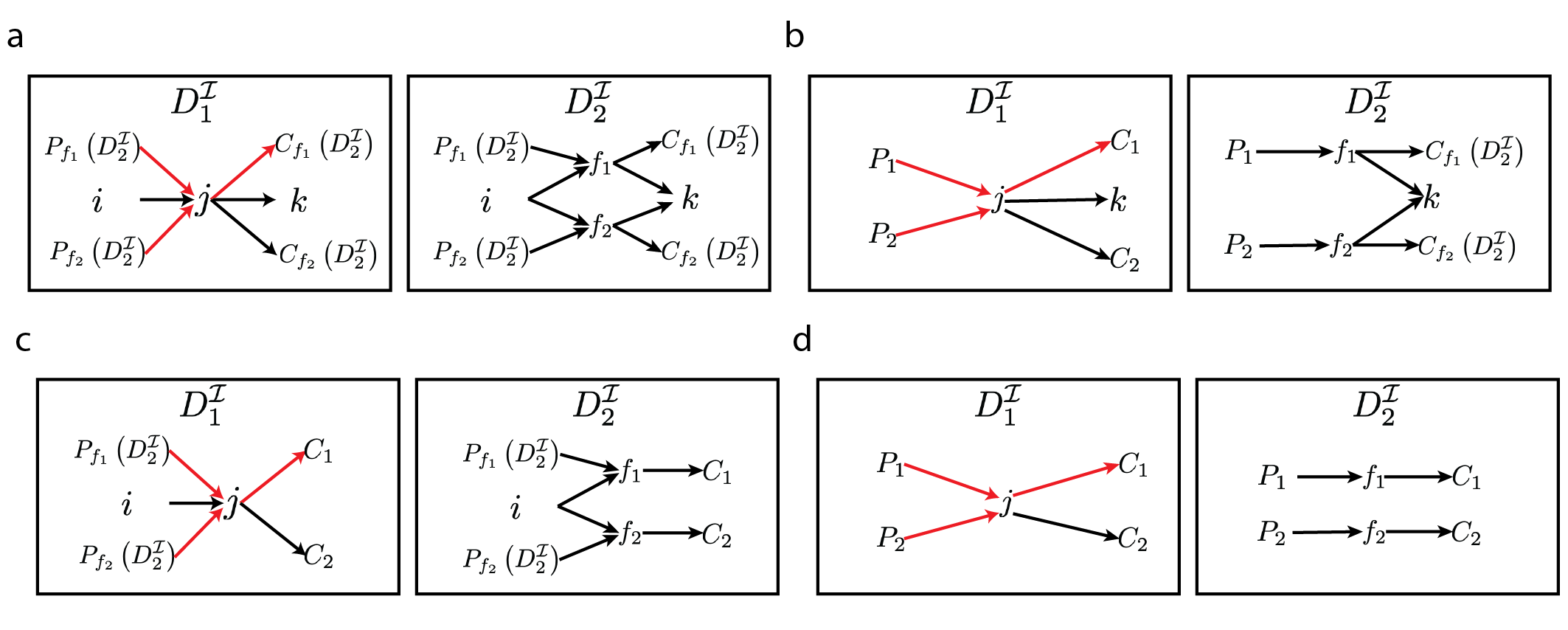}
            \caption{\textbf{Illustration of Cases in Case I of Proposition \ref{supp:prop:3}} \textbf{(a)} Subcase I-1. \textbf{(b)} Subcase I-2. \textbf{(c)} Subcase I-3. \textbf{(d)} Subcase I-4.}
            \label{fig:supp_prop_B17_case_1}
        \end{figure}
        Thus, we know that $M>1$ is impossible.
        
        \textbf{Case II.} We already excluded the possibility of $M>1$ in case I. Now we consider $M = 1$. There are two parts: prove equality of unique parents and equality of unique children.
        
        Let $F_C = \{f\}$.
        We first prove $P_j(\Dint_1)\subseteq P_{f}(\Dint_2)$. Suppose $\exists i\in P_j(\Dint_1)$, $i\notin P_{f}(\Dint_2)$. At the beginning of this proof, we concluded that $P_j(\Dint_1)$ must connect to $C_j(\Dint_1)$ in $\Dtwointsq$ due to the presence of v-structures. Since $i\in P_j(\Dint_1), i\notin P_{f}(\Dint_2)$, the only possibility is $i$ being connected to another factor $f'\neq f$ in $\Dint_2$. This further implies $i\notin P_{f'}(\Dint_2)$. 
        
        Now consider the v-structure formed by $i$, any node in $P_{f'}(\Dint_2)$ and any node in $C_{f'}(\Dint_2)$. Due to such a v-structure, connection between $i$ and $C_{f'}(\Dint_2)$ should be from $i$ to $C_{f'}(\Dint_2)$ in $\Doneintsq$. Because $i\in P_j(\Dint_1)$, $P_j(\Dint_1)$ should all be connected to $C_{f'}(\Dint_2)$ via $j$. In short, we have $j\rightarrow C_{f'}(\Dint_2)$ in $\Dint_1$.

        On the other hand, $i$, $P_f(\Dint_2)$ and $C_f(\Dint_2)\cup C_j(\Dint_1)$ form v-structures in $\Dtwointsq$. Hence, $P_f(\Dint_2)$ is connected to $C_j(\Dint_1)$ via $j$ in $\Dint_1$.
        
        Based on the reasoning above, we have (1) $j\rightarrow C_{f'}(\Dint_2)$  and (2) $P_f(\Dint_2)\rightarrow j $ in $\Dint_1$. This implies $P_f(\Dint_2) \rightarrow C_{f'}$ in $\Doneintsq$. Due to the v-structure formed by $i$, $P_{f'}(\Dint_2)$ and $C_{f'}(\Dint_2)$ in $\Dtwointsq$, we also have $P_{f'}(\Dint_2) \rightarrow C_{f'}(\Dint_2)$ in $\Doneintsq$. The previous two facts further imply we have a v-structure $P_f(\Dint_2) \rightarrow C_{f'}(\Dint_2) \leftarrow P_{f'}(\Dint_2)$ in $\Doneintsq$, but such v-structure is not present in $\Dtwointsq$. Thus, we reach a contradiction. Fig. \ref{fig:supp_prop_B17_case_2}(a) shows the scenario.

        To conclude, so far we have proven $P_j(\Dint_1) \subseteq P_{f}(\Dint_2)$.

        Next, we prove $P_{f}(\Dint_2) \subseteq P_j(\Dint_1)$. Suppose $\exists i\in P_{f}(\Dint_2), i\notin P_j(\Dint_1)$. First, by our previous argument, $P_j(\Dint_1) \rightarrow j \rightarrow C_j(\Dint_1)$ in $\Dint_1$ and $P_j(\Dint_1) \rightarrow f \rightarrow C_j(\Dint_1)$ in $\Dint_2$. By our assumption, we also have $i \rightarrow j \rightarrow C_j(\Dint_1)$in $\Dint_1$ due to the v-structures at $C_j(\Dint_1)$. However, since $i \notin P_j(\Dint_1)$, $\exists j' \neq j, i \rightarrow j'$ in $\Dint_1$. Apparently, $i\notin P_{j'}(\Dint_1)$, so we can consider the v-structure formed by $i, P_{j'}(\Dint_1), C_{j'}(\Dint_1)$. This implies $i\rightarrow C_{j'}(\Dint_1)$ in $\Dtwointsq$. Because $i\in P_f(\Dint_2)$, we must have $f\rightarrow C_{j'}(\Dint_1)$ in $\Dint_2$. This means $P_j(\Dint_1), i, C_{j'}(\Dint_1)$ form v-structures at $C_{j'}(\Dint_1)$ in $\Dtwointsq$ and hence, $P_j(\Dint_1)\rightarrow C_{j'}(\Dint_1)$ in $\Doneintsq$. However, there is no connection between $P_j(\Dint_1)$ and $C_{j'}(\Dint_1)$. We reach a contradiction. Now we can conclude that $P_j(\Dint_1)=P_f(\Dint_2)$. Fig. \ref{fig:supp_prop_B17_case_2}(b) shows the scenario.

        Next, we can proceed to prove $C_j(\Dint_1) = C_f(\Dint_2)$. First, we prove $C_j(\Dint_1) \subseteq C_f(\Dint_2)$ by contradiction. Suppose $\exists k\in C_j(\Dint_1), k\notin C_f(\Dint_2)$. By the assumption of case II, $P_j(\Dint_1)$ is connected to $C_j(\Dint_1)$ via a unique factor, $f$, in $\Dint_2$. We have $P_j(\Dint_1) \rightarrow f \rightarrow k$ in $\Dint_2$. Due to the v-structures at $C_j(\Dint_1)$ in $\Doneintsq$, all nodes in $par(j;\Dint_1)$ must connect to $C_j(\Dint_1)$ in $\Dint_2$. Therefore, we have a stronger conclusion:
        $$ \forall i\in par(j;\Dint_1), i\rightarrow f \rightarrow C_j(\Dint_1)\cup C_f(\Dint_2) \mbox{ in } \Dint_2$$
        
        Because $k \notin C_f(\Dint_2)$, there must be another factor $f'\neq f$ such that $f'\rightarrow k$ in $\Dint_2$. Now pick $l\in P_{f'}(\Dint_2)$ and $m\in C_f(\Dint_2)$.  Since $|par(f;\Dint_2)|>1, |par(j;\Dint_1)|>1$, we can pick $i'\in par(f;\Dint_2), i'\neq i$. We have the v-structures $i \rightarrow k \leftarrow l$, $i'\rightarrow k \leftarrow l$ and $i \rightarrow m \leftarrow i'$. in $\Dtwointsq$. Because $f\neq f'$, $l$ and $m$ are not connected in $\Dtwointsq$. On the other hand, because $k \in C_j(\Dint_1)$ and the v-structure $i \rightarrow k \leftarrow l$ must exist in $\Doneintsq$, we must have $ l \rightarrow j \rightarrow m$ in $\Dint_1$ and hence, $l$ and $m$ are connected. This leads to a contradiction. \textbf{Fig. \ref{fig:supp_prop_B17_case_2}c} shows the scenario.

        Next, we prove $C_f(\Dint_2) \subseteq C_j(\Dint_1)$ using a similar argument. Suppose $\exists k\in C_f(\Dint_2), k\notin C_j(\Dint_1)$. Consider $i\in P_f(\Dint_2), i' \in par(f;\Dint_2)$, we have v-structures $i \rightarrow k \leftarrow i'$ and $i \rightarrow m \leftarrow i',\forall m\in C_j(\Dint_1)$ in $\Dtwointsq$. This implies $\{i,i'\}\rightarrow C_j(\Dint_1)\cup \{k\}$ in $\Doneintsq$. By the definition of $C_f(\Dint_2)$ and our previous conclusion that $P_j(\Dint_1)=P_f(\Dint_2)$, we must have $i \rightarrow j \rightarrow C_j(\Dint_1)\cup \{k\}$ in $\Dint_1$. Because $k \notin C_j(\Dint_1)$, $\exists j'\neq j$ such that $j'\rightarrow k$ in $\Dint_1$. Now pick $l\in P_{j'}(\Dint_1)$. We have the v-structure $l \rightarrow k \leftarrow i$ in $\Doneintsq$. Hence, such a v-structure must exist in $\Dtwointsq$. Because $k \in C_f(\Dint_2)$, we must have $l \rightarrow f \rightarrow k$ in $\Dint_2$. This implies the v-structure $l$ is connected to any $m\in C_j(\Dint_2)$ in $\Dtwointsq$, but such connection does not exist in $\Doneintsq$ since $l\in P_{j'}(\Dint_1), m \in C_j(\Dint_1), j'\neq j$. \textbf{Fig. \ref{fig:supp_prop_B17_case_2}d} shows the scenario.

        From the proof above, we have $\forall j$, $\exists f$, $P_j(\Dint_1) = P_f(\Dint_2)$ and $C_j(\Dint_1)=C_f(\Dint_2)$.

        \begin{figure}[h!]
            \centering
            \includegraphics[width=1\linewidth]{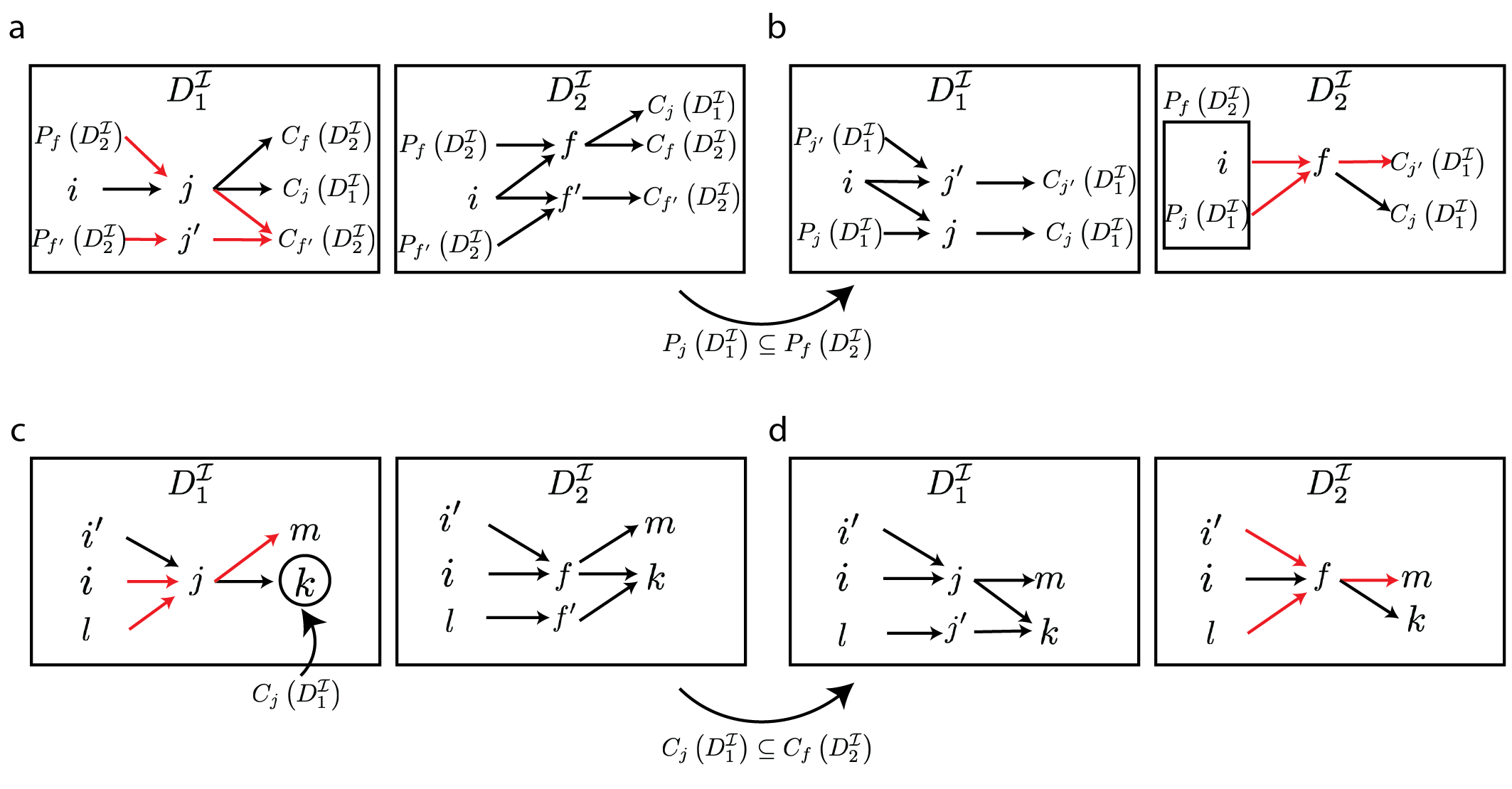}
            \caption{\textbf{Illustration of Cases in Case II of Proposition \ref{supp:prop:3}.} \textbf{(a)} Contradictory case when $P_j(\Dint_1)\not\subseteq P_f(\Dint_2)$. \textbf{(b)} Contradictory case when $P_j(\Dint_1)\subseteq P_f(\Dint_2)$ but $P_f(\Dint_2)\not\subseteq P_j(\Dint_1)$. \textbf{(c)} Contradictory case when $C_j(\Dint_1)\not\subseteq C_f(\Dint_2)$. \textbf{(d)} Contradictory case when $C_j(\Dint_1)\subseteq C_f(\Dint_2)$ but $C_f(\Dint_2)\not\subseteq C_j(\Dint_1)$.}
            \label{fig:supp_prop_B17_case_2}
        \end{figure}

        Because $|par(j;\Dint_1)| > 1$ and $|chd(j;\Dint_1)| > 1$, a v-structure forms $\forall i_1, i_2\in par(j;\Dint_1), k\in chd(j;\Dint_1)$. Hence, $\forall i\in par(j;\Dint_1), k\in chd(j;\Dint_1)$, the direction of edge $(i,k)$ must be from $i$ to $k$. Because we have $P_j(\Dint_1) = P_f(\Dint_2)$ and $C_j(\Dint_1)=C_f(\Dint_2)$ and any $i \in par(j;\Dint_1)$ must form v-structures at any $k\in C_j(\Dint_1)$ in $\Dtwointsq$, we have $i \rightarrow k$ in both $\Doneintsq$ and $\Dtwointsq$. The only possibility for $(i, k)$ exist in $\Dtwointsq$ is $i \rightarrow f \rightarrow k$. This implies $par(j;\Dint_1)\subseteq par(f;\Dint_2)$. Similarly, $chd(j;\Dint_1)\subseteq chd(f;\Dint_2)$. $\forall i' \in par(f;\Dint_2)$, $i'\in par(j;\Dint_1)$ because otherwise $(i', k),\forall k\in C_j(\Dint_1)$ will not be in $\Doneintsq$. Therefore, we have $par(f;\Dint_2)\subseteq par(j;\Dint_1)$ and similarly $chd(f;\Dint_2)\subseteq chd(j;\Dint_1)$.

        Finally, we conclude that $\forall j\in F, \exists f\in F$, $par(f;\Dint_2) = par(j;\Dint_1)$ and similarly $par(f;\Dint_2) = par(j;\Dint_1)$.
    \end{proof}

    Propositions \ref{supp:prop:1}-\ref{supp:prop:3} provide insights about extra conditions in order for two f-DAGs to be $\mathcal{I}$-Markov equivalent. The proof of lemma \ref{supp:lemma:1} will be straightforward given these propositions.

    Given the conditions in lemma \ref{supp:lemma:1}, we have two f-DAGs with the same set of nodes and factors. Now let's consider any factor $j\in F$. For any extended f-DAG, $D$, and interventions, $\calI$, We define the following three types of factors:
    \begin{enumerate}
        \item $j: |par(j;\Dint)| = |chd(j;\Dint)| = 1$
        \item $j: |par(j;\Dint)|=1, |chd(j;\Dint)| > 1$
        \item $j: |par(j;\Dint)| > 1$
    \end{enumerate}
    We can partition $F$ into three subsets $F_1, F_2, F_3$ for $\Dint_1$ and $F_1', F_2', F_3'$ for $\Dint_2$. Because $\Doneintsq\simeqint\Dtwointsq$, by proposition \ref{supp:prop:1}, $\forall j\in F_1$, $\exists j'\in F_1'$ with the same parents and children and $\forall j'\in F_1'$, $\exists j\in F_1$ with the same parents and children. Hence, there is a bijection between $F_1$ and $F_1'$. Similarly, we have bijections from $F_2$ to $F_2'$ by proposition \ref{supp:prop:2} and $F_3$ to $F_3'$ by proposition \ref{supp:prop:3}. Finally, we have a bijection $\phi:F\rightarrow F$ from all factors in $\Dint_1$ to factors in $\Dint_2$. $\forall f\in F_2\cup F_3$, $f$ and $\phi(f)$ share the same parent(s) and children. $\forall f\in F_1$, $f$ and $\phi(f)$, either $par(f;\Dint_1)=par(\phi(f);\Dint_2), chd(f;\Dint_1)=chd(\phi(f);\Dint_2)$ or $par(f;\Dint_1)=chd(\phi(f);\Dint_2), chd(f;\Dint_1)=par(\phi(f);\Dint_2)$. Since $|par(f;\Dint_1)|=|chd(f;\Dint_1)|=1$, flipping the parent and child maintains the same skeleton and does not introduce v-structure.
    
    Finally, we conclude that $\Dint_1$ and $\Dint_2$ share the same skeleton and v-structures and $D_1^2[V] \simeqint D_2^2[V] \implies D_1 \simeqint D_2$
\end{proof}

Next, we discuss the identifiability of an f-DAG. The following theorem summarizes our conclusion.
\begin{theorem}[Identifiability of the f-DAG via ELBO maximization]\label{thm:iden_fdag}
    Let $\{X_i:i\in[n]\}$ be a set of causally related random variables whose joint distribution follows a causal  DAG $G^*$ and  $\calIstar=\{I_k^*:k\in[n^{\calI}]\}$ be a set of interventions where $I_1=\emptyset$. Suppose $G^*\in \calG_m$ and there is a true f-DAG representation, $D^*\in \calD_m$, such that $(D^*)^2[V] = G^*$. Let $\qG$ be a variational distribution on $\calG_m$. Define the Bayesian score function as
    $$ S_{\calIstar}(q):=\mathbb{E}_{\qG}\left[\scoreIstar\right] - \beta KL(\qG||p(G))$$
    where $\scoreIstar = \sup_{\bPhi}\sum_{k=1}^{n^{\calI}}\mathbb{E}_{\pk(\X)}\left[\log \fk(\X|G;\bPhi)\right]-\lambda |G|$ is the score function. Suppose $q^*(G;\bLambda)=\arg\max_{\qG}\bayesianscorestar$ and $\hat{G}=\arg\max_{G}q^*(G;\bLambda)$. Let $\hat{G}$. Then, under the same assumptions of Theorem \ref{thm:supp:dcdi}, namely sufficient capacity, $\calI$-faithfulness, positivity and finite entropy, and for sufficiently small $\lambda > 0$ and $\beta > 0$, $\hat{D}=\arg\max_{D}q^*(D^2[V];\bLambda)$ is $\calIstar$-Markov equivalent to $D^*$ under a permutation of factors.
\end{theorem}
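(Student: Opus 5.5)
The plan is to combine Theorem~\ref{thm:supp:elbo_identify} with Lemma~\ref{supp:lemma:1}, in two stages. First we identify the half-square graph of the MAP f-DAG up to $\calIstar$-Markov equivalence. Then we lift that equivalence to the f-DAGs themselves.

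\textbf{Stage 1: the half-square graph.} Apply Theorem~\ref{thm:supp:elbo_identify} with the search space taken to be $\calG=\calG_m$. Its hypotheses hold here. The assumptions of Theorem~\ref{thm:supp:dcdi} are imposed verbatim. The true graph lies in the search space, since $G^*=(D^*)^2[V]$ with $D^*\in\calD_m$, so $G^*\in\calG_m$. The variational distribution $\qG$ is supported on $\calG_m$. For sufficiently small $\lambda,\beta>0$, the theorem then shows that $\hat{G}=\arg\max_G q^*(G;\bLambda)$ satisfies $\hat{G}\simeqintstar G^*$.

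\textbf{Bridging MAP graphs and MAP f-DAGs.} Set $\hat{D}=\arg\max_D q^*(D^2[V];\bLambda)$, with the maximum taken over $D\in\calD_m$. By construction $q^*(\hat{D}^2[V])=\max_{G\in\calG_m}q^*(G)$. Every $G\in\calG_m$ is of the form $D^2[V]$ for some $D\in\calD_m$, because the map $D\mapsto D^2[V]$ is onto $\calG_m$. Hence $\hat{D}^2[V]$ is itself an argmax of $q^*$ over graphs, and Stage 1 gives
$$\hat{D}^2[V]\simeqintstar (D^*)^2[V].$$
Some care is needed if the argmax is not unique. The proof of Theorem~\ref{thm:supp:elbo_identify} is a contradiction argument that applies to every maximizer of $q^*$. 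So every MAP graph, in particular $\hat{D}^2[V]$, is equivalent to $G^*$.

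\textbf{Stage 2: lifting to f-DAGs.} Both $\hat{D}$ and $D^*$ lie in $\calD_m$. They share the node set $V$, and, after relabeling, the same set of $m$ factors $F$. The conditions of Def.~\ref{appx:def_dm} are stated for the extended f-DAGs with respect to $\calIstar$. The reverse direction of Lemma~\ref{supp:lemma:1}, applied with $\calI=\calIstar$, therefore gives $\hat{D}\simeqintstar D^*$ up to a permutation of factors. This is exactly the claim.

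\textbf{Main obstacle.} I expect the delicate point to be the bridge between the MAP over graphs and the MAP over f-DAGs. The variational distribution is really placed on f-DAGs, and $q^*(D^2[V])$ is the pushforward mass, so several f-DAGs can share one half-square graph. I would handle this by working throughout with the induced distribution on $\calG_m$. The objective depends on $D$ only through $D^2[V]$, so optimizing over f-DAG distributions is equivalent to optimizing over their pushforwards on $\calG_m$. This lets Theorem~\ref{thm:supp:elbo_identify} apply to the pushforward directly.

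A second check is that the $\epsilon$-perturbation used in the proof of Theorem~\ref{thm:supp:elbo_identify} keeps the support inside $\calG_m$. It does: the mass is moved to $G^*$, which lies in $\calG_m$.
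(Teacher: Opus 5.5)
Your proposal is correct and follows the same route as the paper's proof. You apply Theorem~\ref{thm:supp:elbo_identify} with $\calG=\calG_m$ to get $\hat{D}^2[V]\simeqintstar G^*$, then use the reverse direction of Lemma~\ref{supp:lemma:1} for $\hat{D},D^*\in\calD_m$; your bridging step (surjectivity of $D\mapsto D^2[V]$ onto $\calG_m$, the pushforward, and non-unique maximizers) spells out what the paper simply asserts when it writes $\hat{G}=(\hat{D})^2[V]$.
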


\begin{proof} The proof is a direct result of previous results.

First, we have $G^*\in\calG_m$. Therefore, by Theorem \ref{thm:supp:elbo_identify}, $q^*(G)=\arg\max_{\qG:supp(q)\subseteq\calG_m}\calL(q)$ has an argmax graph $\hat{G}=(\hat{D})^2[V]$ such that $\hat{G}\simeqintstar G^*$.  Next, since $\hat{D},D^* \in \calD_m$, by lemma \ref{supp:lemma:1}, $\hat{D}\simeqintstar D^*$ under a permutation of factors.
\end{proof}


The proposed model architecture is presented in \Cref{fig:abcdefg}

\begin{figure}
    \centering
    \includegraphics[width=1\linewidth]{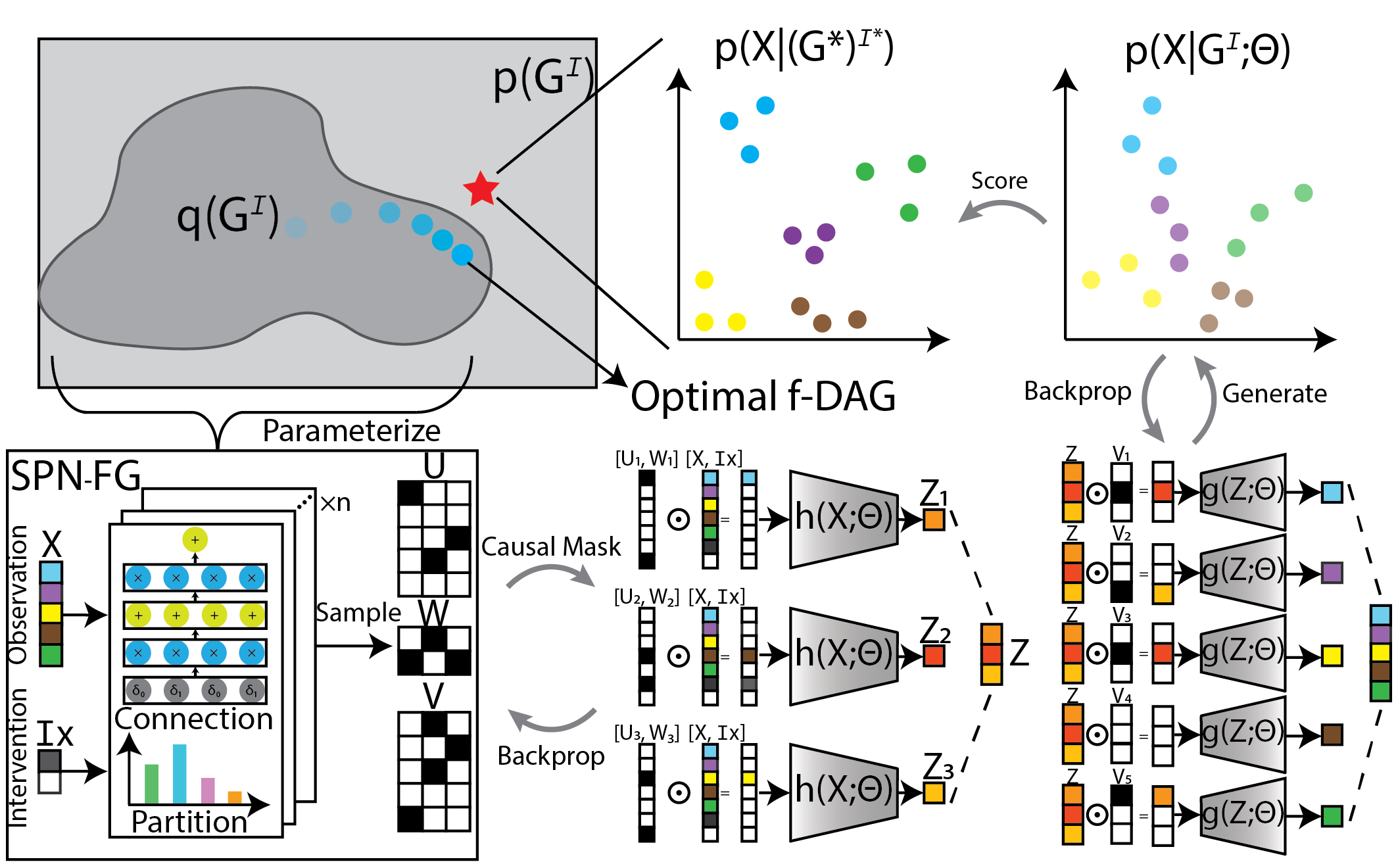}
    \caption{\textbf{Overview of ABCDEFG.} \textbf{Top Left: } Bayesian framework. A prior $p$ with a support of all DAGs and a variational distribution with a support of f-DAGs. The red star represents the ground-truth DAG and light blue dots with increasing transparency show an optimization process w.r.t. the variational distribution. \textbf{Top Right: } Real vs. generated data distribution. \textbf{Bottom: } ABCDEFG model architecture. Binary matrices $\U,\V,\W$ are sampled from a parametric f-DAG model such as SPN-FG. Next, observations are masked by sampled causal relations~(under Hadamard product, $\odot$) and fed to a VAE model fitting data distribution. Arrows show direction of data flow and back propagation.}
    \label{fig:abcdefg}
\end{figure}

\section{Supplementary Results}\label{supp:results}
\subsection{Results on Toy and Extended Datasets}
We benchmarked existing methods on simulated data using both SPN-FG and previous f-DAG simulation method from \citet{Lopez2022largescale}. In a preliminary study, we tested all methods on simple toy datasets simulated with 16 nodes and 2 factors (\Cref{table:sim_targeted}). We changed the sparsity penalty in ENCO but it produced mainly zero adjacency matrix except for one dataset with 0.13 F1 score. Hence, we report zero F1 scores here as a placeholder. Then we extend our experiment to 200 and 500 nodes with nonlinear intervention, to evaluate the performance on larger graph (\Cref{table:Scored based methods F1 and SHD targeted with 200 and 500 nodes}). Note that ENCO and DCDI were too slow and/or required too much memory on larger graphs, so we omitted them from this comparison. We also evaluate our methods on denser graphs containing 100, 200, and 500 nodes (\Cref{table:Scored based methods F1 and SHD targeted  on dense graphs}), using targeted and hard interventions. For graphs of 100 nodes, the edge number increased by 100 edges per graph for the factor graph dataset, and 1,000 per graph for the spn dataset. In addition to F1 and SHD, we also report the structural intervention distance (SID) \citet{10.1162/NECO_a_00708} for score-based and Bayesian methods (\Cref{table:SID Bayesian based comparison} and \Cref{table:SID score-based comparison}).

\begin{table}[h!]
\caption{Performance on Simulated Datasets with 16 Nodes. Best performance is in bold text and second best is underlined.}
\label{table:sim_targeted}
\vskip 0.15in
\begin{center}
\begin{tiny}
\begin{sc}
\begin{tabular}{llcccccccccccc}
\toprule
Metric      & Method        & \multicolumn{3}{c}{Linear (FG)}       & \multicolumn{3}{c}{Linear (SPN-FG)} & \multicolumn{3}{c}{Nonlinear (FG)} & \multicolumn{3}{c}{Nonlinear (SPN-FG)}\\
    &   &   D1 & D2 & D3 & D1 & D2 & D3 & D1 & D2 & D3 & D1 & D2 & D3 \\
\midrule
SHD$\downarrow$& DCDI       & 12& \underline{4}& 26& 14& 12& 25& 14& \textbf{7}& \textbf{4}& \textbf{2}& 28& 14\\
            & DCDFG         & 48& 33& 31& 43& 43& 56& 48& 18& \underline{5}& 46& 36& 17\\
            & ENCO          & 27& 24& 28& 28& 54& 29& 27& 18& 29& 37& 41& 29\\
            & SDCD          & \underline{11}& 16& \underline{3}& 12& 15& 5& \underline{4}& \textbf{7}& 6& 8& \underline{16}& \textbf{5}\\
            & ABCDEFG       & \textbf{0}& \textbf{0}& \textbf{0}& 12& \textbf{0}& \textbf{0}& \textbf{2}& \underline{12}& 13& \underline{3}& \textbf{12}& \underline{9}\\
            & ABCDEFG       & \textbf{0}& 10& 12& \textbf{5}& 21& \underline{1}& 26& 28& 26& 22& 17& 25\\
            & (SPN)         & & & & & & & & & & & & \\
\midrule
F1$\uparrow$& DCDI          & \underline{0.842}& \underline{0.923}& 0.678& 0.793& \underline{0.876}& 0.679& 0.781& \textbf{0.759}& \textbf{0.935}& \textbf{0.964}& 0.682& 0.774\\
            & DCDFG         & 0.529& 0.190& 0.644& 0.566& 0.650& 0.509& 0.529& N/A& \underline{0.915}& 0.477& 0.667& 0.691\\
            & ENCO          & 0.000& 0.000& 0.000& 0.000& 0.000& 0.000& 0.000& 0.000& 0.000& 0.000& 0.128& 0.000\\
            & SDCD          & 0.825& 0.750& \underline{0.949}& \underline{0.818}& 0.842& 0.918& \underline{0.931}& \textbf{0.759}& 0.889& 0.833& 0.795& \textbf{0.915}\\
            & ABCDEFG       & \textbf{1.000}& \textbf{1.000}& \textbf{1.000}& 0.806& \textbf{1.000}& \textbf{1.000}& \textbf{0.964}& \underline{0.500}& 0.800& \underline{0.949}& \textbf{0.842}& \underline{0.857}\\
            & ABCDEFG       & \textbf{1.000}& 0.828& 0.824& \textbf{0.912}& 0.753& \underline{0.983}& 0.675& 0.333& 0.690& 0.718& \underline{0.805}& 0.683\\
            & (SPN)         & & & & & & & & & & & &\\
\bottomrule
\end{tabular}
\end{sc}
\end{tiny}
\end{center}
\vskip -0.1in
\end{table}

\begin{table}[t!]
\caption{F1 score and SHD of Scored methods on Nonlinear Targeted Simulated Datasets with 200 and 500 nodes.}
\label{table:Scored based methods F1 and SHD targeted with 200 and 500 nodes} 
\vskip 0.15in
\begin{center}
\begin{small}
\begin{sc}
\begin{tabular}{llcccc}
\toprule
Metric & Method & Hard & Soft & SPN & SPN\\
&& Intvn & Intvn & Hard & Soft\\
\midrule
F1 & DCDFG   &   $0.10\pm0.03$ &   $0.13\pm0.03$   &  $0.06\pm0.04$     &   $0.40\pm0.26$   \\
(200 nodes)&SDCD    &  \underline{$0.50\pm0.08$} &   \underline{$0.43\pm0.06$}   &  $0.16\pm0.01$  &   $0.14\pm0.01$    \\
&ABCDEFG &   \boldmath$0.52\pm0.13$ &   \boldmath$0.49\pm0.07$   &  $\underline{0.61\pm0.05}$     &   \underline{$0.62\pm0.04$} \\
&ABCDEFG &{$0.48\pm0.13$} &   $0.42\pm0.05$   &  \boldmath$0.62\pm0.04$     &   \boldmath$0.62\pm0.03$ \\
&(SPN)\\
\midrule
SHD &DCDFG   &   $7678\pm1846$ &   $4954\pm1147$   &  $13901\pm272$     &   $10892\pm1959$   \\
(200 nodes)&SDCD    &   \boldmath$517\pm38$ &   $592\pm16$   &  $13634\pm368$  &   $13854\pm516$    \\
&ABCDEFG &   \underline{$657\pm401$} &   \underline{$583\pm285$}   &  \underline{$8978\pm966$}     &   \boldmath$8739\pm743$ \\
&ABCDEFG  &$770\pm525$ &   \boldmath$583\pm169$   &  \boldmath$8950\pm697$     &   \underline{$8854\pm565$} \\
&(SPN)\\
\midrule
F1 & DCDFG   &   $0.11\pm0.10$ &   $0.06\pm0.00$   &  $0.07\pm0.04$     &   $0.17\pm0.10$   \\
(500 nodes)&SDCD    &   $0.34\pm0.01$ &   $0.32\pm0.00$   &  $0.10\pm0.01$  &   $0.09\pm0.01$    \\
&ABCDEFG &   \boldmath$0.56\pm0.00$ &   \boldmath$0.55\pm0.03$   &  \underline{$0.49\pm0.05$}     &   \underline{$0.52\pm0.06$} \\
&ABCDEFG & \underline{$0.48\pm0.01$} &   \underline{$0.50\pm0.04$}   &  \boldmath$0.54\pm0.04$     &   \boldmath$0.56\pm0.05$ \\
&(SPN)\\
\midrule
SHD &DCDFG   &   $22507\pm17922$ &   $25849\pm2023$   &  $105723\pm2134$     &   $99553\pm6921$   \\
(500 nodes)&SDCD    &   $1777\pm180$ &   $1849\pm134$   &  $105352\pm518$  &   $105834\pm508$    \\
&ABCDEFG &   \boldmath$1262\pm25$ &   \boldmath$1228\pm94$   &  \underline{$72836\pm6061$}     &   \underline{$69401\pm7637$} \\
&ABCDEFG  & \underline{$1562\pm50$} &   \underline{$1340\pm132$}   &  \boldmath$67007\pm4940$     &   \boldmath$64650\pm5395$ \\
&(SPN)\\
\bottomrule
\end{tabular}
\end{sc}
\end{small}
\end{center}
\vskip -0.1in
\end{table}

\begin{table}[t!]
\caption{F1 score and SHD of Scored methods on Nonlinear Targeted Simulated Datasets on dense graphs with hard interventions.}
\label{table:Scored based methods F1 and SHD targeted  on dense graphs} 
\vskip 0.15in
\begin{center}
\begin{small}
\begin{sc}
\begin{tabular}{llcc}
\toprule
Metric & Method & non linear & non linear SPN \\
\midrule
F1 & DCDI   &   $0.47\pm0.04$ &   $0.34\pm0.05$\\
(100 nodes)&DCDFG   &   $0.25\pm0.03$ &   $0.11\pm0.06$\\
&ENCO   &   $0.04\pm0.00$ &   $0.12\pm0.06$\\
&SDCD    &  \boldmath$0.66\pm0.06$ &   $0.35\pm0.06$\\
&ABCDEFG &   \underline{$0.53\pm0.07$} &   \boldmath$0.69\pm0.04$   \\
&ABCDEFG &{$0.43\pm0.05$} &   \underline{$0.65\pm0.01$}     \\
&(SPN)\\
\midrule
SHD   & DCDI   &   $475\pm81$ &   $3882\pm135$\\
(100 nodes)&DCDFG   &   $1464\pm910$ &   $3866\pm98$\\
&ENCO   &   $2185\pm223$ &   $3982\pm228$\\
&SDCD    &   \boldmath$245\pm30$ &   $3283\pm148$   \\
&ABCDEFG &   \underline{$567\pm92$} &   \boldmath$1909\pm182$  \\
&ABCDEFG  &$770\pm525$ &   \underline{$2132\pm59$}    \\
&(SPN)\\
\midrule
F1 & DCDFG   &   $0.18\pm0.06$ &   $0.11\pm0.02$  \\
(200 nodes)&SDCD    &   $0.42\pm0.05$ &  $0.18\pm0.02$\\
&ABCDEFG &   \boldmath$0.56\pm0.08$ &   \underline{$0.59\pm0.03$}  \\
&ABCDEFG & \underline{$0.49\pm0.06$} &   \boldmath$0.60\pm0.01$  \\
&(SPN)\\
\midrule
SHD &DCDFG   &   $6933\pm2978$ &   $17035\pm257$ \\
(200 nodes)&SDCD    &   \boldmath$885\pm168$ &   $16575\pm142$\\
&ABCDEFG &   \underline{$932\pm408$} &  \underline{$9817\pm600$} \\
&ABCDEFG  & $1130\pm445$ &   \boldmath$9649\pm104$  \\
&(SPN)\\
\midrule
F1 & DCDFG   &   $0.09\pm0.07$ &   $0.03\pm0.03$  \\
(500 nodes)&SDCD    &   \underline{$0.26\pm0.01$} &  $0.10\pm0.00$\\
&ABCDEFG &   \boldmath$0.33\pm0.05$ &   \underline{$0.45\pm0.04$}  \\
&ABCDEFG & $0.25\pm0.06$ &   \boldmath$0.47\pm0.03$  \\
&(SPN)\\
\midrule
SHD &DCDFG   &   \underline{$4298\pm131$} &   $118625\pm1433$ \\
(500 nodes)&SDCD    &   \boldmath$4294\pm191$ &   $114183\pm421$\\
&ABCDEFG &   $6598\pm1804$ &  \underline{$80315\pm5300$} \\
&ABCDEFG  & $8662\pm3101$ &   \boldmath$77089\pm3660$  \\
&(SPN)\\
\bottomrule
\end{tabular}
\end{sc}
\end{small}
\end{center}
\vskip -0.1in
\end{table}

\begin{table}[t!]
\caption{SID of Bayesian methods on Non linear Simulated Datasets with 16 Nodes.}
\label{table:SID Bayesian based comparison}
\vskip 0.15in
\begin{center}
\begin{small}
\begin{sc}
\begin{tabular}{llcccc}
\toprule
Metric&Method & NON LINEAR& NON LINEAR SPN\\
\midrule
SID & DECI   &   $65.76\pm34.86$ &   $109.32\pm18.75$\\
&VI-DP-DAG   &   $83.52\pm35.83$ &   $92.19\pm7.59$ \\
&ProDAG    &   $62.01\pm25.20$ &   $90.1\pm21.78$  \\
&ABCDEFG   &   \boldmath$41.93\pm27.27$ &   \boldmath$64.33\pm15.75$  \\
&ABCDEFG & $50.72\pm27.49$ & $70.85\pm24.81$ \\
&(SPN)\\
\bottomrule
\end{tabular}
\end{sc}
\end{small}
\end{center}
\vskip -0.1in
\end{table}

\begin{table}[t!]
\caption{SID of score-based methods on Non linear Simulated Datasets with 100 Nodes.}
\label{table:SID score-based comparison}
\vskip 0.15in
\begin{center}
\begin{small}
\begin{sc}
\begin{tabular}{llcccc}
\toprule
Metric&Method & HARD & SOFT & SPN & SPN \\
&& INTVN & INTVN & HARD & SOFT\\
\midrule
SID & DCDFG   &   $1839\pm308$ &   $1595\pm1418$   &  $5860\pm1662$     &   $6976\pm346$  \\
&ENCO   &   $3668\pm864$ &   $3722\pm945$   &  $8805\pm221$     &   $8899\pm200$   \\
&SDCD    &   $2189\pm616$ &   $2224\pm1009$   &  $6843\pm504$     &   $6858\pm381$     \\
&ABCDEFG   &   $1005\pm327$ &   $771\pm438$   &  \boldmath$4615\pm681$  &   \boldmath$4710\pm694$    \\
&ABCDEFG & \boldmath$809\pm510$ &   \boldmath$671\pm414$   &  $4724\pm595$     &   $4783\pm594$ \\
&(SPN)\\
\bottomrule
\end{tabular}
\end{sc}
\end{small}
\end{center}
\vskip -0.1in
\end{table}
\subsection{Availability of Benchmark Results}
\begin{figure}[h!]
    \centering
    \includegraphics[width=1\linewidth]{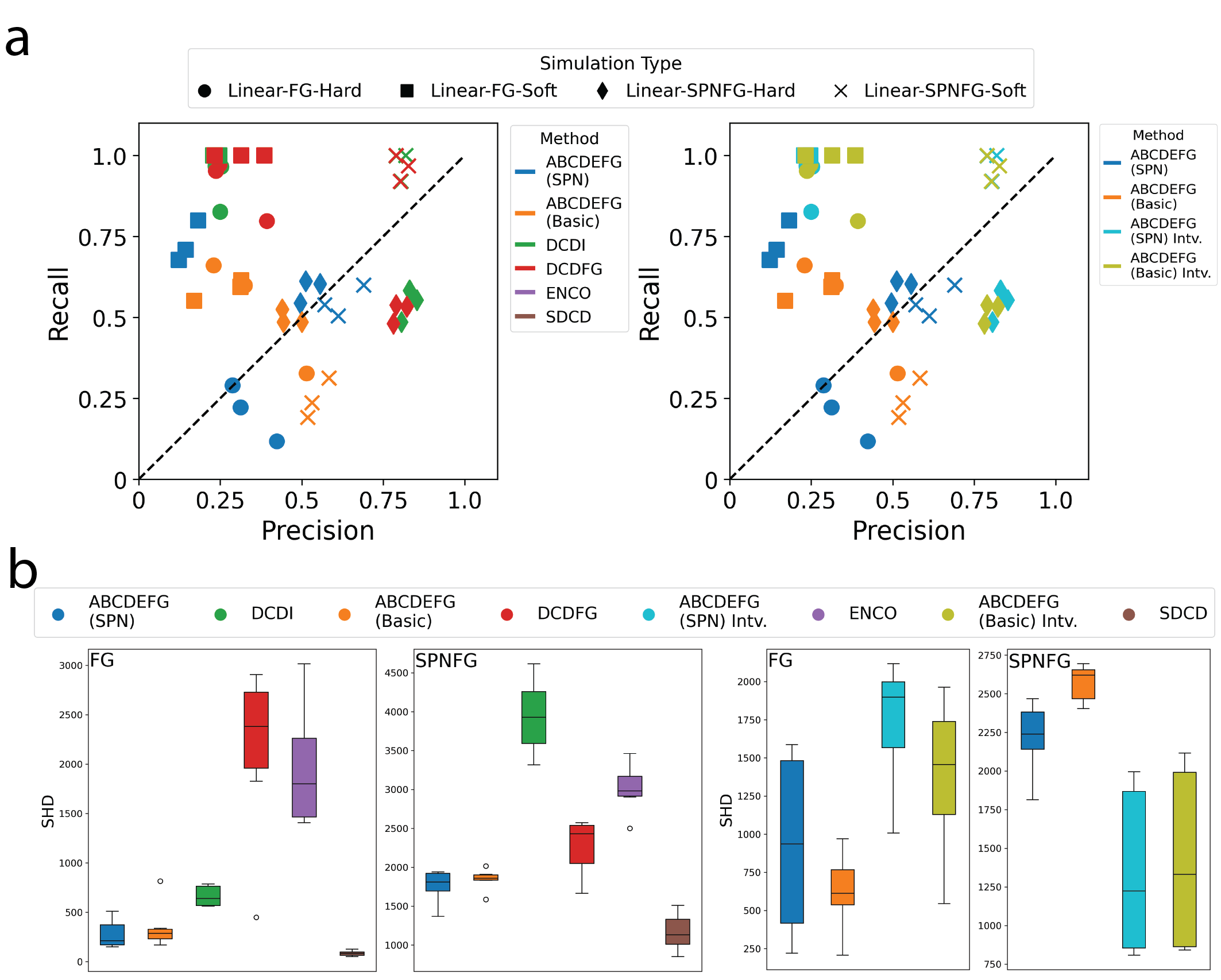}
    \caption{\textbf{Benchmarking of score-based methods on linear datasets.} \textbf{(a)} Precision and recall for different score based methods, dataset types are shown in different shapes.\textbf{(b)} SHD comparison between different score based methods on targeted datasets(left two), and untargeted datasets (right two).}
    \label{supp:fig:linear_bechmark}
\end{figure}

\begin{figure}[h!]
    \centering
    \includegraphics[width=1\linewidth]{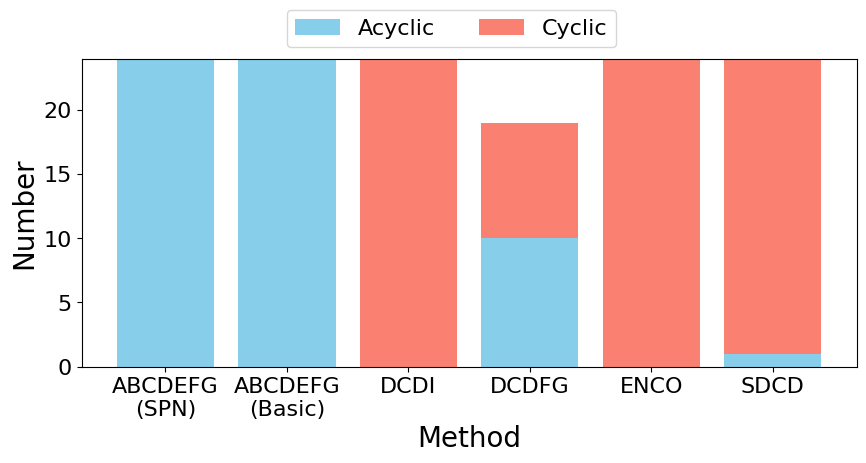}
    \caption{\textbf{Comparison of number of acyclic and cyclic graphs.}}
    \label{supp:fig:num_acyc_bar}
\end{figure}

\begin{figure}[h!]
    \centering
    \includegraphics[width=1\linewidth]{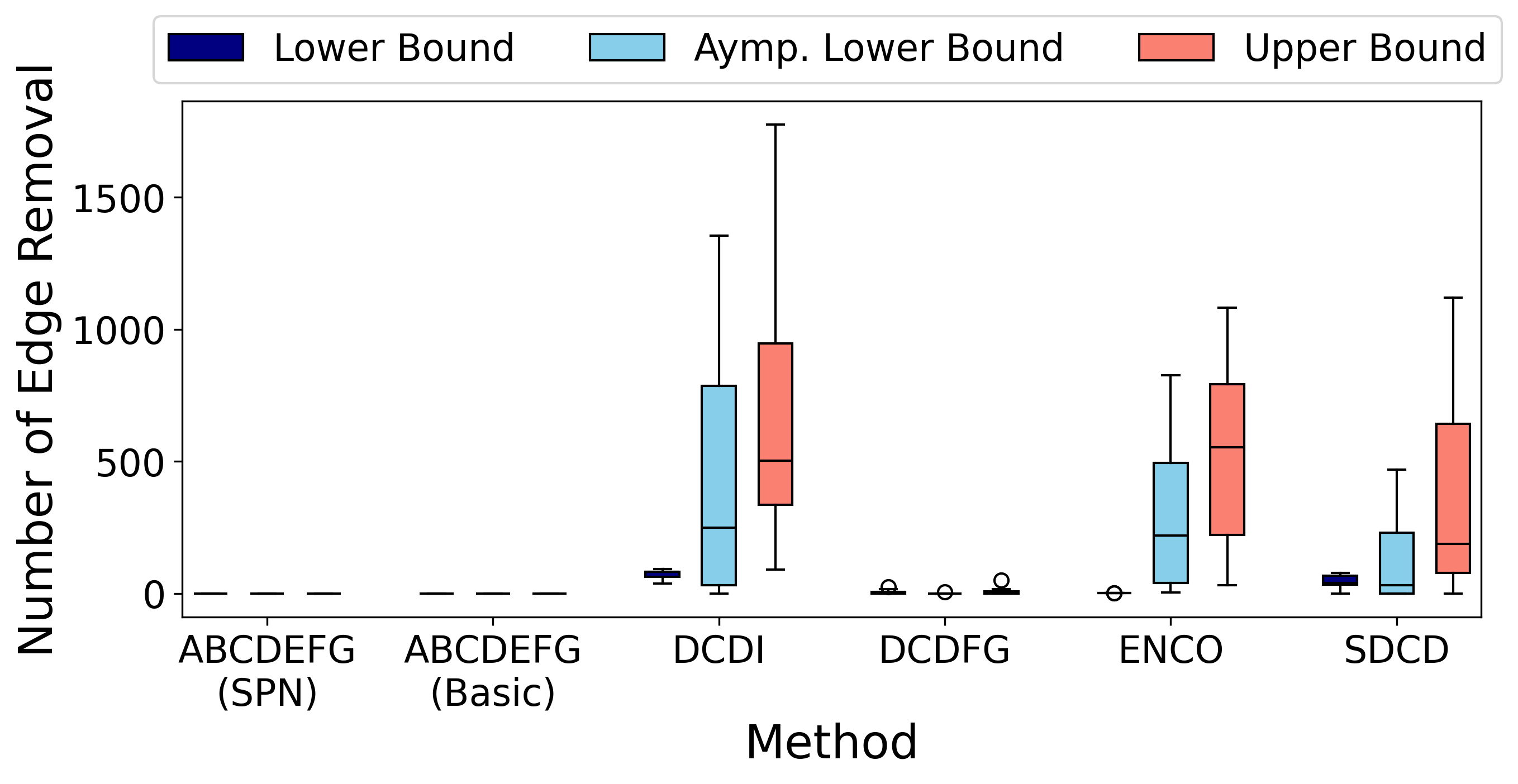}
    \caption{\textbf{Comparison of to be removed number of edges for acyclic graphs.} Upper and lower bound of number of to be removed edges are colored in blue and red, respectively.}
    \label{supp:fig:dag_edge}
\end{figure}

We conducted benchmark studies on a variety of data simulation settings at a larger scale, with 100 nodes and 10 factors. We classify the simulations by (1) SEM - linear vs. nonlinear, (2) factor graph model - SPN-FG vs. regular f-DAG and (3) type of intervention (hard vs. soft). We included all results as csv files in our supplementary material. Each csv file records a metric (precision, recall, f1, SHD) for all methods run on one type of simulation. The tables summarized in Table \ref{table:Scored based methods F1 and SHD targeted} and Table \ref{table:F1 and SHD for ABCDEFG on untargeted} show the mean $\pm$ standard deviation for each dataset type, based on the corresponding experimental results. Moreover, the benchmarking results for score-based methods on linear datasets are presented in Fig. \ref{supp:fig:linear_bechmark}, as discussed in the main text. In addition, as proof that our model can construct acyclic graphs by design, we calculated the number of cycles when compared with score-based methods (Fig. \ref{supp:fig:num_acyc_bar}), as well as the number of edges that would need to be removed to obtain an acyclic graph (Fig. \ref{supp:fig:dag_edge}). Both results suggest that the graphs predicted by our model are naturally acyclic.

\subsection{Experiment Settings}\label{supp:hparam_setting}
In this section, we report the hyperparameters used in our simulation study. Because ABCDEFG has many hyperparameters, we did not comprehensively tune each of them. Instead, we fixed hyperparameters across the same SEM model type. Here, we report some key hyperparameter values. For the other hyperparameters, our python program contains default values and we used the same value in all experiments. Table \ref{supp:table:default_hparam} summarizes the most important hyperparameters. In addition, we unexhaustively tuned the L1 regularization coefficient by trying two different values per simulation type. We also have a separate L1 regularization coefficient for the intervention-to-node bipartite graph in simulation with unknown intervention targets. 


Table \ref{supp:table:hparam} lists the set of best parameters we chose for each simulation type.  For conciseness, we name a simulation type by a sequence of four attributes: (1) targeted (T) vs. untargeted (U), (2) canonical f-DAG (FG) vs. SPN-FG (SPNFG), (3) linear (L) vs nonlinear (N) SEM, and (4) hard (H) vs. soft (S) intervention, separated by ``-". 

\begin{table}[h!]
\caption{Default Hyper-Parameter Setting of ABCDEFG in a Simulation Study.}
\label{supp:table:default_hparam}
\vskip 0.15in
\begin{center}
\begin{small}
\begin{sc}
\begin{tabular}{lccc}
\toprule
Parameter Name & Default Value \\
\midrule
Batch Size & 128\\
Hidden Dimension & 1000\\
Number of Epochs & 1000 \\
Number of Hidden Layers & 1\\
Width Bound of SPN (max\_copies) & 8\\
Learning Rate (VAE) & $5\times 10^{-4}$\\
Learning Rate (f-DAG Model) & $5\times 10^{-3}$\\ 
KL Div. Coeff. ($\beta$) & $1\times 10^{-8}$\\
Gaussian Noise Level & 0.05\\
VAE Weight L2 Reg. & $1\times 10^{-3}$\\
Latent Factor Prior & $\mathcal{N}(\boldsymbol{0}, 10^{-3}\cdot \mathbf{I})$\\
\bottomrule
\end{tabular}
\end{sc}
\end{small}
\end{center}
\vskip -0.1in
\end{table}

\begin{table}[t!]
\caption{Hyper-Parameter Setting of ABCDEFG in a Simulation Study.}
\label{supp:table:hparam}
\vskip 0.15in
\begin{center}
\begin{small}
\begin{sc}
\begin{tabular}{lccccc}
\toprule
Simulation Type & L1 Reg. & L1 Reg. (Intv.) & Activation Function &  SPN Parallelism\\
\midrule
T-FG-L-H        & 0.1, 0.1   & N/A       & Identity& Node\\
T-FG-L-S        & 0.01, 0.01 & N/A       & Identity& Factor\\
T-FG-N-H        & 1.0, 1.0   & N/A       & Tanh    & Factor\\
T-FG-N-S        & 0.01, 0.001& N/A       & Tanh    & Node\\
T-SPNFG-L-H     & 0.01, 0.01 & N/A       & Identity& Node\\
T-SPNFG-L-S     & 1e-4, 1e-4 & N/A       & Identity& Node\\
T-SPNFG-N-H     & 0.01, 0.01 & N/A       & Tanh    & Node\\
T-SPNFG-N-S     & 1e-4, 1e-4 & N/A       & Tanh    & Factor\\
U-FG-L-H        & 0.01, 0.01 & 10.0, 10.0& Identity& Node\\
U-FG-L-S        & 1e-4, 1e-4 & 10.0, 10.0& Identity& Node\\
U-FG-N-H        & 0.01, 0.01 & 10.0, 10.0& Tanh    & Node\\
U-FG-N-S        & 1e-4, 1e-4 & 10.0, 10.0& Tanh    & Node\\
U-SPNFG-L-H     & 1e-6, 1e-6 & 0.1, 0.1  & Identity& Factor\\
U-SPNFG-L-S     & 1e-7, 1e-7 & 1.0, 1.0  & Identity& Node\\
U-SPNFG-N-H     & 1e-6, 1e-6 & 0.1, 0.1  & Tanh    & Factor\\
U-SPNFG-N-S     & 1e-8, 1e-7 & 1.0, 1.0  & Tanh    & Node\\
\bottomrule
\end{tabular}
\end{sc}
\end{small}
\end{center}
\vskip -0.1in
\end{table}

\subsection{Time and Memory Consumption}

\begin{table}[t!]
\caption{Time usage on Simulated Datasets with 16 Nodes.}
\label{supp:table:Time_Bayesian}
\vskip 0.15in
\begin{center}
\begin{small}
\begin{sc}
\begin{tabular}{lcccc}
\toprule
Method & LINEAR & LINEAR  & NONLINEAR& NONLINEAR   \\
&FG&SPNFG&FG&SPNFG\\

\midrule
BaCaDi    &   $1704.56\pm33.70$ &   $1405.64\pm277.53$   &  $1265.71\pm35.65$     &   $1435.08\pm20.63$  \\
DECI   &   $987.81\pm5.72$ &   $985.27\pm1.64$   &  $994.50\pm0.91$     &   $991.40\pm0.30$   \\
VI-DP-DAG    &   $764.77\pm255.60$ &   $245.63\pm132.17$   &  $501.51\pm328.49$     &   $302.64\pm180.52$     \\
ProDAG    &   $79.37\pm0.76$ &   $79.99\pm2.34$   &  N/A     &   N/A   \\
ABCDEFG &   $82.60\pm24.36$ &   $65.95\pm24.61$   &  $70.51\pm34.89$     &   $106.71\pm68.05$ \\
ABCDEFG (SPN) &$138.64\pm37.63$ &   $67.63\pm32.89$   &  $136.17\pm52.82$     &   $177.71\pm154.83$ \\
\bottomrule
\end{tabular}
\end{sc}
\end{small}
\end{center}
\vskip -0.1in
\end{table}

\begin{figure}[h!]
    \centering
    \includegraphics[width=1\linewidth]{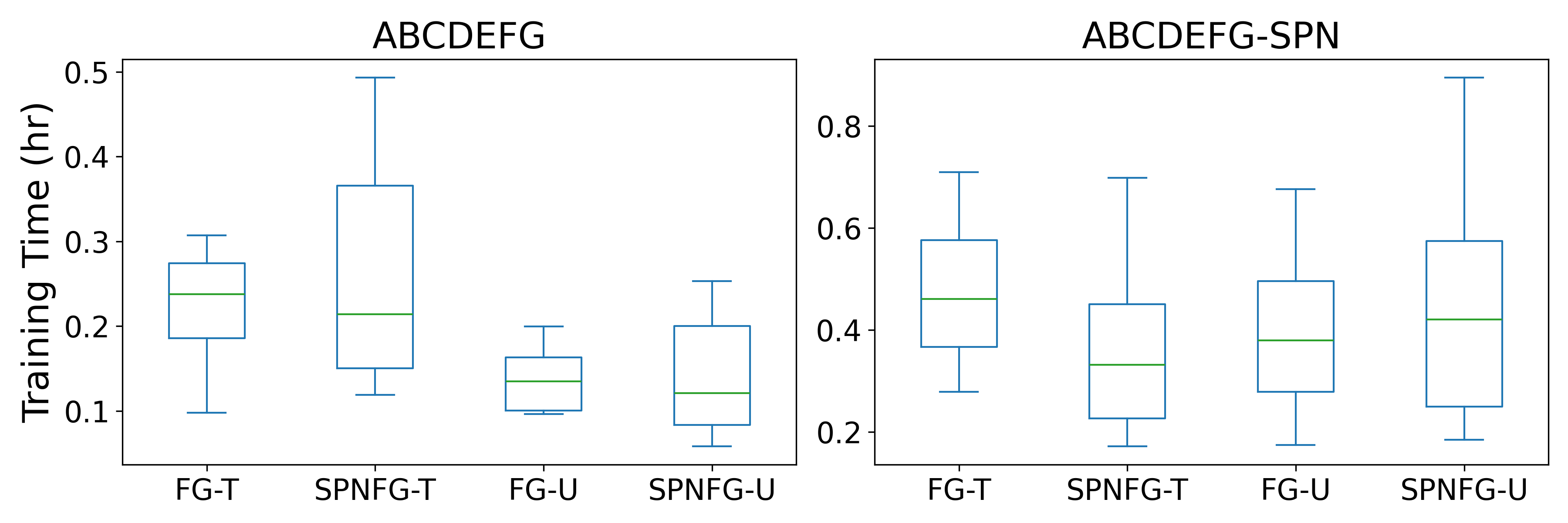}
    \caption{\textbf{Training Time of ABCDEFG.} Each box represents one type of simulation. We group simulation regarding the ground truth graph type and known vs. unknown intervention targets. We use the suffix ``-T" for known intervention targets and ``-U" for unknown ones.}
    \label{supp:fig:run_time}
\end{figure}

All simulated datasets with known intervention targets contain 25k samples and those with unknown intervention targets contain 30k samples. With a batch size of 128, we were able to train our model on a server with 2 2x 2.9 GHz Intel Xeon Gold 6226R, 16 GB of RAM and an NVIDIA A40 GPU with 48GB of memory. The training time of ABCDEFG is shown in Fig. \ref{supp:fig:run_time}. Since the datasets are of similar sizes, the training time is stable across different simulations. Training ABCDEFG with SPN-FG consumes more time due to a larger number of parameters and extra time for forward and backward through the network layers. The benchmarking of Bayesian methods was conducted on datasets with 16 nodes. The training times for the different methods are shown in Table~\ref{supp:table:Time_Bayesian}. All methods, except BaCaDi, were run on an NVIDIA A40 GPU with 16GB of RAM. (No GPU implementation was available for BaCaDi.)

\section{Preprocessing single cell perturbation data}

The data used for single cell perturbation is downloaded from \citet{AMIN20241831} and we followed the preprocessing steps described by \citet{Lopez2022largescale}. For each untargeted perturbation, we removed the description words like 'high','low','early',eta, and only retain the name of each biomolecule as the perturbation. We used scanpy to select the top 1000 highly variable genes as input of our model, and used 10 factors. We performed gene ontology analysis using the online tool at \href{https://geneontology.org/}{the Gene Ontology Website}.

\bibliography{sample}

\end{document}